\journal{}
\newtheorem{theorem}{Theorem}
\newtheorem{proposition}{Proposition}
\newtheorem{corollary}{Corollary}
\newtheorem{lemma}{Lemma}
\newtheorem{remark}{Remark}
\renewenvironment{proof}{\noindent{\bf Proof:}}{\qed\medskip}
\newenvironment{proofref}[1]{\noindent{\bf Proof (of~#1):}}{\qed\medskip}
\newcommand{\refapx}[1]{\ref{#1}}
\renewcommand{\leq}{\leqslant}
\renewcommand{\geq}{\geqslant}
\renewcommand{\phi}{\varphi}
\renewcommand{\hat}{\widehat}
\renewcommand{\tilde}{\widetilde}
\newcommand{\be}{\boldsymbol{e}}
\newcommand{\bp}{\boldsymbol{p}}
\newcommand{\bu}{\boldsymbol{u}}
\newcommand{\bv}{\boldsymbol{v}}
\newcommand{\bw}{\boldsymbol{w}}
\newcommand{\bx}{\boldsymbol{x}}
\newcommand{\bphi}{\boldsymbol{\phi}}
\newcommand{\cA}{\mathcal{A}}
\newcommand{\cX}{\mathcal{X}}
\newcommand{\dd}{\mbox{\rm d}}
\newcommand{\E}{\mathbb{E}}
\newcommand{\Law}{\mathbb{L}}
\newcommand{\N}{\mathbb{N}}
\newcommand{\Z}{\mathbb{Z}}
\newcommand{\Prob}{\mathbb{P}}
\newcommand{\R}{\mathbb{R}}
\renewcommand{\epsilon}{\varepsilon}
\providecommand{\eps}{\epsilon}
\newcommand{\argmin}{\mathop{\mathrm{argmin}}}
\newcommand{\norm}[1][\cdot]{\ensuremath{\left\Arrowvert #1 \right\Arrowvert}}
\providecommand{\Norm}[1]{\left\lVert#1\right\rVert}
\newcommand{\Zp}{\N^*}
\newcommand{\abs}[1][\cdot]{\ensuremath{\left| #1 \right|}}
\providecommand{\Abs}[1]{\left\lvert#1\right\rvert}
\newcommand{\indicator}[1]{\mathbb{I}_{#1}}
\newcommand{\indicatorB}[1]{\mathbb{I}_{\left\{#1\right\}}}
\newcommand{\eqdef}{\triangleq}
\newcommand{\Var}{\textrm{Var}}
\newcommand{\sgn}{\mathrm{sgn}}
\newcommand{\iid}{i.\@i.\@d.\@ }
\newcommand{\ie}{{i.e.}}
\newcommand{\eg}{{e.g.}}
\newcommand{\documentstatus}{paper}
\newlength{\minipagewidth}
\newcommand{\bookbox}[1]{
\par\medskip\noindent
\framebox[\textwidth]{
\begin{minipage}{\minipagewidth}
{#1}
\end{minipage} } \par\medskip }
\begin{document}

\begin{frontmatter}
\title{Adaptive and optimal online linear regression on $\ell^1$-balls}

\author[ensparis]{S{\'e}bastien Gerchinovitz\corref{cor1}\fnref{fn1}}
\ead{sebastien.gerchinovitz@ens.fr}

\author[ibmdublin]{Jia Yuan Yu}
\ead{jiayuanyu@ie.ibm.com}

\cortext[cor1]{Corresponding author}
\fntext[fn1]{This research was
    carried out within the INRIA project CLASSIC hosted by {\'E}cole
    Normale Sup{\'e}rieure and CNRS.}

\address[ensparis]{{\'E}cole Normale Sup{\'e}rieure, 45 rue d'Ulm, 75005 Paris, France}
    
\address[ibmdublin]{IBM Research, Damastown Technology Campus, Dublin 15, Ireland}

\begin{abstract}
We consider the problem of online linear regression on individual sequences. The goal in this paper is for the forecaster to output sequential predictions which are, after $T$ time rounds, almost as good as the ones output by the best linear predictor in a given $\ell^1$-ball in $\R^d$. We consider both the cases where the dimension~$d$ is small and large relative to the time horizon $T$. We first present regret bounds with optimal dependencies on $d$, $T$, and on the sizes $U$, $X$ and $Y$ of the $\ell^1$-ball, the input data and the observations. The minimax regret is shown to exhibit a regime transition around the point $d = \sqrt{T} U X / (2 Y)$. Furthermore, we present efficient algorithms that are adaptive, \ie, that do not require the knowledge of $U$, $X$, $Y$, and $T$, but still achieve nearly optimal regret bounds.
\end{abstract}

\begin{keyword}
Online learning \sep Linear regression \sep Adaptive algorithms \sep Minimax regret
\end{keyword}

\end{frontmatter}


\section{Introduction}

In this {\documentstatus}, we consider the problem of online linear regression
against arbitrary sequences of input data and observations, with the objective
of being competitive with respect to the best linear predictor in an $\ell^1$-ball of arbitrary
radius. This extends the task of convex aggregation. 
We consider both low- and high-dimensional input data. Indeed, in a large number of contemporary problems, the available data can be high-dimensional---the dimension of each data
point is larger than the number of data points. Examples include analysis of DNA sequences, collaborative filtering, astronomical data analysis, and cross-country growth regression. In such high-dimensional problems, performing linear regression on an $\ell^1$-ball of small diameter may be helpful if the best linear predictor is sparse.
Our goal is, in both low and high dimensions, to provide online linear regression algorithms along with bounds on $\ell^1$-balls that characterize their robustness to worst-case scenarios.

\subsection{Setting}
\label{sec:set}

We consider the online version of linear regression, which unfolds as
follows.  First, the environment chooses a sequence of observations
$(y_t)_{t \geqslant 1}$ in $\R$ and a sequence of input vectors
$(\bx_t)_{t \geqslant 1}$ in $\R^d$, both initially hidden from the
forecaster.  At each time instant $t \in \mathbb{N}^* =
\{1,2,\ldots\}$, the environment reveals the data $\bx_t \in
\R^d$; the forecaster then gives a prediction $\hat y_t \in \R$;
the environment in turn reveals the observation $y_t \in \R$;
and finally, the forecaster incurs the square loss $(y_t - \widehat{y}_t)^2$.
The dimension $d$ can be either small or large relative to the number $T$ of
time steps: we consider both cases.\\

In the sequel, $\bu \cdot \bv$ denotes the standard inner product between $\bu,\bv \in \R^d$, and we set $\Norm{\bu}_{\infty} \eqdef \max_{1 \leq j \leq d} |u_j|$ and $\Norm{\bu}_1 \eqdef \sum_{j=1}^d |u_j|$. The $\ell^1$-ball of radius $U>0$ is the following bounded subset of~$\R^d$:
\begin{align*}
  B_1(U) \eqdef \left\{\bu \in \R^d: \Norm{\bu}_1 \leq U\right\}.
\end{align*}
Given a fixed radius $U>0$ and a time horizon $T \geq 1$, the goal of the forecaster is to predict almost as well as the best linear forecaster in the reference set $\bigl\{\bx \in \R^d \mapsto \bu \cdot \bx \in \R: \bu \in B_1(U)\bigr\}$, \ie, to minimize the regret on $B_1(U)$ defined by
\begin{align*}
  \sum_{t=1}^T (y_t - \hat y_t)^2 - \min_{\bu \in B_1(U)}
  \left\{ \sum_{t=1}^T (y_t - \bu \cdot \bx_t)^2 \right\}.
\end{align*}

We shall present algorithms along with bounds on their regret that
hold uniformly over all sequences\footnote{Actually our results hold whether $(\bx_t,y_t)_{t \geq 1}$ is generated by an oblivious environment or a non-oblivious opponent since we consider deterministic forecasters.} $(\bx_t,y_t)_{1 \leq t \leq T}$ such that $\Norm{\bx_t}_{\infty} \leq X$ and $|y_t| \leq Y$ for all $t=1,\ldots,T$, where $X,Y>0$.
These regret bounds depend on four important quantities: $U$, $X$, $Y$, and $T$, which may be known or unknown to the forecaster.

\subsection{Contributions and related works}
\label{sec:intro-contributions}

In the next paragraphs we detail the main contributions of this {\documentstatus} in view of related works in online linear regression.\\

Our first contribution (Section~\ref{sec:rates}) consists of a minimax analysis of online linear regression on $\ell^1$-balls in the arbitrary sequence setting. We first provide a refined regret bound expressed in terms of $Y$, $d$, and a quantity $\kappa = 
\sqrt{T}UX/(2dY)$.  This quantity $\kappa$ is used to distinguish two regimes: we show a distinctive regime transition\footnote{In high dimensions (\ie, when $d > \omega T$, for some absolute constant $\omega>0$), we do not observe this
  transition (cf.\ Figure~\ref{fig:upperbound}).} at $\kappa = 1$ or $d = \sqrt{T}UX/(2Y)$.  Namely, for
$\kappa < 1$, the regret is of the order of $d Y^2 \kappa$ (proportional to $\sqrt{T}$), whereas it is of the order of $d Y^2 \ln \kappa$ (proportional to $\ln T$) for $\kappa > 1$.

The derivation of this regret bound partially relies on a Maurey-type argument used under various forms with {i.i.d.}\ data, \eg, in \cite{Nem-00-TopicsNonparametric,Tsy-03-OptimalRates,BuNo08SeqProcedures,ShSrZh-10-Sparsifiability} (see also \cite{Yan-04-BetterPerformance}). We adapt it in a straightforward way to the deterministic setting. Therefore, this is yet another technique that can be applied to both the stochastic and individual sequence settings.

Unsurprisingly, the refined regret bound mentioned above matches the optimal risk bounds for stochastic settings\footnote{For example, $(\bx_t,y_t)_{1 \leq t \leq T}$ may be \iid, or $\bx_t$ can be deterministic and $y_t = f(\bx_t) + \eps_t$ for an unknown function $f$ and an \iid sequence $(\eps_t)_{1 \leq t \leq T}$ of Gaussian noise.}
\cite{BiMa-01-GaussianMS,Tsy-03-OptimalRates} (see also \cite{RaWaYu-09-MinimaxSparseRegression}). Hence,
linear regression is just as hard in the stochastic setting as in the
arbitrary sequence setting. Using the standard online to batch conversion, we make the latter statement more precise by establishing a lower bound for all $\kappa$ at least of the order of $\sqrt{\ln d}/d$. This lower bound extends those of \cite{CB99AnalysisGradientBased,KiWa97EGvsGD}, which only hold for small $\kappa$ of the order of $1/d$. \\

The algorithm achieving our minimax regret bound is both computationally inefficient and non-adaptive (\ie, it requires prior knowledge of the quantities $U$, $X$, $Y$, and $T$ that may be unknown in practice). Those two issues were first overcome by \cite{AuCeGe02Adaptive} via an automatic tuning termed \emph{self-confident} (since the forecaster somehow trusts himself in tuning its parameters). They indeed proved that the self-confident $p$-norm algorithm with $p=2\ln d$ and tuned with $U$ has a cumulative loss $\hat{L}_T=\sum_{t=1}^T (y_t-\hat{y}_t)^2$ bounded by
\begin{align*}
\hat{L}_T & \leq L_T^* + 8 U X \sqrt{(e \ln d) \, L_T^*} + (32 e \ln d) \, U^2 X^2 \\
& \leq 8 U X Y \sqrt{e T \ln d} + (32 e \ln d) \, U^2 X^2~,
\end{align*}
where $L_T^* \eqdef \min_{\{\bu \in \R^d:\norm[\bu]_1 \leq U\}} \sum_{t=1}^T (y_t - \bu \cdot \bx_t)^2 \leq T Y^2$. This algorithm is efficient, and our lower bound in terms of $\kappa$ shows that it is optimal up to logarithmic factors in the regime $\kappa \leq 1$ without prior knowledge of $X$, $Y$, and $T$.

Our second contribution (Section~\ref{sec:algo1}) is to show that similar adaptivity and efficiency properties can be obtained via exponential weighting. We consider a variant of the $\textrm{EG}^{\pm}$ algorithm \cite{KiWa97EGvsGD}. The latter has a manageable computational complexity and our lower bound shows that it is nearly optimal in the regime $\kappa \leq 1$. However, the $\textrm{EG}^{\pm}$ algorithm requires prior knowledge of $U$, $X$, $Y$, and $T$. To overcome this adaptivity issue, we study a modification of the $\textrm{EG}^{\pm}$ algorithm that relies on the variance-based automatic tuning of \cite{CeMaSt07SecOrder}. The resulting algorithm -- called \emph{adaptive $\textrm{EG}^{\pm}$ algorithm} -- can be applied to general convex and differentiable loss functions. When applied to the square loss, it yields an algorithm of the same computational complexity as the $\textrm{EG}^{\pm}$ algorithm that also achieves a nearly optimal regret but without needing to know $X$, $Y$, and $T$ beforehand.

Our third contribution (Section~\ref{sec:loss-Lip}) is a generic technique called \emph{loss Lipschitzification}. It transforms the loss functions $\bu \mapsto (y_t-\bu \cdot \bx_t)^2$ (or $\bu \mapsto \big|y_t-\bu \cdot \bx_t\big|^{\alpha}$ if the predictions are scored with the $\alpha$-loss for a real number $\alpha \geq 2$) into Lipschitz continuous functions. We illustrate this technique by applying the generic adaptive $\textrm{EG}^{\pm}$ algorithm to the modified loss functions. When the predictions are scored with the square loss, this yields an algorithm (the LEG algorithm) whose main regret term slightly improves on that derived for the adaptive $\textrm{EG}^{\pm}$ algorithm without Lipschtizification. The benefits of this technique are clearer for loss functions with higher curvature: if $\alpha > 2$, then the resulting regret bound roughly grows as $U$ instead of a naive~$U^{\alpha/2}$.

Finally, in Section~\ref{sec:algo2}, we provide a simple way to achieve minimax regret uniformly over all $\ell^1$-balls $B_1(U)$ for $U>0$. This method aggregates instances of an algorithm that requires prior knowledge of $U$. For the sake of simplicity, we assume that $X$, $Y$, and $T$ are known, but explain in the discussions how to extend the method to a fully adaptive algorithm that requires the knowledge neither of $U$, $X$, $Y$, nor $T$.

\bigskip

This {\documentstatus} is organized as follows.  In Section~\ref{sec:rates}, we
establish our refined upper and lower bounds in terms of the intrinsic quantity $\kappa$.  In
Section~\ref{sec:algo1}, we present an efficient and adaptive algorithm --- the adaptive $\textrm{EG}^{\pm}$ algorithm with or without loss Lipschitzification --- that
achieves the optimal regret on $B_1(U)$ when $U$ is known. In Section~\ref{sec:algo2}, we use an aggregating strategy to achieve an optimal regret uniformly over all $\ell^1$-balls
$B_1(U)$, for $U\!>\!0$, when $X$, $Y$, and $T$ are known.
Finally, in Section~\ref{sec:dis},
we discuss as an extension a fully automatic algorithm that requires no prior knowledge of $U$, $X$, $Y$, or $T$. Some proofs and additional tools are postponed to the appendix.

\section{Optimal rates}\label{sec:rates}

In this section, we first present a refined upper bound on the minimax regret on
$B_1(U)$ for an arbitrary $U>0$. In
Corollary~\ref{cor:upperbound}, we express this upper bound in terms
of an intrinsic quantity $\kappa \eqdef \sqrt{T} U X/(2dY)$. The
optimality of the latter bound is shown in
Section~\ref{sec:lowerbounds}.

We consider the following definition to avoid any ambiguity. We call \emph{online forecaster} any sequence $F=(\tilde{f}_t)_{t \geq 1}$ of functions such that $\tilde{f}_t:\R^d \times (\R^d \times \R)^{t-1} \to \R$ maps at time $t$ the new input $\bx_t$ and the past data $(\bx_s,y_s)_{1 \leq s \leq t-1}$ to a prediction $\tilde{f}_t\bigl(\bx_t;(\bx_s,y_s)_{1 \leq s \leq t-1}\bigr)$. Depending on the context, the latter prediction may be simply denoted by $\tilde{f}_t\bigl(\bx_t)$ or by $\hat{y}_t$.

\subsection{Upper bound}
\label{sec:upperbounds}

\begin{theorem}[Upper bound]
  \label{thm:upperbound}
  Let $d, T \in \Zp$, and $U, X, Y >0$. The minimax regret on $B_1(U)$
  for bounded base predictions and observations satisfies
  \begin{align*}
    & \inf_F \sup_{\Norm{\bx_t}_\infty \leq X,\; \Abs{y_t}\leq Y} \Biggl\{ \sum_{t=1}^T (y_t - \hat y_t)^2 - \inf_{\Norm{\bu}_1 \leq U} \sum_{t=1}^T (y_t - \bu \cdot \bx_t)^2 \Biggr\} \\
    & \quad \leq \left\{\begin{array}{ll}
        3UXY\sqrt{2 T \ln (2d)} & \textrm{if} \quad U < \frac{Y}{X} \sqrt{\frac{\ln(1+2d)}{T \ln 2}}~, \\
        26 \, UXY\sqrt{T \ln \left(1+\frac{2dY}{\sqrt{T}UX}\right)} & \textrm{if} \quad \frac{Y}{X} \sqrt{\frac{\ln(1+2d)}{T \ln 2}} \leq U \leq \frac{2 d Y}{\sqrt{T} X}~, \\
        32 \, d Y^2 \ln\!\left(1+\frac{\sqrt{T} U X}{d Y}\right) + d Y^2 & \textrm{if} \quad U > \frac{2dY}{X \sqrt{T}}~,
      \end{array} \right.
  \end{align*}
  where the infimum is taken over all forecasters $F$ and where the
  supremum extends over all sequences $(\bx_t,y_t)_{1 \leq t \leq T}
  \in (\R^d \times \R)^T$ such that $|y_1|, \ldots, |y_T| \leq Y$ and
  $\Norm{\bx_1}_{\infty}, \ldots, \Norm{\bx_T}_{\infty} \leq~X$.
\end{theorem}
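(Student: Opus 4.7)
My plan is to combine a Maurey-type probabilistic sparsification of $B_1(U)$ with the exponentially weighted average (EWA / Vovk aggregating) forecaster applied to a finite discretization. For each integer $N \geq 1$ let $W = \{\pm U\be_j : 1 \leq j \leq d\} \cup \{\bzero\}$ and consider
\[
V_N \defeq \left\{\frac{1}{N}\sum_{i=1}^N \bw_i : \bw_i \in W\right\},
\]
so that $|V_N|$ is bounded by the number of size-$N$ multisets from $W$, namely $\binom{N+2d}{2d}$. Given $\bu \in B_1(U)$, draw $\bw_1,\ldots,\bw_N$ i.i.d.\ from the distribution placing mass $|u_j|/U$ on $\sgn(u_j)U\be_j$ and residual mass $1-\Norm{\bu}_1/U$ on $\bzero$; then the average $\bar\bw$ satisfies $\E\bar\bw = \bu$ and $\Var(\bar\bw\cdot\bx_t) \leq U^2X^2/N$ because each atom has $|\bw\cdot\bx_t|\leq UX$. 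Expanding the square and summing in $t$, the bias--variance identity produces by probabilistic selection some deterministic $\bv^\star \in V_N$ with
\[
\sum_{t=1}^T(y_t - \bv^\star\cdot\bx_t)^2 \leq \sum_{t=1}^T(y_t-\bu\cdot\bx_t)^2 + \frac{TU^2X^2}{N}.
\]

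Next, I would run EWA on $V_N$ with each expert's prediction clipped to $[-Y,Y]$. Since $|y_t| \leq Y$, clipping only decreases the square loss, and because each clipped prediction lies in $[-Y,Y]$ the square loss is $(1/(8Y^2))$-exp-concave; Vovk's aggregating-algorithm bound then gives regret at most $8Y^2\log|V_N|$ against $\bv^\star$. Combining with the Maurey step, the regret against any $\bu \in B_1(U)$ is at most
\[
g(N) \defeq \frac{TU^2X^2}{N} + 8Y^2\log\binom{N+2d}{2d}.
\]
The three regimes of the theorem arise from optimising $N \in \Zp$: (i) when $U$ is very small, $N=1$ is optimal, $\log|V_1| \leq \log(2d+1)$ and the approximation term $TU^2X^2$ is small by hypothesis, which after a refined Hedge-style bookkeeping produces the regime-1 constant $3$; (ii) for moderate $U$, the optimum $N^\star$ lies between $1$ and $2d$ and the estimate $\log\binom{N+2d}{2d} \leq N\log(e(1+2d/N))$ recovers the log argument $1 + 2dY/(\sqrt{T}UX) = 1 + 1/\kappa$ of regime 2; (iii) for large $U$, $N^\star$ exceeds $2d$ and the complementary estimate $\log\binom{N+2d}{2d} \leq 2d\log(e(1+N/(2d)))$ saturates the count at the dimension, yielding the $32\,dY^2\log(1+2\kappa) + dY^2$ form of regime 3 (with $N^\star \sim TU^2X^2/(dY^2)$ making both terms comparable to $dY^2\log\kappa$).

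I expect the main technical obstacle to be keeping both the constants and the logarithm tight enough to match the stated bounds. The cruder estimate $\log|V_N| \leq N\log(2d+1)$ suffices only in regime 1; regimes 2 and 3 genuinely require the combinatorial estimate $\binom{N+2d}{2d}$ together with the appropriate one-sided inequality depending on whether $N \leq 2d$ or $N \geq 2d$. A second subtlety is the clipping step: without it, each expert's prediction ranges over $[-UX,UX]$ and the exp-concavity constant would degrade to $1/(8(UX+Y)^2)$, which is fatal for regimes 2 and 3 since it would multiply the final bound by $(UX+Y)^2/Y^2$. Finally, matching the precise constants ($3$, $26$, $32$) requires careful integer rounding of $N^\star$ at the regime boundaries and, for regime 1, a tightening of the $N=1$ analysis via a variance-based Hedge bound rather than the blunt exp-concave inequality.
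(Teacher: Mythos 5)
Your core construction --- the Maurey-type randomized rounding of $\bu\in B_1(U)$ onto averages of $N$ atoms from $\{\pm U\be_j\}\cup\{\bzero\}$, the bound $TU^2X^2/N$ on the approximation error, the count $\binom{N+2d}{2d}$, and clipped exponentially weighted aggregation with $\eta=1/(8Y^2)$ exploiting exp-concavity of the square loss on $[-Y,Y]$ --- is exactly the paper's argument for the \emph{middle} regime (your $V_N$ is the paper's $\tilde B_{U,m}$, and your two one-sided estimates of $\log\binom{N+2d}{2d}$ are the paper's Sauer-type inequality $\bigl(\mathrm{e}(2d+m)/m\bigr)^m$). Where you diverge is in trying to extract all three regimes from this single scheme by optimizing $N$. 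The paper does not do this: it obtains the first and third regimes by invoking existing algorithms (the $\textrm{EG}^{\pm}$ bound of Kivinen--Warmuth for small $U$, and the sparsity regret bound of the $\textrm{SeqSEW}$ forecaster for large $U$), and reserves the Maurey argument for the intermediate range only.

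This divergence hides a genuine gap in regime 1. The aggregation cost of clipped EWA over $V_1$ is $8Y^2\ln(2d+1)$, which is bounded away from zero, whereas the target $3UXY\sqrt{2T\ln(2d)}$ vanishes linearly as $U\to 0$; so the exp-concave bound you set up cannot give regime 1 for small $U$, as you partly acknowledge. Your proposed repair --- a variance-based Hedge bound over the $2d+1$ vertices --- is essentially a re-derivation of the $\textrm{EG}^{\pm}$ analysis and is not carried out; note that a second-order bound contributes a term of order $\sqrt{V_T\ln(2d)}$ with per-round loss range up to $8UXY$ \emph{plus} the Maurey bias $TU^2X^2$, and making the total come in under $3\sqrt{2}\,UXY\sqrt{T\ln(2d)}$ is not routine. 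In regime 3 your optimization $N^\star\sim TU^2X^2/(dY^2)$ does yield the correct form $32\,dY^2\ln(1+2\kappa)+O(d Y^2)$ via $\ln(1+2\kappa^2)\le 2\ln(1+2\kappa)$, but the additive term comes out around $16(1+\ln 3)\,dY^2$ rather than the stated $dY^2$, so the precise constants of the theorem are not reproduced by this route either. To prove the theorem as stated you should, as the paper does, handle the two outer regimes by citing the $\textrm{EG}^{\pm}$ and $\textrm{SeqSEW}$ regret bounds and keep the Maurey argument for the middle case.
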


Theorem~\ref{thm:upperbound} improves the bound of
\cite[Theorem~5.11]{KiWa97EGvsGD} for the
$\textrm{EG}^{\pm}$ algorithm.
First, our bound
depends logarithmically---as opposed to linearly---on $U$ for $U > 2dY/(\sqrt{T}X)$.
Secondly, it is smaller by a factor ranging from $1$ to $\sqrt{\ln d}$ when
\begin{equation}
  \label{eqn:smallU}
  \frac{Y}{X} \sqrt{\frac{\ln(1+2d)}{T \ln 2}} \leq U \leq \frac{2 d Y}{\sqrt{T} X}~.
\end{equation}
Hence, Theorem~\ref{thm:upperbound} provides a partial answer to a question\footnote{The authors of \cite{KiWa97EGvsGD} asked: ``For large $d$ there is a significant gap between the upper and lower bounds. We would like to know if it possible to improve the upper bounds by eliminating the $\ln d$ factors.''} raised in \cite{KiWa97EGvsGD} about the gap of $\sqrt{\ln(2d)}$ between the upper and lower bounds.\\

Before proving the theorem (see below), we state the following immediate corollary. It expresses the upper bound of Theorem~\ref{thm:upperbound} in
terms of an intrinsic quantity $\kappa \eqdef \sqrt{T} U
X/(2dY)$ that relates $\sqrt{T} U X/(2Y)$ to the ambient dimension~$d$.

\vspace{0.2cm}
\begin{corollary}[Upper bound in terms of an intrinsic quantity]
  \label{cor:upperbound}
  Let $d, T \in \N^*$, and $U, X, Y >0$. The upper bound of
  Theorem~\ref{thm:upperbound} expressed in terms of $d$, $Y$, and the
  intrinsic quantity $\kappa \eqdef \sqrt{T} U X / (2 d Y)$ reads:
  \begin{align*}
    & \inf_F \sup_{\Norm{\bx_t}_\infty \leq X,\; \Abs{y_t}\leq Y} \Biggl\{ \sum_{t=1}^T (y_t - \hat y_t)^2 - \inf_{\Norm{\bu}_1 \leq U} \sum_{t=1}^T (y_t - \bu \cdot \bx_t)^2 \Biggr\} \\
    & \quad \leq \left\{\begin{array}{ll}
        6 \, d Y^2 \kappa \sqrt{2\ln(2d)} & \textrm{if} \quad \kappa < \frac{\sqrt{\ln(1+2d)}}{2 d \sqrt{\ln 2}}~, \\
        52 \, d Y^2 \kappa \sqrt{\ln(1+1/\kappa)} & \textrm{if} \quad \frac{\sqrt{\ln(1+2d)}}{2 d \sqrt{\ln 2}} \leq \kappa \leq 1~, \\
        32 \, d Y^2 \bigl(\ln(1+2\kappa)+1\bigr) & \textrm{if} \quad \kappa > 1~.
      \end{array} \right.
  \end{align*}
\end{corollary}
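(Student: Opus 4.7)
The plan is to observe that the corollary is nothing more than an algebraic rewriting of Theorem~\ref{thm:upperbound}: I will substitute the identity $U = 2dY\kappa/(\sqrt{T}X)$ throughout and check that each of the three cases of the theorem maps to the corresponding case of the corollary. No new analytic ingredient is needed; the entire content is bookkeeping.

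First, I will translate the three thresholds on $U$ into thresholds on $\kappa$. Dividing $U < (Y/X)\sqrt{\ln(1+2d)/(T\ln 2)}$ by $2dY/(\sqrt{T}X)$ yields $\kappa < \sqrt{\ln(1+2d)}/(2d\sqrt{\ln 2})$, and similarly the middle regime becomes $\sqrt{\ln(1+2d)}/(2d\sqrt{\ln 2}) \leq \kappa \leq 1$ and the high regime becomes $\kappa>1$. This handles the case distinctions.

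Next I will substitute into each upper bound. In regime one, $3UXY\sqrt{2T\ln(2d)} = 3\bigl(2dY\kappa/\sqrt{T}\bigr)Y\sqrt{2T\ln(2d)} = 6dY^2\kappa\sqrt{2\ln(2d)}$. In regime two, I use the convenient identity $2dY/(\sqrt{T}UX) = 1/\kappa$ inside the logarithm so that $26UXY\sqrt{T\ln(1+2dY/(\sqrt{T}UX))} = 52dY^2\kappa\sqrt{\ln(1+1/\kappa)}$. In regime three, $\sqrt{T}UX/(dY) = 2\kappa$, hence $32dY^2\ln(1+\sqrt{T}UX/(dY)) + dY^2 = 32dY^2\ln(1+2\kappa)+dY^2$; I then absorb the trailing $dY^2$ into $32dY^2$ to write the bound as $32dY^2(\ln(1+2\kappa)+1)$, which is the stated form.

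There is no serious obstacle: the only care required is to keep track of the factor $2$ in the definition of $\kappa$ and to confirm that the loosening $dY^2 \leq 32dY^2$ used in the last regime is consistent with the stated constants. After these verifications, the corollary follows immediately from Theorem~\ref{thm:upperbound}.
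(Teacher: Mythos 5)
Your proposal is correct and coincides with the paper's treatment: the corollary is stated as an immediate consequence of Theorem~\ref{thm:upperbound}, obtained exactly by the substitution $UX = 2dY\kappa/\sqrt{T}$ in each threshold and each bound, with the trailing $dY^2$ in the third regime absorbed as $dY^2 \leq 32\,dY^2$. All three algebraic translations check out, so nothing further is needed.
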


The parametrization by $(d,Y,\kappa)$ helps to unify the different upper bounds of Theorem~\ref{thm:upperbound}: on both regimes $\kappa \leq 1$ and $\kappa > 1$, the regret bound scales as $d Y^2$, the only difference lies in the dependence in $\kappa$ (linear versus logarithmic).

The upper bound of Corollary~\ref{cor:upperbound} is shown in~Figure~\ref{fig:upperbound}. Observe that, in low dimension (Figure~\ref{fig:1a}), a clear transition from a regret of the order of $\sqrt{T}$ to one of $\ln T$ occurs at $\kappa=1$. This transition is absent for high dimensions: for $d \geq \omega T$, where $\omega \eqdef \big(32 (\ln(3)+1)\big)^{-1}$, the regret bound $32 \, d Y^2 \bigl(\ln(1+2\kappa)+1\bigr)$ is worse than a trivial bound of $TY^2$ when $\kappa \geq 1$.

\begin{figure}[ht]
\begin{center}
  \begin{tabular}{ll}
    \subfigure[High dimension $d \geq \omega T$.]{\label{fig:1b}\includegraphics[scale=0.34]{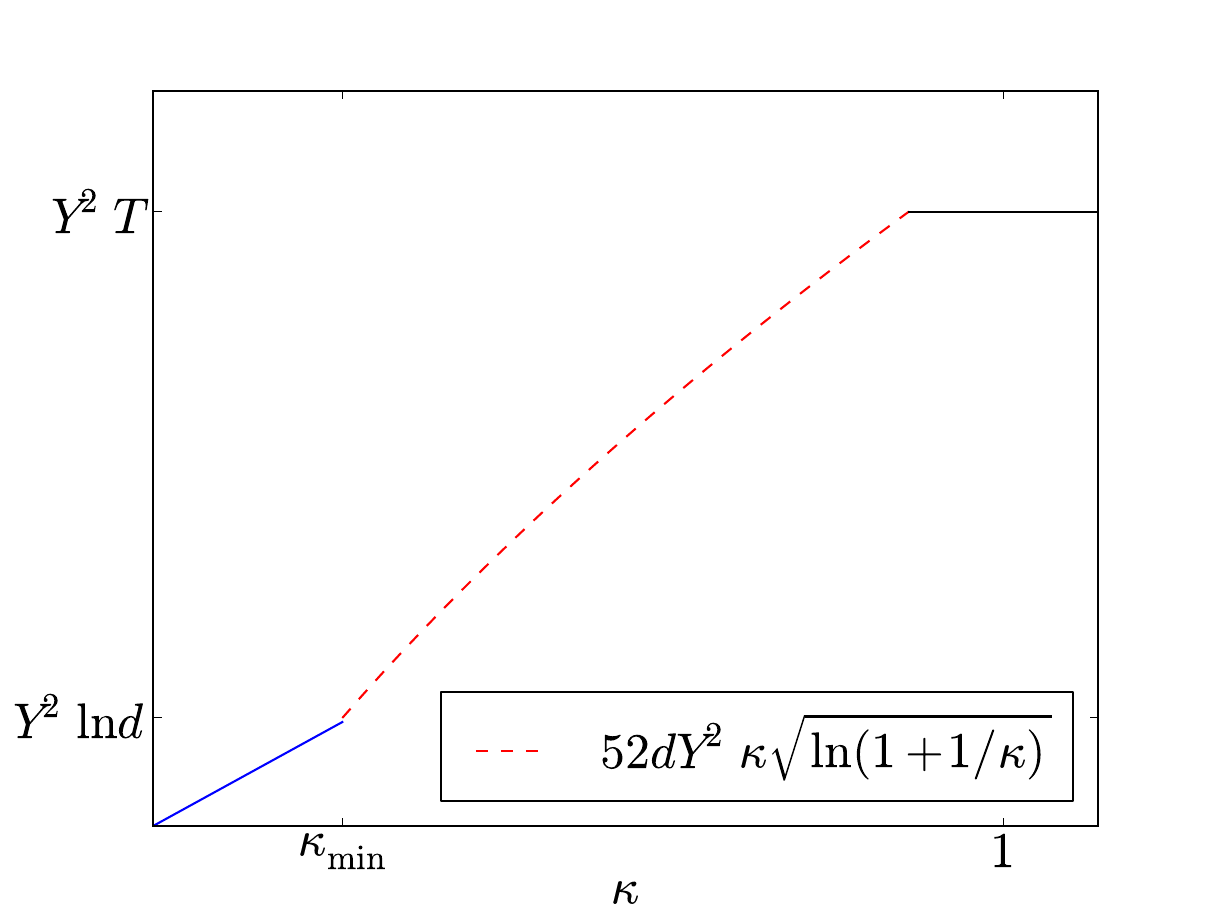}}
    & \subfigure[Low dimension $d < \omega T$.]{\label{fig:1a}\includegraphics[scale=0.34]{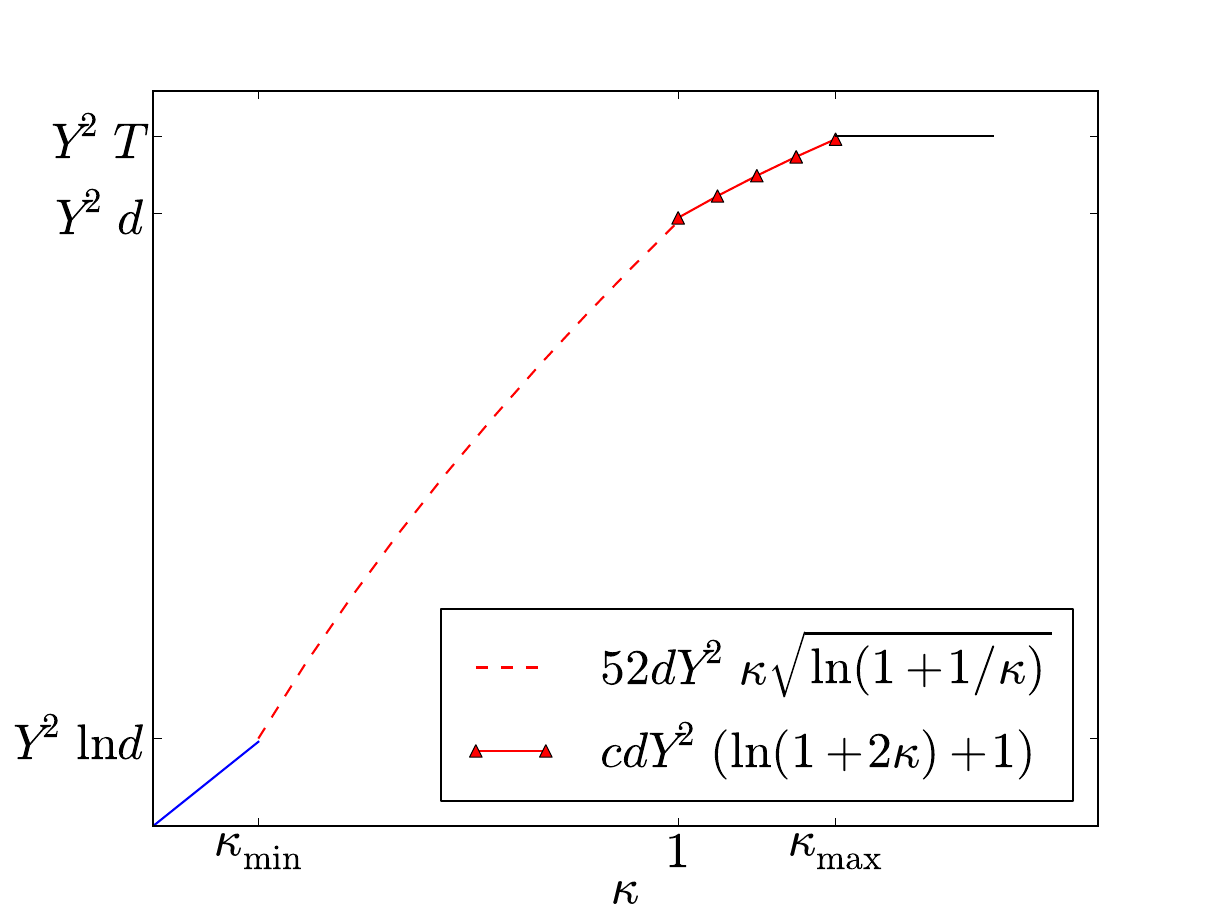}}
   \end{tabular}
\end{center}
	\vspace{-0.3cm}
  \caption{\label{fig:upperbound} The regret bound of Corollary~\ref{cor:upperbound} over $B_1(U)$
    as a function of $\kappa = \sqrt{T}UX/(2dY)$. The constant $c$ is
    chosen to ensure continuity at $\kappa=1$, and $\omega \eqdef \big(32 (\ln(3)+1)\big)^{-1}$. We define: $\kappa_{\min} = \sqrt{\ln(1+2d)}/(2d\sqrt{\ln 2})$ and $\kappa_{\max} = (e^{(T/d-1)/c}-1)/2$.}
\end{figure}

\vspace{0.5cm}
We now prove Theorem~\ref{thm:upperbound}. The main part of the proof relies on a Maurey-type argument. Although this argument was used in the stochastic setting \cite{Nem-00-TopicsNonparametric,Tsy-03-OptimalRates,BuNo08SeqProcedures,ShSrZh-10-Sparsifiability}, we adapt it to the deterministic setting. This is yet another technique that can be applied to both the stochastic and individual sequence settings.

\vspace{0.5cm}
\begin{proofref}{Theorem~\ref{thm:upperbound}}
First note from Lemma~\ref{lem:chapL1-upper-lemma} in \refapx{sec:chapL1-lemmas} that the minimax regret on $B_1(U)$ is upper bounded\footnote{As proved in Lemma~\ref{lem:chapL1-upper-lemma}, the regret bound \eqref{eqn:chapL1-upperbound-1} is achieved either by the $\textrm{EG}^{\pm}$ algorithm, the algorithm $\textrm{SeqSEW}^{B,\eta}_{\tau}$ of \cite{Ger-11colt-SparsityRegretBounds} (we could also get a slightly worse bound with the sequential ridge regression forecaster \cite{AzWa01RelativeLossBounds,Vo01CompetitiveOnline}), or the trivial null forecaster.} by
\begin{equation}
\label{eqn:chapL1-upperbound-1}
\min\left\{3UXY\sqrt{2 T \ln (2d)}, \, 32 \, d Y^2 \ln\!\left(1+\frac{\sqrt{T} U X}{d Y}\right) + d Y^2 \right\}~.
\end{equation}
Therefore, the first case $U < \frac{Y}{X} \sqrt{\frac{\ln(1+2d)}{T \ln 2}}$ and the third case $U>\frac{dY}{X \sqrt{T}}$ are straightforward.\\

\noindent
Therefore, we assume in the sequel that $\frac{Y}{X} \sqrt{\frac{\ln(1+2d)}{T \ln 2}} \leq U \leq \frac{2 d Y}{\sqrt{T} X}$. \\
We use a Maurey-type argument to refine the regret bound \eqref{eqn:chapL1-upperbound-1}. This technique was used under various forms in the stochastic setting, \eg, in \cite{Nem-00-TopicsNonparametric,Tsy-03-OptimalRates,BuNo08SeqProcedures,ShSrZh-10-Sparsifiability}. It consists of discretizing $B_1(U)$ and looking at a random point in this discretization to study its approximation properties. We also use clipping to get a regret bound growing as $U$ instead of a naive~$U^2$. \\
  \ \\
  More precisely, we first use the fact that to be competitive against
  $B_1(U)$, it is sufficient to be competitive against its finite
  subset
  \[
  \tilde{B}_{U,m} \eqdef \left\{ \left(\frac{k_1 U}{m}, \ldots,
      \frac{k_d U}{m}\right) : (k_1, \ldots, k_d) \in \Z^d,
    \sum_{j=1}^d |k_j| \leq m \right\} \subset B_1(U)~,
  \]
  where $m \eqdef \lfloor \alpha \rfloor$ with $\displaystyle{\alpha \eqdef \frac{U X}{Y}  \sqrt{T (\ln 2)/ \ln\biggl(1+\frac{2dY}{\sqrt{T}UX}\biggr)}}$~. \\
  \ \\
  By Lemma~\ref{lem:Maurey-approx} in~\refapx{apx:additional}, and since $m>0$ (see below), we indeed have
  \begin{align}
  \inf_{\bu \in \tilde{B}_{U,m}} \sum_{t=1}^T (y_t - \bu \cdot \bx_t)^2 & \leq \inf_{\bu \in B_1(U)} \sum_{t=1}^T (y_t - \bu \cdot \bx_t)^2 + \frac{T U^2 X^2}{m} \nonumber \\
    & \leq \inf_{\bu \in B_1(U)} \sum_{t=1}^T (y_t - \bu \cdot
    \bx_t)^2 + \frac{2}{\sqrt{\ln 2}} \, U X Y
    \sqrt{T\ln\biggl(1+\frac{2dY}{\sqrt{T}UX}\biggr)}~, \label{eqn:upperbound-maurey-approx}
  \end{align}
  where (\ref{eqn:upperbound-maurey-approx}) follows from $m \eqdef
  \lfloor \alpha \rfloor \geq \alpha /2$ since $\alpha \geq 1$ (in particular, $m>0$ as stated above).

  To see why $\alpha \geq 1$, note that it suffices to show that $x \sqrt{\ln(1+x)} \leq 2d \sqrt{\ln 2}$ where we set $x \eqdef 2dY/(\sqrt{T}UX)$. But from the assumption $U \geq (Y/X) \sqrt{\ln(1+2d)/(T \ln 2)}$, we have $x \leq 2d\sqrt{\ln(2)/\ln(1+2d)} \eqdef y$, so that, by monotonicity, $x \sqrt{\ln(1+x)} \leq y \sqrt{\ln(1+y)} \leq y \sqrt{\ln(1+2d)} = 2d\sqrt{\ln 2}$. \\

  Therefore it only remains to exhibit an algorithm which is
  competitive against $\tilde{B}_{U,m}$ at an aggregation price of the
  same order as the last term
  in~(\ref{eqn:upperbound-maurey-approx}). This is the case for the
  standard exponentially weighted average forecaster applied to the
  clipped predictions
  \[
  \bigl[\bu \cdot \bx_t\bigr]_Y \eqdef \min\Bigl\{Y,\max\bigl\{-Y,\bu
  \cdot \bx_t\bigr\}\Bigr\}~, \quad \bu \in \tilde{B}_{U,m}~,
  \]
  and tuned with the inverse temperature parameter $\eta = 1/(8
  Y^2)$. More formally, this algorithm predicts at each time $t=1,
  \ldots, T$ as
  \[
  \hat{y}_t \eqdef \sum_{\bu \in \tilde{B}_{U,m}} p_t(\bu) \bigl[\bu
  \cdot \bx_t\bigr]_Y~,
  \]
  where $p_1(\bu) \eqdef 1/\bigl| \tilde{B}_{U,m} \bigr|$ (denoting by
  $\bigl| \tilde{B}_{U,m} \bigr|$ the cardinality of the set
  $\tilde{B}_{U,m}$), and where the weights $p_t(\bu)$ are defined for
  all $t=2,\ldots,T$ and $\bu \in \tilde{B}_{U,m}$ by
  \[
  p_t(\bu) \eqdef \frac{\exp\left(-\eta \sum_{s=1}^{t-1}
      \bigl(y_s-[\bu \cdot \bx_s]_Y\bigr)^2\right)}{\sum_{\bv \in
      \tilde{B}_{U,m}} \exp\left(-\eta \sum_{s=1}^{t-1} \bigl(y_s-[\bv
      \cdot \bx_s]_Y\bigr)^2\right)}~.
  \]
  By Lemma~\ref{lem:EWA-exp-concave} in~\refapx{sec:chapL1-lemmas}, the above forecaster
  tuned with $\eta = 1/(8 Y^2)$ satisfies
  \begin{align}
    \sum_{t=1}^T (y_t - \hat y_t)^2 - \inf_{\bu \in \tilde{B}_{U,m}} \sum_{t=1}^T (y_t - \bu \cdot \bx_t)^2 & \leq 8 Y^2 \ln \bigl| \tilde{B}_{U,m} \bigr| \nonumber \\
    & \leq 8 Y^2 \ln \left(\frac{\textrm{e}(2d+m)}{m}\right)^m \label{eqn:sauer-type-ineq} \\
    & = 8 Y^2 m \bigl(1+\ln(1+2d/m)\bigr) \leq 8 Y^2 \alpha \bigl(1+\ln(1+2d/\alpha)\bigr) \label{eqn:sauer-type-ineq2} \\
    & = 8 Y^2 \alpha + 8 Y^2 \alpha \ln\!\left(1+\frac{2dY}{\sqrt{T}UX}\sqrt{\frac{\ln\bigl(1+2dY/(\sqrt{T}UX)\bigr)}{\ln 2}}\,\right) \nonumber \\
    & \leq 8 Y^2 \alpha + 16 Y^2 \alpha \ln\!\left(1+\frac{2dY}{\sqrt{T}UX}\right) \label{eqn:sauer-type-ineq3} \\
    & \leq \left(\frac{8}{\sqrt{\ln 2}} + 16 \sqrt{\ln 2}\right)
    U X Y
    \sqrt{T\ln\biggl(1+\frac{2dY}{\sqrt{T}UX}\biggr)}~. \label{eqn:maurey-regret}
  \end{align}
  To get (\ref{eqn:sauer-type-ineq}) we used Lemma~\ref{lem:combinatorial} in \refapx{apx:additional}. Inequality~(\ref{eqn:sauer-type-ineq2}) follows by definition of $m \leq \alpha$ and the fact that $x \mapsto x \bigl(1+\ln(1+A/x)\bigr)$ is nondecreasing on $\R^*_+$ for all $A>0$. Inequality~(\ref{eqn:sauer-type-ineq3}) follows from the assumption $U \leq 2dY/(\sqrt{T}X)$ and the elementary inequality $\ln\bigl(1+x\sqrt{\ln(1+x)/\ln 2}\bigr) \leq 2 \ln(1+x)$ which holds for all $x \geq 1$ and was used, \eg, at the end of \cite[Theorem~2-a)]{BuNo08SeqProcedures}. Finally, elementary manipulations combined with the assumption that $2dY/(\sqrt{T}UX) \geq 1$ lead to~(\ref{eqn:maurey-regret}). \\

  Putting Eqs.\ (\ref{eqn:upperbound-maurey-approx})
  and~(\ref{eqn:maurey-regret}) together, the previous algorithm has a
  regret on $B_1(U)$ which is bounded from above by
  \[
  \left(\frac{10}{\sqrt{\ln 2}} + 16 \sqrt{\ln 2}\right) U X Y
  \sqrt{T\ln\biggl(1+\frac{2dY}{\sqrt{T}UX}\biggr)}~,
  \]
  which concludes the proof since $10/\sqrt{\ln 2} + 16 \sqrt{\ln 2} \leq 26$.
\end{proofref}

\subsection{Lower bound}
\label{sec:lowerbounds}

Corollary~\ref{cor:upperbound} gives an upper bound on the regret in terms of the quantities $d$, $Y$, and $\kappa \eqdef \sqrt{T}UX/(2dY)$.
We now show that for all $d \in \Zp$, $Y>0$, and $\kappa \geq \sqrt{\ln(1+2d)}/(2 d \sqrt{\ln 2})$, the upper bound can not be improved\footnote{For $T$ sufficiently large, we may overlook the case $\kappa < \sqrt{\ln(1+2d)}/(2 d \sqrt{\ln 2})$ or $\sqrt{T} < (Y/(UX)) \sqrt{\ln(1+2d)/\ln 2}$. Observe that in this case, the minimax regret is already of the order of $Y^2 \ln(1+d)$ (cf. Figure~\ref{fig:upperbound}).} up to logarithmic factors.

\vspace{0.3cm}
\begin{theorem}[Lower bound]
\label{thm:lowerbound}
For all $d \in \Zp$, $Y>0$, and $\kappa \geq \frac{\sqrt{\ln(1+2d)}}{2 d \sqrt{\ln 2}}$, there exist $T \geq 1$, $U>0$, and $X>0$ such that $\sqrt{T}UX/(2dY) = \kappa$ and
\begin{align*}
& \inf_F \sup_{\Norm{\bx_t}_\infty \leq X,\; \Abs{y_t}\leq Y} \Biggl\{ \sum_{t=1}^T (y_t - \hat y_t)^2 - \inf_{\Norm{\bu}_1 \leq U} \sum_{t=1}^T (y_t - \bu \cdot \bx_t)^2 \Biggr\} \\
& \quad \geq \left\{\begin{array}{ll}
\frac{c_1}{\ln\bigl(2+16 d^2\bigr)} d Y^2 \kappa \sqrt{\ln\left(1+1/\kappa\right)} & \textrm{if} \quad \frac{\sqrt{\ln(1+2d)}}{2 d \sqrt{\ln 2}} \leq \kappa \leq 1~, \\
\frac{c_2}{\ln\bigl(2+16 d^2\bigr)} d Y^2 & \textrm{if} \quad \kappa > 1~,
\end{array} \right.
\end{align*}
where $c_1,c_2>0$ are absolute constants. The infimum is taken over all forecasters $F$ and the
  supremum is taken over all sequences $(\bx_t,y_t)_{1 \leq t \leq T}
  \in (\R^d \times \R)^T$ such that $|y_1|, \ldots, |y_T| \leq Y$ and
  $\Norm{\bx_1}_{\infty}, \ldots, \Norm{\bx_T}_{\infty} \leq X$.
\end{theorem}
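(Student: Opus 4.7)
The plan is to reduce the online regret lower bound to a batch minimax estimation lower bound via the standard online-to-batch conversion, and then apply a Fano-type packing argument on $B_1(U)$. Concretely, I would draw i.i.d.\ data $(\bx_t,y_t)_{1\leq t\leq T}$ with $\bx_t$ uniform on $\{X\be_1,\ldots,X\be_d\}$ and $y_t=\bu^\star\cdot\bx_t+\xi_t$, where $\bu^\star\in B_1(U)$ is drawn from a prior supported on a finite packing and $\xi_t$ is a zero-mean symmetric noise of amplitude at most $Y-UX$, so that $|y_t|\leq Y$ holds almost surely (as required in the individual-sequence setting). Jensen's inequality applied to the square loss yields
\begin{equation*}
\E\bigl[R_T(F,\bu^\star)\bigr]\;\geq\;T\cdot\E\Bigl[\E_\bx\bigl(\bx\cdot(\overline{\bu}_T-\bu^\star)\bigr)^2\Bigr],
\end{equation*}
where $\overline{\bu}_T$ is the Cesaro average of the implicit online iterates induced by $F$. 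Taking an expectation over $\bu^\star$ and then inserting a supremum reduces the problem to lower-bounding the Bayesian $L^2$ prediction risk of estimating $\bu^\star$ from $T$ noisy observations.

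For the regime $\kappa\leq 1$, I would apply Fano's inequality to a Gilbert--Varshamov packing $\cS\subset B_1(U)$ of vectors of the form $(U/k)\balpha$ with $\balpha\in\{-1,0,+1\}^d$ of Hamming weight $k\asymp\min\bigl(d,\lceil 1/\kappa^2\rceil\bigr)$. Such packings have log-cardinality $\Theta\bigl(k\log(ed/k)\bigr)$ and minimal pairwise $L^2(\mathbb{P}_\bx)$-separation of order $UX/\sqrt{kd}$; combined with a KL-divergence bound of order $T\,X^2\Norm{\bu_1-\bu_2}_2^2/(dY^2)$ between the $T$-sample product laws, Fano's inequality delivers a minimax $L^2$-risk lower bound of order $(Y^2/T)\cdot k\log(d/k)/d$. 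Multiplying by $T$ and substituting $k\asymp 1/\kappa^2$ recovers the announced bound $c_1\,dY^2\kappa\sqrt{\log(1+1/\kappa)}$, up to the denominator $\ln(2+16d^2)$. The regime $\kappa>1$ is handled analogously with the hypercube packing $\{-U/d,+U/d\}^d\subset B_1(U)$, for which the $\ell^1$-constraint is inactive: the same Fano computation yields an $L^2$-risk of order $Y^2/d$ and hence a regret of order $TY^2/d$, which matches $\Omega(dY^2)$ after picking $T\asymp d$ (one of the parameters $T,U,X$ being free, so that the constraint $\sqrt{T}UX/(2dY)=\kappa$ can always be met).

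The main obstacle is to keep $|y_t|\leq Y$ \emph{almost surely} (not merely on average or with high probability) while maintaining a KL bound that scales like $\Norm{\bu_1-\bu_2}_2^2X^2/(dY^2)$: since $|\bu^\star\cdot\bx_t|$ can be as large as $UX$, the noise amplitude must be truncated at $Y-UX$, which degrades the effective Bernoulli variance and is, in my reading, the source of the logarithmic slack $\ln(2+16d^2)$ appearing in the denominator. A secondary technicality is checking that the values of $T,U,X$ produced by the construction can indeed be chosen to satisfy the equality $\sqrt{T}UX/(2dY)=\kappa$ for every admissible $\kappa$ and $d$; this is straightforward because one degree of freedom among $(T,U,X)$ remains after the Fano construction fixes the other two up to a constant.
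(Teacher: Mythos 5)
Your high-level plan --- online-to-batch conversion by Jensen's inequality followed by an information-theoretic minimax lower bound for estimation over the $\ell^1$-ball --- is exactly the paper's strategy: the paper reduces to the batch setting and then invokes a version of Tsybakov's convex-aggregation lower bound (itself a Fano-type packing argument), handling boundedness of the $y_t$ by Gaussian concentration with noise level $\sigma\asymp Y/\sqrt{\ln T}$; that shrinkage of $\sigma$, not any truncation of the noise, is where the $\ln(2+16d^2)$ in the denominator comes from. Reproving the Tsybakov bound directly with genuinely bounded noise would be a legitimate, arguably cleaner, variant.

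However, your concrete instantiation has a gap I believe is fatal: the design distribution. With $\bx_t$ uniform on $\{X\be_1,\ldots,X\be_d\}$, the prediction seminorm is $\E\bigl[(\bx\cdot\bv)^2\bigr]=X^2\Norm{\bv}_2^2/d$, so both the separation of your packing and the per-sample KL divergence carry a factor $1/d$. Running the Fano computation to its end (separation squared $\asymp U^2X^2/(kd)$ against the constraint $TU^2X^2/(kdY^2)\lesssim k\ln(ed/k)$, where $TU^2X^2/(dY^2)=4d\kappa^2$) forces $k\asymp\sqrt{d}\,\kappa/\sqrt{\ln(ed/k)}$ and yields $T$ times the risk of order $\sqrt{d}\,Y^2\kappa\sqrt{\ln(ed/k)}$ --- short of the target $dY^2\kappa\sqrt{\ln(1+1/\kappa)}$ by a factor of order $\sqrt{d}$ that cannot be recovered by re-choosing $(T,U,X)$, since the target depends only on $(d,Y,\kappa)$. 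The same loss kills the $\kappa>1$ case: with the hypercube $\{\pm U/d\}^d$ and this design, the Assouad/Fano computation gives only $T$ times a risk of order $Y^2\kappa^2$, not $dY^2$. The fix, and what the paper does, is to use a bounded \emph{orthonormal} design, $\bx_t=\gamma\bigl(\varphi_1(X_t),\ldots,\varphi_d(X_t)\bigr)$ with $\varphi_j(x)=\sqrt{2}\sin(jx)$ on $[-\pi,\pi]$, so that $\Norm{\bx_t}_\infty\leq\gamma\sqrt{2}$ while $\E\bigl[(\bx\cdot\bv)^2\bigr]=\gamma^2\Norm{\bv}_2^2$ with no $1/d$. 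Relatedly, your bookkeeping does not close even symbolically: with $k\asymp 1/\kappa^2$ and a risk of $(Y^2/T)\,k\ln(d/k)/d$, multiplying by $T$ gives $Y^2\ln(d\kappa^2)/(d\kappa^2)$, which is not $dY^2\kappa\sqrt{\ln(1+1/\kappa)}$; the correct sparsity level (for the orthonormal design) is $k\asymp d\kappa/\sqrt{\ln(1+1/\kappa)}$, and one should also verify that Fano's KL condition holds at that $k$, which your choice violates for $\kappa$ near $1$. Finally, the paper disposes of $\kappa>1$ in one line, by monotonicity of the regret in $U$ at fixed $(d,Y,T,X)$, reducing to $\kappa=1$; you may want to do the same rather than rerun a packing argument there.
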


\vspace{0.2cm}
 The above lower bound extends those of \cite{CB99AnalysisGradientBased,KiWa97EGvsGD}, which hold for small $\kappa$ of the order of $1/d$. The proof is postponed to \refapx{sec:proof-thm2}. We perform a reduction to the stochastic batch setting---via the standard online to batch conversion---and employ a version of a lower bound of \cite{Tsy-03-OptimalRates}.

Note that in the proof of Theorem~\ref{thm:lowerbound}, we are free to choose the values of two parameters among $T$, $U$, and $X$, provided that $\sqrt{T}UX/(2dY) = \kappa$. This liberty is possible since the problem is now parametrized by $d$, $Y$, and $\kappa$ only (as shown in Corollary~\ref{cor:upperbound}, these three parameters are sufficient to express the regret bound of Theorem~\ref{thm:upperbound}, and they actually help to unify the upper bounds of the two regimes). A more ambitious lower bound would consist in proving that the upper bound of Theorem~\ref{thm:upperbound} cannot be substantially improved for any fixed value of $(d,Y,T,U,X)$. This question is left for future work.

\section{Adaptation to unknown $X$, $Y$ and $T$ via exponential weights}\label{sec:algo1}

Although the proof of Theorem~\ref{thm:upperbound} already gives an algorithm
that achieves the minimax regret,
the latter takes as inputs $U$, $X$, $Y$, and $T$, and it is inefficient in high dimensions.
In this section, we present a new method that achieves the minimax regret
both efficiently and without prior knowledge of $X$, $Y$, and $T$ provided that $U$ is known.
Adaptation to an unknown $U$ is considered in Section~\ref{sec:algo2}.
Our method consists of modifying an underlying efficient linear regression algorithm such
as the $\textrm{EG}^{\pm}$ algorithm \cite{KiWa97EGvsGD} or
the sequential ridge regression forecaster
  \cite{Vo01CompetitiveOnline,AzWa01RelativeLossBounds}.
Next, we show that automatically tuned variants of the $\textrm{EG}^{\pm}$ algorithm nearly achieve the minimax regret for the regime $d \geq \sqrt{T} UX/(2Y)$. A similar modification could be applied to the ridge regression forecaster --- with a total computational efficiency of the same order as that of the standard ridge algorithm --- to achieve a nearly optimal regret bound of order $d Y^2
  \ln \bigl(1+d\bigl(\frac{\sqrt{T} U X}{d Y}\bigr)^2\bigr)$ in the regime $d < \sqrt{T} UX/(2Y)$. The latter analysis is more technical and hence is omitted.

\subsection{An adaptive $\textrm{EG}^{\pm}$ algorithm for general convex and differentiable loss functions}
\label{sec:chapL1-adaptiveEGpm}

The second algorithm of the proof of Theorem~\ref{thm:upperbound} is
computationally inefficient because it aggregates approximately
$d^{\sqrt{T}}$ experts.
In contrast, the $\textrm{EG}^{\pm}$ algorithm has a
manageable computational complexity that is linear in~$d$ at each time $t$. Next we introduce a version of the $\textrm{EG}^{\pm}$ algorithm --- called the \emph{adaptive $\textrm{EG}^{\pm}$ algorithm} --- that does not
require prior knowledge of $X$, $Y$ and $T$ (as opposed to
the original $\textrm{EG}^{\pm}$ algorithm of \cite{KiWa97EGvsGD}). This version relies on the automatic tuning of \cite{CeMaSt07SecOrder}. We first present a generic version suited for general convex and differentiable loss functions. The application to the square loss and to other $\alpha$-losses will be dealt with in Sections~\ref{sec:adaptiveEG-square} and~\ref{sec:loss-Lip}.

The generic setting with arbitrary convex and differentiable loss functions corresponds to the online convex optimization setting \cite{Zin-03-GradientAscent,ShShSrSr-09-StochasticConvexOptimization} and unfolds as follows: at each time $t \geq 1$, the forecaster chooses a linear combination $\hat{\bu}_t \in \R^d$, then the environment chooses and reveals a convex and differentiable loss function $\ell_t:\R^d \to \R$, and the forecaster incurs the loss $\ell_t(\hat{\bu}_t)$. In online linear regression under the square loss, the loss functions are given by $\ell_t(\bu) = (y_t-\bu \cdot \bx_t)^2$.\\

\begin{figure}[ht]
  \begin{center}
    \bookbox{
      {\bf Parameter}: radius $U > 0$. \\[0.2cm]
      {\bf Initialization}: $\bp_1 = (p^+_{1,1},p^-_{1,1},\ldots,p^+_{d,1},p^-_{d,1}) \eqdef \bigl(1/(2d), \ldots, 1/(2d)\bigr) \in \R^{2d}$. \\[0.2cm]
      {\bf At each time round $t \geqslant 1$},
      \begin{enumerate}
			\vspace{-0.2cm}
      \item Output the linear combination $\displaystyle{\widehat{\bu}_{t} \eqdef U \sum_{j=1}^d \bigl(p^+_{j,t} - p^-_{j,t}\bigr) \, \be_j \in B_1(U)}$;
      \item Receive the loss function $\ell_t:\R^d \to \R$ and update the parameter $\eta_{t+1}$ according to \eqref{eqn:eta-variable};
      \item Update the weight vector $\bp_{t+1} = (p^+_{1,t+1},p^-_{1,t+1},\ldots,p^+_{d,t+1},p^-_{d,t+1}) \in \cX_{2d}$ defined for all $j=1,\ldots,d$ and $\gamma \in \{+,-\}$ by\footnote{For all $\gamma \in \{+,-\}$, by a slight abuse of notation, $\gamma U$ denotes $U$ or $-U$ if $\gamma=+$ or $\gamma=-$ respectively. }
      	\[
        p^{\gamma}_{j,t+1} \eqdef \frac{\displaystyle{\exp\!\left(- \eta_{t+1} \sum_{s=1}^{t} \gamma U \nabla_j \ell_s(\hat{\bu}_s)\right)}}{\displaystyle{\sum_{\substack{1 \leq k \leq d \\ \mu \in\{+,-\}}} \exp\!\left(- \eta_{t+1} \sum_{s=1}^{t} \mu U \nabla_k  \ell_s(\hat{\bu}_s)\right)}}~.
      	\]
      \end{enumerate}
    }
  \end{center}
  \vspace{-0.3cm}
  \caption{\label{fig:algo-adaptiveEG} The adaptive
    $\textrm{EG}^{\pm}$ algorithm for general convex and differentiable loss functions (see Proposition~\ref{prop:adaptiveEGpm}).}
    \vspace{0.5cm}
\end{figure}

The adaptive $\textrm{EG}^{\pm}$ algorithm for general convex and differentiable loss functions is defined in Figure~\ref{fig:algo-adaptiveEG}. We denote by $({\bf e}_j)_{1 \leq j \leq d}$ the canonical basis of $\R^d$, by $\nabla \ell_t(\bu)$ the gradient of $\ell_t$ at $\bu \in \R^d$, and by $\nabla_j\ell_t(\bu)$ the $j$-th component of this gradient. The adaptive $\textrm{EG}^{\pm}$ algorithm uses as a blackbox the exponentially weighted majority forecaster of \cite{CeMaSt07SecOrder} on $2d$ experts --- namely, the vertices $\pm U \be_j$ of $B_1(U)$ --- as in \cite{KiWa97EGvsGD}. It adapts to the unknown gradient amplitudes $\Arrowvert \nabla \ell_t \Arrowvert_{\infty}$ by the particular choice of $\eta_t$ due to \cite{CeMaSt07SecOrder} and defined for all $t \geq 2$ by
\begin{equation}
  \label{eqn:eta-variable}
  \eta_t = \min\left\{\frac{1}{\hat{E}_{t-1}}, \, C \sqrt{\frac{\ln(2d)}{V_{t-1}}} \, \right\}~,
\end{equation}
where $C \eqdef \sqrt{2 (\sqrt{2}-1)/(\rm{e}-2)}$ and where we set, for all $t=1,\ldots,T$,
				\begin{align*}
				z_{j,s}^+ & \eqdef U \nabla_j \ell_s(\hat{\bu}_s) \quad \textrm{and} \quad z_{j,s}^- \eqdef -U \nabla_j \ell_s(\hat{\bu}_s)~, \quad j=1,\ldots,d, \quad s=1,\ldots,t~,\\
				\hat{E}_t & \eqdef \inf_{k \in \Z} \left\{ 2^k : 2^k \geq \max_{1 \leq s \leq t} \max_{\substack{1 \leq j,k \leq d \\ \gamma,\mu \in \{+,-\}}} \, \bigl|z_{j,s}^{\gamma} - z_{k,s}^{\mu} \bigr| \right\}~, \\
				V_t & \eqdef \sum_{s=1}^t \sum_{\substack{1 \leq j \leq d\\ \gamma \in \{+,-\}}} p^{\gamma}_{j,s} \left(z_{j,s}^{\gamma} - \sum_{\substack{1 \leq k \leq d\\ \mu \in \{+,-\}}} p^{\mu}_{k,s} z_{k,s}^{\mu}\right)^2~.
				\end{align*}
Note that $\hat{E}_{t-1}$ approximates the range of the $z^{\gamma}_{j,s}$ up to time $t-1$, while $V_{t-1}$ is the corresponding cumulative variance of the forecaster.\\[1.5em]

\begin{proposition}[The adaptive
    $\textrm{EG}^{\pm}$ algorithm for general convex and differentiable loss functions]
\label{prop:adaptiveEGpm}
\ \\
Let $U>0$. Then, the adaptive $\textrm{EG}^{\pm}$ algorithm on $B_1(U)$ defined in Figure~\ref{fig:algo-adaptiveEG} satisfies, for all $T \geq 1$ and all sequences of convex and differentiable\footnote{Gradients can be replaced with subgradients if the loss functions $\ell_t:\R^d \to \R$ are convex but not differentiable.} loss functions $\ell_1,\ldots,\ell_T:\R^d \to \R$,
\begin{align*}
& \sum_{t=1}^T \ell_t(\hat{\bu}_t) - \min_{\norm[\bu]_1 \leq U} \sum_{t=1}^T \ell_t(\bu) \\
& \qquad \leq 4 U \sqrt{\left(\sum_{t=1}^T \norm[\nabla \ell_t(\hat{\bu}_t)]_{\infty}^2\right) \ln(2d)} + U \, \bigl(8 \ln(2d) + 12 \bigr) \max_{1 \leq t \leq T} \norm[\nabla \ell_t(\hat{\bu}_t)]_{\infty}~.
\end{align*}
In particular, the regret is bounded by $4U \bigl(\max_{1 \leq t \leq T} \norm[\nabla \ell_t(\hat{\bu}_t)]_{\infty}\bigr)\bigl( \sqrt{T \ln(2d)} + 2 \ln(2d) + 3 \bigr)$.
\end{proposition}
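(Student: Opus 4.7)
The plan is the standard gradient-linearization strategy for online convex optimization, followed by a reduction to a finite-expert problem in which the exponentially weighted majority forecaster with the second-order tuning of \cite{CeMaSt07SecOrder} is used as a blackbox. Concretely, I would argue in three steps.

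\textbf{Step 1 (linearization and reduction to vertices).} Fix any $\bu \in B_1(U)$. By convexity and differentiability of each $\ell_t$,
\[
\ell_t(\hat{\bu}_t) - \ell_t(\bu) \leq \nabla \ell_t(\hat{\bu}_t) \cdot (\hat{\bu}_t - \bu).
\]
Summing over $t$ and taking the infimum over $\bu \in B_1(U)$, and using that for any $\bg \in \R^d$ one has $\inf_{\bu \in B_1(U)} \bu \cdot \bg = -U\Norm{\bg}_\infty = \min_{1\leq j\leq d,\, \gamma\in\{+,-\}} \gamma U g_j$, I get
\[
\sum_{t=1}^T \ell_t(\hat{\bu}_t) - \inf_{\norm[\bu]_1 \leq U}\sum_{t=1}^T \ell_t(\bu) \;\leq\; \sum_{t=1}^T \nabla \ell_t(\hat{\bu}_t)\cdot \hat{\bu}_t \;-\; \min_{j,\gamma}\,\sum_{t=1}^T z_{j,t}^{\gamma}.
\]
Since $\hat{\bu}_t = \sum_{j,\gamma} p_{j,t}^{\gamma}\, \gamma U\be_j$ and $z_{j,t}^{\gamma} = \gamma U \nabla_j\ell_t(\hat{\bu}_t)$, the first term on the right equals $\sum_{t}\sum_{j,\gamma} p_{j,t}^{\gamma} z_{j,t}^{\gamma}$. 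Thus the right-hand side is exactly the regret of the weight sequence $(\bp_t)$ on $2d$ experts indexed by $(j,\gamma)$ with losses $z_{j,t}^{\gamma}$.

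\textbf{Step 2 (invoke the adaptive EW bound of \cite{CeMaSt07SecOrder}).} Reading off the update in Figure~\ref{fig:algo-adaptiveEG}, the vector $\bp_t$ is precisely the exponentially weighted average forecaster applied to these $2d$ experts with the adaptive step size \eqref{eqn:eta-variable}; here $\hat{E}_{t-1}$ is the dyadic overestimate of the range of the payoffs and $V_{t-1}$ is the cumulative empirical variance, exactly the quantities used in \cite{CeMaSt07SecOrder}. Their main second-order regret bound then yields, with $N=2d$,
\[
\sum_{t=1}^T \sum_{j,\gamma} p_{j,t}^{\gamma} z_{j,t}^{\gamma} \;-\; \min_{j,\gamma}\sum_{t=1}^T z_{j,t}^{\gamma} \;\leq\; 4\sqrt{V_T \ln(2d)} \;+\; \bigl(2\ln(2d) + 3\bigr)\hat{E}_T.
\]

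\textbf{Step 3 (bound $V_T$ and $\hat{E}_T$ by gradient norms).} Since $|z_{j,t}^{\gamma}| = U |\nabla_j \ell_t(\hat{\bu}_t)| \leq U\norm[\nabla \ell_t(\hat{\bu}_t)]_\infty$, the pairwise differences satisfy $|z_{j,t}^{\gamma}-z_{k,t}^{\mu}| \leq 2U\norm[\nabla \ell_t(\hat{\bu}_t)]_\infty$; because $\hat{E}_T$ is the smallest power of $2$ dominating this maximum, $\hat{E}_T \leq 4U\max_{1\leq t\leq T}\norm[\nabla \ell_t(\hat{\bu}_t)]_\infty$. For the variance, the per-round term is a variance of a random variable taking values in an interval of length at most $2U\norm[\nabla \ell_t(\hat{\bu}_t)]_\infty$, hence at most $U^2\norm[\nabla \ell_t(\hat{\bu}_t)]_\infty^2$, so $V_T \leq U^2 \sum_{t=1}^T \norm[\nabla \ell_t(\hat{\bu}_t)]_\infty^2$. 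Plugging these two estimates into the display of Step 2 and tracking constants ($4\cdot U = 4U$ in front of the square root and $4\cdot(2\ln(2d)+3) = 8\ln(2d)+12$ in front of the max) gives the announced inequality. The ``in particular'' statement follows from the crude bound $\sum_t \norm[\nabla \ell_t(\hat{\bu}_t)]_\infty^2 \leq T\bigl(\max_t \norm[\nabla \ell_t(\hat{\bu}_t)]_\infty\bigr)^2$.

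The main obstacle is bookkeeping, not conceptual: one must verify that the specific constant $C=\sqrt{2(\sqrt{2}-1)/(\e-2)}$ in \eqref{eqn:eta-variable} and the dyadic definition of $\hat{E}_t$ match the hypotheses of the \cite{CeMaSt07SecOrder} bound exactly as stated, so that the constants $4$ and $2\ln(2d)+3$ in Step 2 can be imported without modification. Once that is checked, Steps 1 and 3 are elementary.
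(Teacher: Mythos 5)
Your proposal is correct and follows essentially the same route as the paper: linearize via convexity, use linearity on the polytope $B_1(U)$ to reduce to the $2d$ vertex experts with payoffs $z^{\gamma}_{j,t}=\gamma U\nabla_j\ell_t(\hat{\bu}_t)$, and import the second-order bound of \cite{CeMaSt07SecOrder} (the paper cites their Corollary~1 directly with the range bound $E_t\leq 2U\Norm{\nabla\ell_t(\hat{\bu}_t)}_{\infty}$, whereas you unpack it via $V_T$ and $\hat{E}_T$, arriving at the same constants). The only caveat you already flag yourself — verifying that the tuning \eqref{eqn:eta-variable} matches the hypotheses of that corollary — is handled identically in the paper by construction of the algorithm.
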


\begin{proof}
The proof follows straightforwardly from a linearization argument and from a regret bound of \cite{CeMaSt07SecOrder} applied to appropriately chosen loss vectors. Indeed, first note that by convexity and differentiability of $\ell_t:\R^d \to \R$ for all $t=1,\ldots,T$, we get that
\begin{align}
\sum_{t=1}^T \ell_t(\hat{\bu}_t) - \min_{\norm[\bu]_1 \leq U} \sum_{t=1}^T \ell_t(\bu) & = \max_{\norm[\bu]_1 \leq U} \sum_{t=1}^T \bigl(\ell_t(\hat{\bu}_t) - \ell_t(\bu) \bigr) \leq \max_{\norm[\bu]_1 \leq U} \sum_{t=1}^T \nabla \ell_t(\hat{\bu}_t) \cdot (\hat{\bu}_t - \bu) \nonumber \\
& = \max_{\substack{1 \leq j \leq d \\ \gamma \in \{+,-\}}} \sum_{t=1}^T \nabla \ell_t(\hat{\bu}_t) \cdot (\hat{\bu}_t - \gamma U \be_j) \label{eqn:introM-EG-eta-t-1} \\
& = \sum_{t=1}^T \sum_{\substack{1 \leq j \leq d \\ \gamma \in \{+,-\}}} p^{\gamma}_{j,t} \, \gamma U \nabla_j \ell_t(\hat \bu_t) - \min_{\substack{1 \leq j \leq d \\ \gamma \in \{+,-\}}} \sum_{t=1}^T \gamma U \nabla_j \ell_t(\hat \bu_t)~,\label{eqn:introM-EG-eta-t-2}
\end{align}
where \eqref{eqn:introM-EG-eta-t-1} follows by linearity of $\bu \mapsto \sum_{t=1}^T \nabla \ell_t(\hat{\bu}_t) \cdot (\hat{\bu}_t - \bu)$ on the polytope $B_1(U)$, and where \eqref{eqn:introM-EG-eta-t-2} follows from the particular choice of $\hat{\bu}_t$ in Figure~\ref{fig:algo-adaptiveEG}.

To conclude the proof, note that our choices of the weight vectors $\bp_t \in \cX_{2 d}$ in Figure~\ref{fig:algo-adaptiveEG} and of the time-varying parameter $\eta_t$ in \eqref{eqn:eta-variable} correspond to the exponentially weighted average forecaster of~\cite[Section~4.2]{CeMaSt07SecOrder} when it is applied to the loss vectors $\bigl(U \nabla_j \ell_t(\widehat{\bu}_t), - U \nabla_j \ell_t(\widehat{\bu}_t)\bigr)_{1 \leq j \leq d} \in \R^{2 d}$, $t=1,\ldots,T$. Since at time~$t$ the coordinates of the last loss vector lie in an interval of length $E_t \leq 2 U \norm[\nabla \ell_t(\hat{\bu}_t)]_{\infty}$, we get from \cite[Corollary~1]{CeMaSt07SecOrder} that
\begin{align*}
& \sum_{t=1}^T \sum_{\substack{1 \leq j \leq d \\ \gamma \in \{\pm
        1\}}} p^{\gamma}_{j,t} \, \gamma U \nabla_j \ell_t(\hat \bu_t) - \min_{\substack{1 \leq j \leq d \\ \gamma \in \{\pm
        1\}}} \sum_{t=1}^T \gamma U \nabla_j \ell_t(\hat \bu_t) \nonumber \\
& \qquad \leq  4 U \sqrt{\left(\sum_{t=1}^T \norm[\nabla \ell_t(\hat{\bu}_t)]_{\infty}^2\right) \ln(2d)} + U \, \bigl(8 \ln(2d) + 12 \bigr) \max_{1 \leq t \leq T} \norm[\nabla \ell_t(\hat{\bu}_t)]_{\infty}~.
\end{align*}
Substituting the last upper bound in \eqref{eqn:introM-EG-eta-t-2} concludes the proof.
\end{proof}

\subsection{Application to the square loss}
\label{sec:adaptiveEG-square}

In the particular case of the square loss $\ell_t(\bu) = (y_t - \bu \cdot \bx_t)^2$, the gradients are given by $\nabla \ell_t(\bu) = -2 (y_t - \bu \cdot \bx_t) \, \bx_t$ for all $\bu \in \R^d$. Applying Proposition~\ref{prop:adaptiveEGpm}, we get the following regret bound for the adaptive $\textrm{EG}^{\pm}$ algorithm.

\vspace{0.2cm}
\begin{corollary}[The adaptive $\textrm{EG}^{\pm}$ algorithm under the square loss]
\label{cor:introM-EGpm-square}
\ \\
Let $U>0$. Consider the online linear regression setting defined in the introduction. Then, the adaptive $\textrm{EG}^{\pm}$ algorithm (see Figure~\ref{fig:algo-adaptiveEG}) tuned with $U$ and applied to the loss functions $\ell_t: \bu \mapsto (y_t - \bu \cdot \bx_t)^2$ satisfies, for all individual sequences $(\bx_1,y_1),\ldots,(\bx_T,y_T) \in \R^d \times \R$,
\begin{align*}
& \sum_{t=1}^T (y_t - \hat{\bu}_t \cdot \bx_t)^2 - \min_{\norm[\bu]_1 \leq U} \sum_{t=1}^T (y_t - \bu \cdot \bx_t)^2 \\
& \qquad \leq 8 U X \sqrt{\left(\min_{\norm[\bu]_1 \leq U} \sum_{t=1}^T (y_t - \bu \cdot \bx_t)^2\right) \ln(2d)} + \bigl(137 \ln(2d) + 24\bigr) \, \bigl(UXY + U^2 X^2\bigr) \\
& \qquad \leq 8 U X Y \sqrt{T \ln(2d)} + \bigl(137 \ln(2d) + 24\bigr) \, \bigl(UXY + U^2 X^2\bigr)~,
\end{align*}
where the quantities $X \eqdef \max_{1 \leq t \leq T} \norm[\bx_t]_{\infty}$ and $Y \eqdef \max_{1 \leq t \leq T} |y_t|$ are unknown to the forecaster.
\end{corollary}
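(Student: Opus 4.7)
The plan is to apply Proposition~\ref{prop:adaptiveEGpm} directly with the square-loss functions $\ell_t(\bu)=(y_t-\bu\cdot\bx_t)^2$, whose gradients are $\nabla\ell_t(\bu)=-2(y_t-\bu\cdot\bx_t)\bx_t$. The two quantities I need to control from the right-hand side of Proposition~\ref{prop:adaptiveEGpm} are $\sum_{t=1}^T \|\nabla\ell_t(\hat\bu_t)\|_\infty^2$ and $\max_{1\le t\le T}\|\nabla\ell_t(\hat\bu_t)\|_\infty$. Using $\|\bx_t\|_\infty\le X$, each gradient satisfies $\|\nabla\ell_t(\hat\bu_t)\|_\infty = 2|y_t-\hat\bu_t\cdot\bx_t|\,\|\bx_t\|_\infty \le 2X\,|y_t-\hat\bu_t\cdot\bx_t|$. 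In particular, using $|\hat\bu_t\cdot\bx_t|\le\|\hat\bu_t\|_1\|\bx_t\|_\infty\le UX$ and $|y_t|\le Y$, this is uniformly bounded by $2X(Y+UX)$.

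First I would sum the squared-gradient inequality to obtain
\[
\sum_{t=1}^T \|\nabla\ell_t(\hat\bu_t)\|_\infty^2 \;\le\; 4X^2 \sum_{t=1}^T (y_t-\hat\bu_t\cdot\bx_t)^2 \;=\; 4X^2\,\hat L_T,
\]
where $\hat L_T\eqdef\sum_t(y_t-\hat\bu_t\cdot\bx_t)^2$. Substituting into Proposition~\ref{prop:adaptiveEGpm} produces an inequality of the form
\[
\hat L_T - L_T^* \;\le\; 8UX\sqrt{\hat L_T\,\ln(2d)} \;+\; 2UX(Y+UX)\bigl(8\ln(2d)+12\bigr),
\]
with $L_T^*\eqdef\min_{\|\bu\|_1\le U}\sum_t(y_t-\bu\cdot\bx_t)^2$. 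This already matches the first summand of the claimed bound, except that the $\sqrt{\phantom{L}}$ depends on $\hat L_T$ rather than $L_T^*$.

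The next step is the standard self-bounding trick: the inequality $\hat L_T \le L_T^* + a\sqrt{\hat L_T}+b$ with $a=8UX\sqrt{\ln(2d)}$ and $b=2UX(Y+UX)(8\ln(2d)+12)$, viewed as a quadratic in $\sqrt{\hat L_T}$, gives $\sqrt{\hat L_T}\le a+\sqrt{L_T^*+b}\le a+\sqrt{L_T^*}+\sqrt{b}$. Plugging this back yields
\[
\hat L_T - L_T^* \;\le\; a\sqrt{L_T^*} \;+\; a^2 \;+\; a\sqrt{b} \;+\; b,
\]
and a routine use of $2a\sqrt{b}\le a^2\cdot(\text{something})+b$ (or AM--GM on $a\sqrt{b}$) absorbs the cross term into the constants multiplying $UXY+U^2X^2$ and $\ln(2d)$. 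Collecting everything and using $Y+UX\le (UXY+U^2X^2)/(UX)\cdot(\ldots)$-style bookkeeping lets me verify the claimed constants $8$ and $137\ln(2d)+24$; this bookkeeping, and in particular squeezing the $a\sqrt{b}$ cross term into the stated form, is the only delicate point and is where I would be most careful. The second stated inequality then follows at once from the trivial bound $L_T^*\le TY^2$.

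Thus the whole proof is: (i) specialize Proposition~\ref{prop:adaptiveEGpm} to the square loss; (ii) bound gradient norms by $X$, $Y$, $U$; (iii) derive the $\sqrt{\hat L_T}$-form regret bound; (iv) invert via the self-bounded quadratic trick to produce a bound in $\sqrt{L_T^*}$; (v) finally use $L_T^*\le TY^2$. The main obstacle is purely the constant-tracking in step (iv), and no new probabilistic or combinatorial ingredient is required beyond Proposition~\ref{prop:adaptiveEGpm}.
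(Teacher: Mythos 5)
Your proposal is correct and follows essentially the same route as the paper: specialize Proposition~\ref{prop:adaptiveEGpm} to the square loss, bound $\Norm{\nabla\ell_t(\hat\bu_t)}_\infty^2 \leq 4X^2\ell_t(\hat\bu_t)$ and $\max_t\Norm{\nabla\ell_t(\hat\bu_t)}_\infty\leq 2X(Y+UX)$, invert the self-bounded quadratic via Lemma~\ref{lem:introM-solvingRegret}, and finish with $L_T^*\leq TY^2$. The only caveat is in the cross-term bookkeeping: plain AM--GM on $a\sqrt{b}$ inflates the additive constant beyond the stated $24$, whereas the paper absorbs $UX\sqrt{C\ln(2d)}$ entirely into the $\ln(2d)\,(UXY+U^2X^2)$ term by using $UX\leq\sqrt{U^2X^2+UXY}$ and $\sqrt{\ln(2d)}\geq\sqrt{\ln 2}$, which is what yields exactly $137\ln(2d)+24$.
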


\vspace{0.2cm}
Using the terminology of \cite{cesa-bianchi06prediction,CeMaSt07SecOrder}, the first bound of Corollary~\ref{cor:introM-EGpm-square} is an \emph{improvement for small losses}: it yields a small regret when the optimal cumulative loss $\min_{\norm[\bu]_1 \leq U} \sum_{t=1}^T (y_t - \bu \cdot \bx_t)^2$ is small. As for the second regret bound, it indicates that the adaptive $\textrm{EG}^{\pm}$ algorithm achieves approximately the regret bound of Theorem~\ref{thm:upperbound} in the regime $\kappa \leq 1$, \ie, $d \geq \sqrt{T} UX/(2Y)$. In this regime, our algorithm thus has a manageable computational complexity (linear in $d$ at each time $t$) and it is adaptive in $X$, $Y$, and $T$.\\

In particular, the above regret bound is similar\footnote{By Theorem~5.11 of \cite{KiWa97EGvsGD}, the original $\textrm{EG}^{\pm}$ algorithm satisfies the regret bound $2 U X \sqrt{2 B \ln(2d)} + 2 U^2 X^2 \ln(2d)$, where $B$ is an upper bound on $\min_{\norm[\bu]_1 \leq U} \sum_{t=1}^T (y_t - \bu \cdot \bx_t)^2$ (in particular, $B \leq T Y^2$). Note that our main regret term is larger by a multiplicative factor of~$2 \sqrt{2}$. However, contrary to \cite{KiWa97EGvsGD}, our algorithm does not require the prior knowledge of $X$ and $B$ --- or, alternatively, $X$, $Y$, and $T$.} to that of the original $\textrm{EG}^{\pm}$ algorithm \cite[Theorem~5.11]{KiWa97EGvsGD}, but it is obtained without prior knowledge of $X$, $Y$, and $T$. Note also that this bound is similar to that of the self-confident $p$-norm algorithm of~\cite{AuCeGe02Adaptive} with $p=2 \ln d$ (see Section~\ref{sec:intro-contributions}). The fact that we were able to get similar adaptivity and efficiency properties via exponential weighting corroborates the similarity that was already observed in a non-adaptive context between the original $\textrm{EG}^{\pm}$ algorithm and the $p$-norm algorithm (in the limit $p \to +\infty$ with an appropriate initial weight vector, or for $p$ of the order of $\ln d$ with a zero initial weight vector, {cf.} \cite{Ge-03-pNorm}).

\vspace{0.3cm}
\begin{proofref}{Corollary~\ref{cor:introM-EGpm-square}}
We apply Proposition~\ref{prop:adaptiveEGpm} with the square loss $\ell_t(\bu) = (y_t - \bu \cdot \bx_t)^2$. It yields
\begin{align}
& \sum_{t=1}^T \ell_t(\hat{\bu}_t) - \min_{\norm[\bu]_1 \leq U} \sum_{t=1}^T \ell_t(\bu) \nonumber \\
& \qquad \leq 4 U \sqrt{\left(\sum_{t=1}^T \norm[\nabla \ell_t(\hat{\bu}_t)]_{\infty}^2\right) \ln(2d)} + U \, \bigl(8 \ln(2d) + 12 \bigr) \max_{1 \leq t \leq T} \norm[\nabla \ell_t(\hat{\bu}_t)]_{\infty}~. \label{eqn:adaptiveEG-square-1}
\end{align}
Using the equality $\nabla \ell_t(\bu) = -2 (y_t - \bu \cdot \bx_t) \, \bx_t$ for all $\bu \in \R^d$, we get that, on the one hand, by the upper bound $\norm[\bx_t]_{\infty} \leq X$,
\begin{equation}
\label{introM-subquadratic}
\norm[\nabla \ell_t(\hat{\bu}_t)]_{\infty}^2 \leq 4 X^2 \ell_t(\hat{\bu}_t)~,
\end{equation}
and, on the other hand, $\max_{1 \leq t \leq T} \norm[\nabla  \ell_t(\hat{\bu}_t)]_{\infty} \leq  2(Y+UX)X$ (indeed, by H\"{o}lder's inequality, $ \bigl|\hat{\bu}_t \cdot \bx_t\bigr| \leq \norm[\hat{\bu}_t]_1 \norm[\bx_t]_{\infty} \leq U X$). Substituting the last two inequalities in~\eqref{eqn:adaptiveEG-square-1}, setting $\hat{L}_T \eqdef \sum_{t=1}^T \ell_t(\hat{\bu}_t)$ as well as $L_T^* \eqdef \min_{\norm[\bu]_1 \leq U} \sum_{t=1}^T \ell_t(\bu)$, we get that
\begin{align*}
\hat{L}_T \leq L_T^* + 8 U X \sqrt{\hat{L}_T \ln(2d)} + \underbrace{\bigl(16 \ln(2d) + 24 \bigr) \bigl(U X Y + U^2 X^2\bigr)}_{\eqdef \, C}~.
\end{align*}
Solving for $\hat{L}_T$ via Lemma~\ref{lem:introM-solvingRegret} in~\refapx{sec:chapL1-lemmas}, we get that
\begin{align*}
\hat{L}_T & \leq L_T^* + C + \left( 8 U X \sqrt{\ln(2d)} \right) \sqrt{L_T^* + C} + \left( 8 U X \sqrt{\ln(2d)} \right)^2 \\
& \leq L_T^* + 8 U X \sqrt{L_T^* \ln(2d)} + 8 U X \sqrt{C \ln(2d)} + 64 U^2 X^2 \ln(2d) + C~.
\end{align*}
Using that
\begin{align*}
UX \sqrt{C \ln(2d)} & = UX \ln(2d) \sqrt{\bigr(16+24/\ln(2d)\bigr) \bigl(U X Y + U^2 X^2\bigr)} \\
& \leq \sqrt{U^2X^2+UXY} \, \ln(2d) \, \sqrt{\bigr(16+24/\ln(2)\bigr) \bigl(U X Y + U^2 X^2\bigr)} \\
& = \sqrt{16+24/\ln(2)} \, \bigl(U X Y + U^2 X^2\bigr) \ln(2d)
\end{align*}
and performing some simple upper bounds concludes the proof of the first regret bound. The second one follows immediately by noting that $\min_{\norm[\bu]_1 \leq U} \sum_{t=1}^T (y_t - \bu \cdot \bx_t)^2 \leq \sum_{t=1}^T y_t^2 \leq T Y^2$ (since ${\bf 0} \in B_1(U)$).
\end{proofref}

\subsection{A refinement via Lipschitzification of the loss function}
\label{sec:loss-Lip}

In Corollary~\ref{cor:introM-EGpm-square} we used the adaptive $\textrm{EG}^{\pm}$ algorithm in conjunction with the square loss functions ${\ell_t: \bu \mapsto (y_t - \bu \cdot \bx_t)^2}$. In this section we use yet another instance of the adaptive $\textrm{EG}^{\pm}$ algorithm applied to a modification $\tilde{\ell}_t:\R^d \to \R$ of the square loss (or the $\alpha$-loss, see below) which is Lipschitz continuous with respect to~$\Norm{\cdot}_1$. This leads to slightly refined regret bounds; see Theorem~\ref{thm:ell1-EG-regret} below and Corollaries~\ref{cor:adaptiveEG-lip-square} and~\ref{cor:adaptiveEG-lip-higherCurvature} thereafter.

We first present the Lipschtizification technique; its use with the adaptive $\textrm{EG}^{\pm}$ algorithm is to be addressed in a few paragraphs. Since our analysis is generic enough to handle both the square loss and other loss functions with higher curvature, we consider below a slightly more general setting than online linear regression \emph{stricto sensu}. Namely, we fix a real number $\alpha \geq 2$ and assume that the predictions $\hat{y}_t$ of the forecaster and the base linear predictions $\bu \cdot \bx_t$ are scored with the $\alpha$-loss, \ie, with the loss functions $x \mapsto |y_t - x |^{\alpha}$ for all $t \geq 1$. The particular case of the square loss ($\alpha=2$) is considered in Corollary~\ref{cor:adaptiveEG-lip-square} below, while loss functions with higher curvature ($\alpha > 2$) are addressed in Corollary~\ref{cor:adaptiveEG-lip-higherCurvature}.

The Lipschitzification proceeds as follows. At each time
$t \geq 1$, we set
\[
B_t \eqdef \left(2^{\lceil \log_2 (\max_{1 \leq s \leq t-1} |y_s|^{\alpha}) \rceil}\right)^{1/\alpha}~,
\]
where $\lceil x \rceil \eqdef \min \{k \in \Z: k \geq x\}$ for all $x \in \R$. Note that $\max_{1 \leq s \leq t-1} |y_s| \leq  B_t \leq 2^{1/\alpha} \max_{1 \leq s \leq t-1} |y_s|$.
The modified (or \emph{Lipschitzified}) loss function $\widetilde{\ell}_t:\R^d \to \R$
is constructed as follows:
\begin{itemize}
\item if $|y_t| > B_t$, then
  \[
  \widetilde{\ell}_t(\bu) \eqdef 0 \quad \textrm{for all } \bu \in \R^d~;
  \]
\item if $|y_t| \leq B_t$, then $\widetilde{\ell}_t$ is the convex function that coincides with the loss function $\bu \mapsto |y_t - \bu \cdot \bx_t|^{\alpha}$ when $\big|\bu \cdot \bx_t\big| \leq B_t$ and is
  linear elsewhere.  An example of such function is shown in Figure~\ref{fig:semi-linearization} in the case where $\alpha=2$. It can be formally defined as
\begin{align*}
\widetilde{\ell}_t(\bu) \eqdef \left\{\begin{array}{ll}
	\big|y_t-\bu \cdot \bx_t\big|^{\alpha} &\quad \mbox{if} \quad \big|\bu \cdot \bx_t\big| \leq B_t ,\\
	\big|y_t-B_t\big|^{\alpha} + \alpha \big|y_t-B_t\big|^{\alpha-1} (\bu \cdot \bx_t - B_t) &\quad\mbox{if} \quad \bu \cdot \bx_t > B_t, \\
	\big|y_t+B_t\big|^{\alpha} - \alpha \big|y_t+B_t\big|^{\alpha-1} (\bu \cdot \bx_t + B_t) &\quad
	\mbox{if} \quad \bu \cdot \bx_t < - B_t.
  \end{array} \right.
\end{align*}
\end{itemize}

Observe that in both cases $|y_t| > B_t$ and $|y_t| \leq B_t$, the function $\widetilde{\ell}_t$ is continuously differentiable. By construction it is also Lipschitz continuous with respect to
$\Norm{\cdot}_1$ with an easy-to-control Lipschitz constant (see \refapx{sec:chapL1-proof-EG-regret}). Another key property that we can glean from Figure~\ref{fig:semi-linearization} is
that, when $|y_t| \leq B_t$, the modified loss function $\widetilde{\ell}_t:\R^d \to \R$ lies in between the $\alpha$-loss function $\bu \mapsto |y_t - \bu \cdot \bx_t|^{\alpha}$ and its clipped version:
\begin{equation}
  \label{eqn:Lip-properties}
  \forall \bu \in \R^d, \quad \bigl|y_t - [\bu \cdot \bx_t]_{B_t} \bigr|^{\alpha} \leq \tilde{\ell}_t(\bu) \leq \bigl|y_t - \bu \cdot \bx_t \bigr|^{\alpha}~,
\end{equation}
where the clipping operator $[\cdot]_B$ is defined by $[x]_B \eqdef \min\bigl\{B,\max\{-B,x\}\bigr\}$ for all $x \in \R$ and all $B>0$.\\

\begin{figure}[htp]
  \vspace{-0.5cm}
  \begin{center}
   	\includegraphics[scale=0.8]{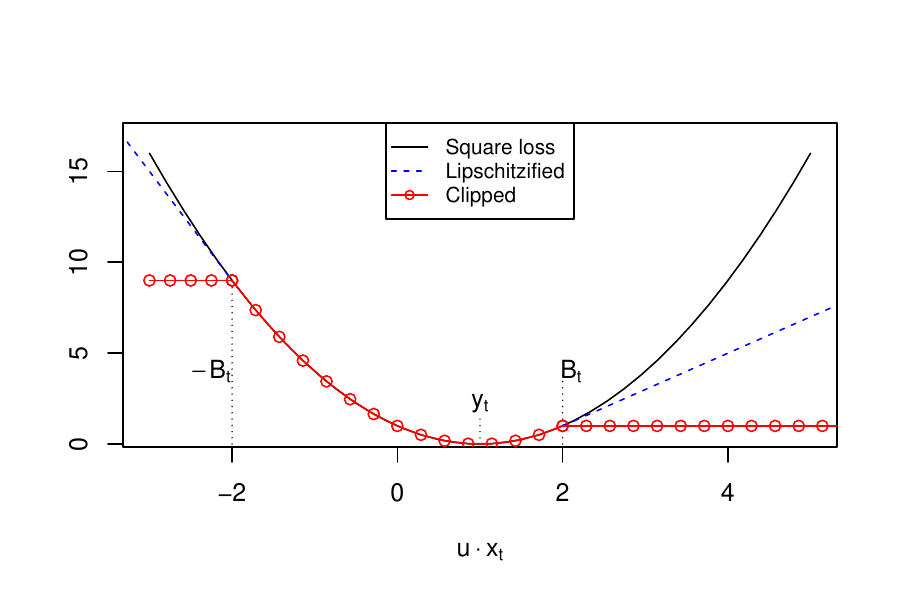}
  \end{center}
  \vspace{-0.8cm}
  \caption{\label{fig:semi-linearization} Example with the square loss ($\alpha=2$) when $|y_t| \leq B_t$. The square loss
    $(y_t - \bu \cdot \bx_t)^2$, its clipped version $\bigr(y_t - [\bu
    \cdot \bx_t]_{B_t}\bigr)^2$ and its Lipschitzified version
    $\tilde{\ell}_t(\bu)$ are plotted as a function of $\bu \cdot
    \bx_t$.}
\end{figure}

Next we illustrate the Lipschitzification technique introduced above: we apply the adaptive $\textrm{EG}^{\pm}$ algorithm to the Lipschitzified loss functions $\tilde{\ell}_t$. The resulting algorithm is called  the \emph{Lipschitzifying Exponentiated Gradient} (LEG) algorithm and is formally defined in Figure~\ref{fig:algo-LEG}. Recall that $({\bf e}_j)_{1 \leq j \leq d}$ denotes the canonical basis of $\R^d$ and that $\nabla_j$ denotes the $j$-th component of the gradient.

\begin{figure}[ht]
  \begin{center}
    \bookbox{
      {\bf Parameter}: radius $U > 0$. \\[0.2cm]
      {\bf Initialization}: $B_1 \eqdef 0$, $\bp_1 = (p^+_{1,1},p^-_{1,1},\ldots,p^+_{d,1},p^-_{d,1}) \eqdef \bigl(1/(2d), \ldots, 1/(2d)\bigr) \in \R^{2d}$. \\[0.2cm]
      {\bf At each time round $t \geqslant 1$},
      \begin{enumerate}
      \item Compute the linear combination $\displaystyle{\widehat{\bu}_{t} \eqdef U \sum_{j=1}^d \bigl(p^+_{j,t} - p^-_{j,t}\bigr) \, \be_j \in B_1(U)}$;
      \vspace{-0.2cm}
      \item Get $\bx_t \in \R^d$ and output the clipped prediction
        $\widehat{y}_t \eqdef \bigl[\widehat{\bu}_t \cdot
        \bx_t\bigr]_{B_t}$;
      \item Get $y_t \in \R$ and define the modified loss function
        $\widetilde{\ell}_t:\R^d \to \R$ as above;
      \item Update the parameter $\eta_{t+1}$ according to \eqref{eqn:eta-variable};
      \item Update the weight vector $\bp_{t+1} = (p^+_{1,t+1},p^-_{1,t+1},\ldots,p^+_{d,t+1},p^-_{d,t+1}) \in \cX_{2d}$ defined for all $j=1,\ldots,d$ and $\gamma \in \{+,-\}$ by\footnote{For all $\gamma \in \{+,-\}$, by a slight abuse of notation, $\gamma U$ denotes $U$ or $-U$ if $\gamma=+$ or $\gamma=-$ respectively. }
      	\[
        p^{\gamma}_{j,t+1} \eqdef \frac{\displaystyle{\exp\!\left(- \eta_{t+1} \sum_{s=1}^{t} \gamma U \nabla_j \tilde{\ell}_s(\hat{\bu}_s)\right)}}{\displaystyle{\sum_{\substack{1 \leq k \leq d \\ \mu \in\{+,-\}}} \exp\!\left(- \eta_{t+1} \sum_{s=1}^{t} \mu U \nabla_k  \tilde{\ell}_s(\hat{\bu}_s)\right)}}~.
      	\]
      \item Update the threshold $\quad B_{t+1} \eqdef \left(2^{\lceil
            \log_2( \max_{1 \leq s \leq t} |y_s|^{\alpha})
            \rceil}\right)^{1/\alpha}$~.
      \end{enumerate}
    }
  \end{center}
  \vspace{-0.5cm}
  \caption{\label{fig:algo-LEG} The Lipschitzifying
    Exponentiated Gradient (LEG) algorithm.}
\end{figure}

We point out that this technique is not specific to the pair of dual norms $(\norm[\cdot]_1,\norm[\cdot]_{\infty})$ and to the $\textrm{EG}^{\pm}$ algorithm; it could be used with other pairs $(\norm[\cdot]_q,\norm[\cdot]_p)$ (with $1/p+1/q=1$) and other gradient-based algorithms, such as the $p$-norm algorithm \cite{Ge-03-pNorm,AuCeGe02Adaptive} and its regularized variants (SMIDAS and COMID) \cite{ShaTe-09-StochasticL1Regularization,DuShaSiTe-10-CompostiveObjectiveMD}.

The next theorem bounds the cumulative $\alpha$-loss of the LEG algorithm. The proof is postponed to \refapx{sec:chapL1-proof-EG-regret}. It follows from the bound on the adaptive $\textrm{EG}^{\pm}$ algorithm for general convex and differentiable loss functions that we derived in Proposition~\ref{prop:adaptiveEGpm} (Section~\ref{sec:chapL1-adaptiveEGpm}). See Corollaries~\ref{cor:adaptiveEG-lip-square} and~\ref{cor:adaptiveEG-lip-higherCurvature} below for regret bounds in the particular cases of the square loss ($\alpha=2$) or of losses with higher curvature ($\alpha>2$).

\vspace{0.3cm}
\begin{theorem}
  \label{thm:ell1-EG-regret}
  Assume that the predictions are scored with the $\alpha$-loss $x \mapsto |y_t - x|^{\alpha}$, where $\alpha \geq 2$ is a real number. Let $U > 0$. Then, the
  LEG algorithm defined in Figure~\ref{fig:algo-LEG} and tuned with~$U$
  satisfies, for all $T \geq 1$ and all individual sequences
  $(\bx_1, y_1),$ $\ldots,(\bx_T, y_T) \in \R^d \times \R$,
  \begin{align*}
    \sum_{t=1}^T \abs[y_t - \hat y_t]^{\alpha} & \leq \inf_{\Norm{\bu}_1 \leq U} \sum_{t=1}^T \tilde{\ell}_t(\bu) + a_{\alpha} U X Y^{\alpha/2-1} \sqrt{\left(\inf_{\Norm{\bu}_1 \leq U} \sum_{t=1}^T \tilde{\ell}_t(\bu)\right) \ln(2d)} \\[0.2cm]
    & \qquad  + \Bigl(a'_{\alpha} \ln(2d) + 12 b_{\alpha}\Bigr) \, U X Y^{\alpha-1} + a''_{\alpha} \ln(2d) \, U^2 X^2 Y^{\alpha-2} + a'''_{\alpha} \, Y^{\alpha}~,
  \end{align*}
where the Lipschitzified loss functions $\tilde{\ell}_t$ are defined above, where the quantities $X \eqdef \max_{1 \leq t \leq T} \norm[\bx_t]_{\infty}$ and $Y \eqdef \max_{1 \leq t \leq T} |y_t|$ are unknown to the forecaster, and where, setting $a_{\alpha} \eqdef 4 \alpha \, \bigl(1+2^{1/\alpha}\bigr)^{\alpha/2-1}$ and $b_{\alpha} \eqdef \alpha \, \bigl(1+2^{1/\alpha}\bigr)^{\alpha-1}$, the constants $a'_{\alpha}, a''_{\alpha}, a'''_{\alpha} > 0$ are defined by
\begin{align*}
\left\{ \begin{array}{l}
	a'_{\alpha} \eqdef a_{\alpha} \left( \sqrt{b_{\alpha} \bigl(4+6/\ln 2 \bigr)} + 2 \bigl(1+2^{-1/\alpha}\bigr)^{\alpha/2} / \sqrt{\ln 2} \right) + 8 b_{\alpha}\\
a''_{\alpha} \eqdef a_{\alpha} \left( \sqrt{b_{\alpha} \bigl(4+6/\ln 2 \bigr)} + a_{\alpha} \right)\\
a'''_{\alpha} \eqdef  4 \bigl(1+2^{-1/\alpha}\bigr)^{\alpha}~.
\end{array}\right.
\end{align*}
\end{theorem}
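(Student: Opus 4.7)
The plan is to apply Proposition~\ref{prop:adaptiveEGpm} to the Lipschitzified losses $\tilde\ell_t$ (on which the LEG algorithm is exactly the adaptive $\textrm{EG}^{\pm}$ scheme) and then transfer the resulting control of $\sum_t \tilde\ell_t(\hat\bu_t)$ into a bound on $R_T \eqdef \sum_t |y_t - \hat y_t|^\alpha$. The proof splits rounds into \emph{good rounds} $|y_t|\leq B_t$ and \emph{overflow rounds} $|y_t|>B_t$. On a good round, the sandwich property~\eqref{eqn:Lip-properties} gives $|y_t - \hat y_t|^\alpha = |y_t - [\hat\bu_t\cdot\bx_t]_{B_t}|^\alpha \leq \tilde\ell_t(\hat\bu_t)$. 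On an overflow round, $\tilde\ell_t\equiv 0$ so Proposition~\ref{prop:adaptiveEGpm} is vacuous there; but $|\hat y_t|\leq B_t$ and $|y_t|\leq B_{t+1}$ combined with the dyadic inequality $B_t\leq 2^{-1/\alpha}B_{t+1}$ give $|y_t-\hat y_t|^\alpha \leq (1+2^{-1/\alpha})^\alpha B_{t+1}^\alpha$. Since $B_t^\alpha$ at least doubles at every overflow round while $B_{t+1}^\alpha\leq 2Y^\alpha$, summing the resulting geometric series bounds the total overflow contribution by $4(1+2^{-1/\alpha})^\alpha Y^\alpha = a'''_\alpha Y^\alpha$.

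The key technical step is to bound $\Norm{\nabla\tilde\ell_t(\hat\bu_t)}_\infty^2$ in terms of $|y_t-\hat y_t|^\alpha$ itself (not merely $\tilde\ell_t(\hat\bu_t)$). In the quadratic region $|\hat\bu_t\cdot\bx_t|\leq B_t$ no clipping occurs, so $\hat y_t = \hat\bu_t\cdot\bx_t$ and differentiation gives $\Norm{\nabla\tilde\ell_t(\hat\bu_t)}_\infty^2 = \alpha^2|y_t-\hat y_t|^{\alpha-2}\Norm{\bx_t}_\infty^2 \cdot |y_t-\hat y_t|^\alpha$. In the linear region $|\hat\bu_t\cdot\bx_t|>B_t$, $\nabla\tilde\ell_t$ is constant of magnitude $\alpha|y_t\mp B_t|^{\alpha-1}\Norm{\bx_t}_\infty$ while $\hat y_t=\pm B_t$, so $|y_t-\hat y_t|=|y_t\mp B_t|$ and the same identity holds. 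Using $|y_t-\hat y_t|\leq|y_t|+B_t\leq(1+2^{1/\alpha})Y$ on good rounds, this yields
\[
\Norm{\nabla\tilde\ell_t(\hat\bu_t)}_\infty^2 \leq (a_\alpha/4)^2 X^2 Y^{\alpha-2}|y_t-\hat y_t|^\alpha, \quad \Norm{\nabla\tilde\ell_t(\hat\bu_t)}_\infty \leq b_\alpha X Y^{\alpha-1},
\]
both holding trivially on overflow rounds where $\nabla\tilde\ell_t\equiv 0$. Substituting into Proposition~\ref{prop:adaptiveEGpm} and using $\sum_t\tilde\ell_t(\hat\bu_t)\geq R_T-a'''_\alpha Y^\alpha$ from the first paragraph yields the self-referential inequality
\[
R_T \leq L_T^* + a'''_\alpha Y^\alpha + a_\alpha UXY^{\alpha/2-1}\sqrt{R_T\ln(2d)} + (8\ln(2d)+12)b_\alpha UXY^{\alpha-1},
\]
where $L_T^*\eqdef\inf_{\Norm{\bu}_1\leq U}\sum_t\tilde\ell_t(\bu)$.

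Set $A\eqdef a_\alpha UXY^{\alpha/2-1}\sqrt{\ln(2d)}$ and let $C_0$ denote the constant remainder above. Lemma~\ref{lem:introM-solvingRegret} gives $R_T\leq C_0+A\sqrt{C_0}+A^2$; splitting $\sqrt{C_0}$ via $\sqrt{P+Q+R}\leq\sqrt P+\sqrt Q+\sqrt R$ produces three pieces. The piece $A\sqrt{L_T^*}$ is the main $a_\alpha UXY^{\alpha/2-1}\sqrt{L_T^*\ln(2d)}$ term. The piece $A\sqrt{a'''_\alpha Y^\alpha}=2a_\alpha(1+2^{-1/\alpha})^{\alpha/2}UXY^{\alpha-1}\sqrt{\ln(2d)}$ becomes the $2a_\alpha(1+2^{-1/\alpha})^{\alpha/2}/\sqrt{\ln 2}\cdot UXY^{\alpha-1}\ln(2d)$ contribution to $a'_\alpha$ after $\sqrt{\ln(2d)}\leq\ln(2d)/\sqrt{\ln 2}$ (for $d\geq 1$). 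The piece $A\sqrt{(8\ln(2d)+12)b_\alpha UXY^{\alpha-1}}$ is handled by $(8\ln(2d)+12)\ln(2d)\leq 2(4+6/\ln 2)\ln^2(2d)$ followed by a weighted AM-GM $U^{3/2}X^{3/2}Y^{(2\alpha-3)/2}\leq(\lambda U^2X^2Y^{\alpha-2}+UXY^{\alpha-1}/\lambda)/2$ with $\lambda=\sqrt 2$, contributing an $a_\alpha\sqrt{b_\alpha(4+6/\ln 2)}$ factor to both $a''_\alpha$ (alongside the $a_\alpha^2$ coming from $A^2$) and the remaining part of $a'_\alpha$. The principal obstacle is the gradient bound of the second paragraph: its uniform control across both regions of $\tilde\ell_t$ in terms of $|y_t-\hat y_t|^\alpha$ is what converts the overflow correction $a'''_\alpha Y^\alpha$ from a purely additive error into the cross-term $A\sqrt{a'''_\alpha Y^\alpha}$, producing the unusual constant $2(1+2^{-1/\alpha})^{\alpha/2}/\sqrt{\ln 2}$ in $a'_\alpha$.
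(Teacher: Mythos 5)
Your proposal is correct and follows essentially the same route as the paper's proof: the good/overflow round decomposition with the geometric-series bound $a'''_\alpha Y^\alpha$, the gradient bounds $\Norm{\nabla\tilde{\ell}_t(\hat{\bu}_t)}_\infty^2 \leq (a_\alpha/4)^2 X^2 Y^{\alpha-2}\abs[y_t-\hat{y}_t]^\alpha$ and $\Norm{\nabla\tilde{\ell}_t(\hat{\bu}_t)}_\infty \leq b_\alpha X Y^{\alpha-1}$ fed into Proposition~\ref{prop:adaptiveEGpm}, the self-referential inequality solved via Lemma~\ref{lem:introM-solvingRegret}, and the same splitting of $\sqrt{C_1+C_2}$ with $\sqrt{\ln(2d)}\leq\ln(2d)/\sqrt{\ln 2}$ and an AM--GM step to recover $a'_\alpha$, $a''_\alpha$. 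The only deviation is cosmetic (your weighted AM--GM with $\lambda=\sqrt{2}$ versus the paper's $\sqrt{ab}\leq(a+b)/\sqrt{2}$), which yields the same or slightly tighter constants.
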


\vspace{0.3cm}
\begin{corollary}[Application to the square loss]
\label{cor:adaptiveEG-lip-square}
  Consider the online linear regression setting under the square loss (\ie, $\alpha=2$). Let $U > 0$. Then, the
  LEG algorithm defined in Figure~\ref{fig:algo-LEG} and tuned with~$U$
  satisfies, for all $T \geq 1$ and all individual sequences
  $(\bx_1, y_1),$ $\ldots,(\bx_T, y_T) \in \R^d \times \R$,
  \begin{align*}
    \sum_{t=1}^T (y_t - \hat y_t)^2 & \leq \inf_{\Norm{\bu}_1 \leq U} \sum_{t=1}^T \tilde{\ell}_t(\bu) + 8 U X \sqrt{\left(\inf_{\Norm{\bu}_1 \leq U} \sum_{t=1}^T \tilde{\ell}_t(\bu)\right) \ln(2d)} \\
    & \qquad  + \bigl(134 \ln(2d) + 58\bigr) \, \bigl(UXY + U^2 X^2\bigr) +  12 Y^2~,
  \end{align*}
where the Lipschitzified loss functions $\tilde{\ell}_t$ are defined above and where the quantities $X \eqdef \max_{1 \leq t \leq T} \norm[\bx_t]_{\infty}$ and $Y \eqdef \max_{1 \leq t \leq T} |y_t|$ are unknown to the forecaster.
\end{corollary}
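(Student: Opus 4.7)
The plan is to obtain Corollary~\ref{cor:adaptiveEG-lip-square} as a direct specialization of Theorem~\ref{thm:ell1-EG-regret} to the exponent $\alpha = 2$, followed by explicit evaluation and consolidation of the numerical constants. Since the theorem's bound already has the right structural shape (a main $\sqrt{L^\ast_T \ln(2d)}$ term, a $UXY$ term, a $U^2X^2$ term, and a residual $Y^\alpha$ term), no additional analytic argument is needed; the work lies entirely in the arithmetic.

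First I would plug $\alpha = 2$ into the definitions of $a_\alpha$ and $b_\alpha$. This gives $a_2 = 4 \cdot 2 \cdot (1+2^{1/2})^{0} = 8$ and $b_2 = 2(1+\sqrt{2})$. With $\alpha/2-1 = 0$, the main term $a_\alpha UXY^{\alpha/2-1}\sqrt{(\cdots)\ln(2d)}$ collapses to exactly $8UX\sqrt{(\inf_{\Norm{\bu}_1\leq U}\sum_t \tilde{\ell}_t(\bu))\ln(2d)}$, matching the corollary's main term verbatim. Similarly $Y^{\alpha-1} = Y$, $Y^{\alpha-2} = 1$, and $Y^\alpha = Y^2$, so the remaining terms become $\big(a'_2\ln(2d) + 12 b_2\big) UXY + a''_2 \ln(2d) U^2X^2 + a'''_2 Y^2$, which I then merge.

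Next I would bound the three remaining constants numerically. Using $\ln 2 \approx 0.693$ one checks that $b_2(4+6/\ln 2) \leq 62$ so $\sqrt{b_2(4+6/\ln 2)} \leq 8$, and $2(1+2^{-1/2})/\sqrt{\ln 2} \leq 5$. This yields $a'_2 = 8\bigl(\sqrt{b_2(4+6/\ln 2)} + 2(1+2^{-1/2})/\sqrt{\ln 2}\bigr) + 8 b_2 \leq 8(8+5) + 16(1+\sqrt{2}) \leq 134$ and $a''_2 = 8\bigl(\sqrt{b_2(4+6/\ln 2)} + 8\bigr) \leq 128 \leq 134$. Taking the common upper bound $134\ln(2d)$ for the coefficients of both $UXY$ and $U^2X^2$ allows me to factor them as $\bigl(134\ln(2d)\bigr)(UXY + U^2X^2)$. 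Then $12 b_2 = 24(1+\sqrt{2}) \leq 58$, so adding $12 b_2 \, UXY \leq 58(UXY + U^2X^2)$ keeps us within the stated coefficient $134\ln(2d) + 58$. Finally, $a'''_2 = 4(1+2^{-1/2})^2 = 4(3/2 + \sqrt{2}) = 6 + 4\sqrt{2} \leq 12$, giving the residual $12 Y^2$.

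The bound of Corollary~\ref{cor:adaptiveEG-lip-square} then follows immediately, and the unknown-to-the-forecaster status of $X$ and $Y$ is inherited from Theorem~\ref{thm:ell1-EG-regret} since the LEG algorithm only uses the observed sequence to update $B_t$ and the $\eta_t$ of \eqref{eqn:eta-variable}. There is no real obstacle here; the one point requiring a bit of care is confirming that the single coefficient $134\ln(2d)+58$ in the corollary genuinely dominates \emph{both} the $UXY$ and $U^2X^2$ contributions of the theorem simultaneously, which is what the arithmetic above establishes by taking $\max\{a'_2, a''_2\}\leq 134$ and $12 b_2 \leq 58$.
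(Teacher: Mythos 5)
Your route is exactly the intended one: the corollary is a direct specialization of Theorem~\ref{thm:ell1-EG-regret} to $\alpha=2$ (the paper gives no separate proof), and your structural reduction, the identification $a_2=8$, $b_2=2(1+\sqrt 2)$, the bounds $a''_2\le 128$, $12b_2=24(1+\sqrt2)\le 58$, and $a'''_2=6+4\sqrt2\le 12$ are all correct, as is the final regrouping into $\bigl(134\ln(2d)+58\bigr)(UXY+U^2X^2)$.

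There is, however, one arithmetic step that does not go through as written: your verification of $a'_2\le 134$. With your roundings $\sqrt{b_2(4+6/\ln 2)}\le 8$ and $2(1+2^{-1/2})/\sqrt{\ln 2}\le 5$, you get $a'_2\le 8(8+5)+16(1+\sqrt2)=104+38.63\approx 142.6$, which is \emph{not} $\le 134$; the displayed inequality $8(8+5)+16(1+\sqrt 2)\le 134$ is false. The claim itself is salvageable but only barely: computing more precisely, $b_2(4+6/\ln 2)\approx 61.11$ so $\sqrt{b_2(4+6/\ln 2)}\le 7.818$, and $2(1+2^{-1/2})/\sqrt{\ln 2}\le 4.101$, whence $a'_2\le 8(7.818+4.101)+16(1+\sqrt 2)\approx 95.35+38.63\approx 133.98\le 134$. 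So the constant $134$ is essentially tight and your looser intermediate bounds lose too much; replace them with the sharper numerical estimates above and the proof is complete.
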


\vspace{0.4cm}
Note that, in the case of the square loss, the first two terms of the bound of Corollary~\ref{cor:adaptiveEG-lip-square} slightly improve on those obtained without Lipschitzification ({cf.}\ Corollary~\ref{cor:introM-EGpm-square}) since we always have
\[
\inf_{\Norm{\bu}_1 \leq U} \sum_{t=1}^T \tilde{\ell}_t(\bu) \leq \inf_{\norm[\bu]_1 \leq U} \sum_{t=1}^T (y_t - \bu \cdot \bx_t)^2~,
\]
where we used the key property $\tilde{\ell}_t(\bu) \leq (y_t - \bu \cdot \bx_t)^2$ that holds for all $\bu \in \R^d$ and all $t=1,\ldots,T$ (by \eqref{eqn:Lip-properties} if $|y_t| \leq B_t$, obvious otherwise). In particular, the LEG algorithm is adaptive in $X$, $Y$, and $T$; it achieves approximately --- and efficiently~--- the regret bound of Theorem~\ref{thm:upperbound} in the regime $\kappa \leq 1$, \ie, $d \geq \sqrt{T} UX/(2Y)$.
\\
%

In the case of $\alpha$-losses with a higher curvature than that of the square loss ($\alpha > 2$), the improvement is more substantial as indicated after the following corollary.

\vspace{0.3cm}
\begin{corollary}[Application to $\alpha$-losses with $\alpha>2$]
\label{cor:adaptiveEG-lip-higherCurvature}
 Assume that the predictions are scored with the $\alpha$-loss $x \mapsto |y_t - x|^{\alpha}$, where $\alpha > 2$. Then, the regret of
the LEG algorithm on $B_1(U)$ is at most of the order of
\[
U X Y^{\alpha-1} \sqrt{T \ln(2d)} + \Bigl( U X Y^{\alpha-1} + U^2 X^2 Y^{\alpha-2} \Bigr) \ln(2d) + Y^{\alpha}~,
\]
where $X \eqdef \max_{1 \leq t \leq T} \norm[\bx_t]_{\infty}$ and $Y \eqdef \max_{1 \leq t \leq T} |y_t|$ are unknown to the forecaster. The above regret bound improves on the bound we would have obtained via a similar analysis for the adaptive $\textrm{EG}^{\pm}$ algorithm applied to the original losses $\ell_t(\bu) = | y_t - \bu \cdot \bx_t|^{\alpha}$ (without Lipschitzification), namely, a bound of the order of
\[
U X (Y+UX)^{\alpha/2-1} \, Y^{\alpha/2} \sqrt{T \ln(2d)} + \Bigl( U X (Y+UX)^{\alpha-1} \, + U^2 X^2 (Y+UX)^{\alpha-2} \Bigr) \ln(2d)~.
\]
\end{corollary}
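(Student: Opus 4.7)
The plan is to establish the two announced bounds separately, by specialising tools already developed. The first (LEG) bound is an immediate consequence of Theorem~\ref{thm:ell1-EG-regret}: the only residual work is to estimate the data-dependent quantity $\inf_{\|\bu\|_1 \leq U} \sum_{t=1}^T \tilde{\ell}_t(\bu)$ appearing on its right-hand side. Evaluating at $\bu = \bzero \in B_1(U)$, and using that $\tilde{\ell}_t(\bzero) = |y_t|^\alpha$ when $|y_t| \leq B_t$ and $\tilde{\ell}_t(\bzero) = 0$ otherwise, gives $\inf_{\|\bu\|_1 \leq U} \sum_{t=1}^T \tilde{\ell}_t(\bu) \leq T Y^\alpha$. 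Substituting this into Theorem~\ref{thm:ell1-EG-regret} collapses the $\sqrt{T}$-type term into $a_\alpha\, U X Y^{\alpha-1}\sqrt{T \ln(2d)}$, and the remaining additive terms are already of the announced order $\bigl(UXY^{\alpha-1} + U^2 X^2 Y^{\alpha-2}\bigr)\ln(2d) + Y^\alpha$.

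For the second (non-Lipschitzified) bound, the strategy mirrors the proof of Corollary~\ref{cor:introM-EGpm-square}. Starting from the gradients $\nabla \ell_t(\bu) = -\alpha |y_t - \bu\cdot\bx_t|^{\alpha-1}\,\mathrm{sgn}(y_t - \bu\cdot\bx_t)\,\bx_t$ and the H\"older bound $|\hat{\bu}_t\cdot\bx_t|\leq UX$, I would extract two estimates: the uniform bound $\|\nabla \ell_t(\hat{\bu}_t)\|_\infty \leq \alpha X(Y+UX)^{\alpha-1}$, and, crucially, the sub-quadratic analogue of~\eqref{introM-subquadratic},
\[
\|\nabla \ell_t(\hat{\bu}_t)\|_\infty^2 \leq \alpha^2 X^2 (Y+UX)^{\alpha-2}\, \ell_t(\hat{\bu}_t),
\]
obtained by factoring $|y_t - \hat{\bu}_t\cdot\bx_t|^{2\alpha-2} = |y_t - \hat{\bu}_t\cdot\bx_t|^{\alpha-2}\cdot|y_t - \hat{\bu}_t\cdot\bx_t|^{\alpha}$ and uniformly bounding the first factor by $(Y+UX)^{\alpha-2}$ (legitimate since $\alpha \geq 2$). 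This is what enables a bound that scales with $\sqrt{L_T^*}$ rather than with $\sqrt{T}$ times an overly pessimistic factor.

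Plugging these two estimates into Proposition~\ref{prop:adaptiveEGpm} yields an inequality of the form $\hat{L}_T \leq L_T^* + A\sqrt{\hat{L}_T} + B$ with $A$ of order $\alpha UX(Y+UX)^{\alpha/2-1}\sqrt{\ln(2d)}$ and $B$ of order $\alpha UX(Y+UX)^{\alpha-1}\ln(2d)$. Solving for $\hat{L}_T$ via Lemma~\ref{lem:introM-solvingRegret} and bounding $L_T^* \leq TY^\alpha$ (since $\bzero \in B_1(U)$) produces the main term $A\sqrt{TY^\alpha}$, of order $UX(Y+UX)^{\alpha/2-1}Y^{\alpha/2}\sqrt{T\ln(2d)}$, together with residual terms $A^2 + B + A\sqrt{B}$; the latter collapse via AM--GM into the announced $\bigl(UX(Y+UX)^{\alpha-1} + U^2X^2(Y+UX)^{\alpha-2}\bigr)\ln(2d)$. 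The only step requiring genuine care is not conceptual but purely algebraic, namely keeping track of constants through Lemma~\ref{lem:introM-solvingRegret} and checking that the ``off-diagonal'' term $A\sqrt{B}$ is dominated by $A^2+B$. The substantive point to underline at the end is the qualitative comparison: Lipschitzification replaces $(Y+UX)^{\alpha/2-1}Y^{\alpha/2}$ by $Y^{\alpha-1}$ in the dominant term, saving a factor of order $\bigl((Y+UX)/Y\bigr)^{\alpha/2-1}$, which --- in terms of the radius --- turns the naive $U^{\alpha/2}$ into $U$ whenever $UX \gtrsim Y$.
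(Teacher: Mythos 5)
Your derivation is correct and is exactly the route the paper intends: the first bound follows from Theorem~\ref{thm:ell1-EG-regret} by bounding $\inf_{\Norm{\bu}_1\le U}\sum_{t}\tilde{\ell}_t(\bu)\le TY^{\alpha}$ inside the square root (via $\bu=\boldsymbol{0}$), and the second from rerunning the proof of Corollary~\ref{cor:introM-EGpm-square} with precisely the two gradient estimates you give, then solving the quadratic inequality via Lemma~\ref{lem:introM-solvingRegret}. The only step you leave implicit is that passing from the cumulative-loss bound of Theorem~\ref{thm:ell1-EG-regret} to a genuine regret bound also requires $\tilde{\ell}_t(\bu)\le|y_t-\bu\cdot\bx_t|^{\alpha}$ (from \eqref{eqn:Lip-properties} when $|y_t|\le B_t$, and trivially otherwise) so that the leading term $\inf_{\Norm{\bu}_1\le U}\sum_{t}\tilde{\ell}_t(\bu)$ can be replaced by the comparator $\inf_{\Norm{\bu}_1\le U}\sum_{t}|y_t-\bu\cdot\bx_t|^{\alpha}$; this is the same observation the paper records after Corollary~\ref{cor:adaptiveEG-lip-square}.
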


\vspace{0.3cm}
The main difference between the two regret bounds above lies in the dependence in $U$: our main regret term scales as $U X Y^{\alpha-1}$ while the one obtained without Lipschitzification scales as $U X (Y+UX)^{\alpha/2-1} \, Y^{\alpha/2}$. The first term grows linearly in $U$ while the second one grows as $U^{\alpha/2}$, hence a clear improvement for $\alpha > 2$. The last property stems from the fact that, thanks to Lipschitzification, the gradients $\Norm{\nabla \tilde{\ell}_t}_{\infty}$ are bounded as $U \to +\infty$ ({cf.\ \eqref{eqn:Lip-constant} in \refapx{sec:chapL1-proof-EG-regret}).

\vspace{0.3cm}
\begin{remark}[Another benefit of Lipschitzification]
\label{rem:chapL1-simpler-analysis}
\ \\
Another benefit of Lipschitzification is that all online convex optimization regret bounds expressed in terms of the maximal dual norm of the gradients --- \ie, $\max_{1 \leq t \leq T} \Arrowvert \nabla \tilde{\ell}_t \Arrowvert_{\infty}$ in our case --- can be used fruitfully with the Lipschitzified loss functions $\tilde{\ell}_t$. For instance, in the case of the square loss, using the very last bound of Proposition~\ref{prop:adaptiveEGpm}, we get that
  \begin{align*}
    \sum_{t=1}^T (y_t - \hat y_t)^2 - \inf_{\Norm{\bu}_1 \leq U} \sum_{t=1}^T (y_t - \bu \cdot \bx_t)^2 & \leq  c_1 U X Y \left(\sqrt{T \ln (2 d)} + 8 \ln (2
      d)\right) + c_2 Y^2~,
  \end{align*}
where $c_1 \eqdef 8 \bigl(\sqrt{2}+1\bigr)$ and $c_2 \eqdef 4\left(1+1/\sqrt{2}\right)^2$. The bound is no longer an improvement for small losses (as compared to Corollary~\ref{cor:introM-EGpm-square}), but it does not require to solve any quadratic inequality. The corresponding simple proof is postponed to the end of \refapx{sec:chapL1-proof-EG-regret}.
\end{remark}

\section{Adaptation to unknown $U$}\label{sec:algo2}

In the previous section, the forecaster is given a radius
$U>0$ and asked to ensure a low worst-case regret on the
$\ell^1$-ball $B_1(U)$. In this section, $U$ is no longer given: the
forecaster is asked to be competitive against all balls
$B_1(U)$, for $U>0$.
Namely, its worst-case regret on each $B_1(U)$ should be almost as
good as if $U$ were known beforehand. 
For simplicity, we assume that $X$, $Y$, and $T$ are known: we explain in Section~\ref{sec:dis} how to simultaneously adapt to all parameters. Note that from now on, we consider again the main framework of this paper, \ie, online linear regression under the square loss ({cf.}\ Section~\ref{sec:set}).\\

\begin{figure}[h]
  \centering \framebox{
    \begin{minipage}{0.75\textwidth}
      \textbf{Parameters}: $X, Y, \eta > 0$, $T \geq 1$, and $c>0$ (a constant).\\
      \textbf{Initialization}: $R = \lceil \log_2 (2T/c) \rceil_+$, $\bw_1 = \Bigl(\frac{1}{R+1},\cdots,\frac{1}{R+1}\Bigr) \in \R^{R+1}$.\\
      For time steps $t=1,\ldots,T$:
      \begin{enumerate}
      \item For experts $r = 0,\ldots,R$:
        \begin{itemize}
        \item Run the sub-algorithm $\mathcal A(U_r)$ on the ball
          $B_1(U_r)$ and obtain the prediction $\hat y_t^{(r)}$.
        \end{itemize}
      \item Output the prediction $\hat y_t = \sum_{r=0}^R
        \frac{w_t^{(r)}}{\sum_{r'=0}^R w_{t}^{(r')}} \big[ \hat y_t^{(r)}
        \big]_Y$.
      \item Update $w_{t+1}^{(r)} = w_t^{(r)} \exp\left(-\eta \big(y_t
          - \big[\hat y_t^{(r)} \big]_Y \big)^2\right)$ for $r=0,\ldots,R$.
      \end{enumerate}
    \end{minipage}}
  \caption{The Scaling algorithm.}
  \label{alg:scaling}
\end{figure}

\noindent
We define
\begin{align}
  R \eqdef \lceil \log_2 (2T/c) \rceil_+ \quad \textrm{and} \quad U_r
  \eqdef \frac{Y}{X} \frac{2^r}{\sqrt{T \ln(2d)}} , \quad\mbox{for
  }r=0,\ldots,R~,\label{eq:ur}
\end{align}
where $c>0$ is a known absolute constant and
\[
\lceil x \rceil_+ \eqdef \min\bigl\{k \in \N: k \geq x \bigr\} \quad
\textrm{for all~} x \in \R~.
\]
The Scaling algorithm of Figure~\ref{alg:scaling} works as follows.
We have access to a sub-algorithm $\mathcal{A}(U)$ which we run simultaneously for all $U=U_r$, $r=0,\ldots,R$. Each instance of the sub-algorithm $\mathcal{A}(U_r)$ performs online linear regression on the $\ell^1$-ball $B_1(U_r)$.
We employ an exponentially weighted forecaster to aggregate these $R+1$ sub-algorithms to perform online linear
regression simultaneously on the balls $B_1(U_0),\ldots,B_1(U_R)$. The following regret bound follows by exp-concavity of the square loss.

\vspace{0.2cm}
\begin{theorem}\label{thm:scaling}
  Suppose that $X,Y>0$ are known. Let $c,c' > 0$ be two absolute
  constants.  Suppose that for all $U>0$, we have access to a
  sub-algorithm $\mathcal{A}(U)$ with regret against $B_1(U)$ of
  at most
  \begin{align}\label{eq:833}
    & c U XY \sqrt{T \ln(2d)} + c'Y^2 \quad\mbox{for }T \geq T_0~,
  \end{align}
  uniformly over all sequences $(\bx_t)$ and $(y_t)$ bounded by $X$ and $Y$.
  Then, for a known $T \geq T_0$, the Scaling algorithm
  with $\eta = 1/(8Y^2)$ satisfies
  \begin{align}
    \sum_{t=1}^T (y_t - \hat y_t)^2 &\leq \inf_{\bu \in \mathbb R^d}
    \left\{ \sum_{t=1}^T (y_t - \bu \cdot \bx_t)^2 + 2c \Norm{\bu}_1 XY \sqrt{T \ln(2d)} \right\}\nonumber\\
    & \quad + 8 Y^2 \ln\bigl(\lceil \log_2 (2T/c) \rceil_+ +1\bigr) +
    (c+c') Y^2.\label{eq:claim1}
  \end{align}
  In particular, for every $U > 0$,
  \begin{align*}
    \sum_{t=1}^T (y_t - \hat y_t)^2 &\leq \inf_{\bu \in B_1(U)}
    \left\{ \sum_{t=1}^T (y_t - \bu \cdot \bx_t)^2 \right\} + 2c UXY \sqrt{T \ln (2d)}\\
    & \quad + 8 Y^2 \ln\bigl(\lceil \log_2 (2T/c) \rceil_+ +1\bigr) +
    (c+c') Y^2.
  \end{align*}
\end{theorem}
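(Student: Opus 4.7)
The plan is to combine two layers of regret analysis. At the outer layer, the Scaling algorithm runs a standard exponentially weighted average over $R+1$ experts that output clipped predictions in $[-Y,Y]$; with $\eta = 1/(8Y^2)$, the square loss is exp-concave enough (see Lemma~\ref{lem:EWA-exp-concave} in \refapx{sec:chapL1-lemmas}) to yield
\[
\sum_{t=1}^T (y_t - \hat y_t)^2 \leq \min_{0 \leq r \leq R} \sum_{t=1}^T \bigl(y_t - [\hat y_t^{(r)}]_Y\bigr)^2 + 8 Y^2 \ln(R+1)~.
\]
Since $|y_t| \leq Y$, clipping to $[-Y,Y]$ can only decrease the loss, so $(y_t - [\hat y_t^{(r)}]_Y)^2 \leq (y_t - \hat y_t^{(r)})^2$ for each $r$ and $t$. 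Invoking the assumed regret bound~\eqref{eq:833} for each $\mathcal{A}(U_r)$ then gives, for every $r \in \{0,\ldots,R\}$,
\[
\sum_{t=1}^T (y_t - \hat y_t)^2 \leq \inf_{\bu \in B_1(U_r)}\sum_{t=1}^T (y_t - \bu\cdot\bx_t)^2 + c\,U_r XY\sqrt{T\ln(2d)} + c'Y^2 + 8Y^2\ln(R+1)~.
\]

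Next I would exploit the geometric grid $U_r = (Y/X)\,2^r/\sqrt{T\ln(2d)}$ to select a good index for any comparator $\bu \in \R^d$. If $\Norm{\bu}_1 \leq U_0$, I pick $r=0$, so that the penalty $c U_0 XY \sqrt{T \ln(2d)}$ telescopes to exactly $cY^2$, which is absorbed into the $(c+c')Y^2$ term of~\eqref{eq:claim1}. If $U_0 < \Norm{\bu}_1 \leq U_R$, I pick the smallest index $r \geq 1$ with $\Norm{\bu}_1 \leq U_r$; by minimality $U_{r-1} < \Norm{\bu}_1$, and since $U_r = 2 U_{r-1}$, we get $c U_r XY\sqrt{T\ln(2d)} \leq 2c \Norm{\bu}_1 XY\sqrt{T\ln(2d)}$, which matches the main penalty in~\eqref{eq:claim1}. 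Combined with the sub-algorithm's guarantee applied at $\bu \in B_1(U_r)$, this yields~\eqref{eq:claim1} for every $\bu \in \R^d$ with $\Norm{\bu}_1 \leq U_R$.

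The step I expect to be the main obstacle is the remaining regime $\Norm{\bu}_1 > U_R$, which is not covered by any of the $R+1$ experts. Here the precise choice $R = \lceil \log_2(2T/c)\rceil_+$ is exactly what is needed: it forces $2^R \geq 2T/c$, hence $U_R \geq 2Y\sqrt{T}/(cX\sqrt{\ln(2d)})$, and therefore $2c\Norm{\bu}_1 XY\sqrt{T\ln(2d)} > 4TY^2$. On the other hand, $\hat y_t$ is a convex combination of points in $[-Y,Y]$ while $|y_t| \leq Y$, so the cumulative loss of the Scaling algorithm is trivially bounded by $\sum_{t=1}^T(y_t - \hat y_t)^2 \leq 4TY^2$. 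Thus~\eqref{eq:claim1} holds for free in this regime, purely from the penalty term. Taking the infimum over $\bu \in \R^d$ establishes the first claimed inequality, and the second one follows by specializing to $\bu \in B_1(U)$ so that $\Norm{\bu}_1 \leq U$.
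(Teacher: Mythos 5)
Your proof is correct and follows essentially the same route as the paper: the exp-concavity aggregation bound over the $R+1$ clipped experts, followed by a three-case analysis on the size of the comparator relative to the geometric grid, with the doubling property giving the factor $2$ and the choice of $R$ making the trivial $4TY^2$ bound absorb the out-of-range regime. The only cosmetic difference is that you run the case analysis for an arbitrary $\bu \in \R^d$ and take the infimum at the end, whereas the paper fixes a minimizer $\bu_T^*$ of the penalized criterion first; the two are equivalent.
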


\vspace{0.2cm}
\begin{remark}
  By Remark~\ref{rem:chapL1-simpler-analysis} the LEG algorithm satisfies assumption~(\ref{eq:833})
  with $T_0 = \ln(2 d)$, $c \eqdef 9 c_1=72 \bigl(\sqrt{2}+1\bigr)$,
  and $c' \eqdef c_2=4\left(1+1/\sqrt{2}\right)^2$.
\end{remark}

\vspace{0.1cm}
\begin{proof}
  Since the Scaling algorithm is an exponentially weighted average
  forecaster (with clipping) applied to the $R+1$ experts
  $\mathcal{A}(U_r) = \bigl(\hat{y}^{(r)}_t\bigr)_{t \geq 1}$,
  $r=0,\ldots,R$, we have, by Lemma~\ref{lem:EWA-exp-concave} in
  \refapx{sec:chapL1-lemmas},
  \begin{align}
    &\sum_{t=1}^T (y_t - \hat y_t)^2 \leq \min_{r=0,\ldots,R} \sum_{t=1}^T \left(\hat{y}_t^{(r)} - \hat y_t\right)^2 + 8Y^2 \ln (R+1) \nonumber \\
    &\leq \min_{r=0,\ldots,R} \left\{ \inf_{\bu \in B_1(U_r)} \left\{
        \sum_{t=1}^T (y_t - \bu \cdot \bx_t)^2 \right\} + c U_r XY
      \sqrt{T \ln(2d)} \right\} + z~, \label{eq:xx1}
  \end{align}
  where the last inequality follows by assumption~(\ref{eq:833}), and
  where we set
  \[
  z \eqdef 8Y^2 \ln (R+1) + c' Y^2~.
  \]

\noindent
Let $\bu_T^* \in \arg\min_{\bu \in \mathbb R^d} \left\{ \sum_{t=1}^T (y_t - \bu \cdot \bx_t)^2 + 2c \Norm{\bu}_1 XY \sqrt{T \ln(2d)} \right\}$. Next, we proceed by considering three cases: $U_{0} < \Norm{\bu_T^*}_1 < U_{R}$, $\Norm{\bu_T^*}_1 \leq U_{0}$, and $\Norm{\bu_T^*}_1 \geq U_{R}$.\\

\noindent
  \textbf{Case 1}: $U_{0} < \Norm{\bu_T^*}_1 < U_{R}$. Let $r^* \eqdef
  \min \bigl\{r =0,\ldots,R:U_r \geq \Norm{\bu_T^*}_1\bigr\}$. Note
  that $r^* \geq 1$ since $\Norm{\bu_T^*}_1 > U_0$. By~(\ref{eq:xx1})
  we have
  \begin{align*}
    \sum_{t=1}^T (y_t - \hat y_t)^2 &\leq \!\inf_{\bu \in
      B_1(U_{r^*})} \!\left\{ \sum_{t=1}^T (y_t - \bu \cdot \bx_t)^2
      \!\right\} + c U_{r^*} XY
    \sqrt{T \ln(2d)} + z \nonumber\\
    &\leq \sum_{t=1}^T (y_t - \bu_T^* \cdot \bx_t)^2 + 2c
    \Norm{\bu_T^*}_1 XY \sqrt{T \ln(2d)} + z~,
  \end{align*}
  where the last inequality follows from $\bu_T^* \in B_1(U_{r^*})$ and from the fact that $U_{r^*} \leq 2\Norm{\bu_T^*}_1$ (since, by definition of $r^*$, $\Norm{\bu_T^*}_1 > U_{r^*-1}= U_{r^*}/2$). Finally, we obtain \eqref{eq:claim1} by definition of $\bu_T^*$ and $z \eqdef 8Y^2 \ln (R+1) + c' Y^2$. \\

\noindent
  \textbf{Case 2}: $\Norm{\bu_T^*}_1 \leq U_{0}$.  By \eqref{eq:xx1}
  we have
  \begin{align}
    \sum_{t=1}^T (y_t - \hat y_t)^2 &\leq \left\{ \sum_{t=1}^T (y_t -
      \bu_T^* \cdot \bx_t)^2 + c U_0 XY \sqrt{T \ln(2d)} \right\} +
    z~, \label{eq:b}
  \end{align}
    which yields \eqref{eq:claim1} by the equality $c U_{0} XY \sqrt{T \ln(2d)} = cY^2$ (by definition of $U_0$), by adding the nonnegative quantity $2 c \Norm{\bu^*_T}_1 XY \sqrt{T \ln(2d)}$, and by definition of $\bu^*_T$ and $z$. \\

\noindent
\textbf{Case 3}: $\Norm{\bu_T^*}_1 \geq U_{R}$.  By construction, we have $\hat y_t\in[-Y,Y]$, and by assumption, we have $y_t\in[-Y,Y]$, so that
  \begin{align*}
    \sum_{t=1}^T (y_t - \hat y_t)^2 &\leq 4 Y^2 T \leq \sum_{t=1}^T
    (y_t - \bu_T^*
    \cdot \bx_t)^2 + 2c U_{R} XY \sqrt{T \ln(2d)}\\
    & \leq \sum_{t=1}^T (y_t - \bu_T^* \cdot \bx_t)^2 + 2c
    \Norm{\bu_T^*}_1 XY \sqrt{T \ln(2d)}~,
  \end{align*}
  where the second inequality follows by $2c U_{R} XY \sqrt{T \ln(2d)} = 2cY^2 2^R \geq 4 Y^2 T$ (since $2^R \geq 2T/c$ by definition of $R$), and the last inequality uses the assumption $\Norm{\bu_T^*}_1 \geq U_{R}$. We finally get \eqref{eq:claim1} by definition of $\bu_T^*$.

  This concludes the proof of the first claim~(\ref{eq:claim1}). The
  second claim follows by bounding $\Norm{\bu}_1 \leq U$.
\end{proof}

\section{Extension to a fully adaptive algorithm}
\label{sec:dis}

The Scaling algorithm of Section~\ref{sec:algo2} uses prior knowledge of $Y$, $Y/X$, and $T$. 
In order to obtain a fully automatic algorithm, 
we need to adapt efficiently to these quantities.
Adaptation to $Y$ is possible via a technique already used for the LEG algorithm, \ie, by updating the clipping range $B_t$ based on the past observations $|y_s|$, $s \leq t-1$.

In parallel to adapting to $Y$, adaptation to $Y/X$ can be carried out as follows. 
We replace the exponential sequence $\{U_0, \ldots, U_R\}$ by another exponential sequence $\{U'_0, \ldots, U'_{R'}\}$:
\begin{equation}
\label{eqn:newgrid}
U'_r \eqdef \frac{1}{T^k} \frac{2^r}{\sqrt{T \ln(2d)}}~, \quad r=0,\ldots,R'~,
\end{equation}
where $R' \eqdef R+ \bigl\lceil \log_2 T^{2k} \bigr\rceil = \lceil \log_2 (2T/c) \rceil_+ + \bigl\lceil \log_2 T^{2k} \bigr\rceil$, and where $k>1$ is a fixed constant.
On the one hand, for $T \geq T_0 \eqdef \max\bigl\{(X/Y)^{1/k},(Y/X)^{1/k}\bigr\}$, we have (cf. \eqref{eq:ur} and \eqref{eqn:newgrid}),
\begin{equation*}
\label{eqn:grid-extension}
[U_0,U_R] \subset [U'_0, U'_ {R'}]~.
\end{equation*}
Therefore, the analysis of Theorem~\ref{thm:scaling} 
applied to the grid $\{U'_0, \ldots, U_{R'}\}$ yields\footnote{The proof remains the same by replacing $8Y^2\ln(R+1)$ with $8Y^2\ln(R'+1)$.} a regret bound of the order of $UXY\sqrt{T \ln d} + Y^2\ln(R'+1)$.
On the other hand, clipping the predictions to $[-Y,Y]$ ensures the crude regret bound $4 Y^2 T_0$ for small $T < T_0$.
Hence, the overall regret for all $T \geq 1$ is of the order of
\[
UXY\sqrt{T \ln d} + Y^2 \ln (k\ln T) + Y^2 \max\bigl\{(X/Y)^{1/k},(Y/X)^{1/k}\bigr\}~.
\]

Adaptation to an unknown time horizon $T$ can be carried out via a standard doubling trick on $T$. However, to avoid restarting the algorithm repeatedly,
we can use a time-varying exponential sequence $\{U'_{-R'(t)}(t), \ldots, U'_{R'(t)}(t)\}$ 
where $R'(t)$ grows at the rate of $k \ln(t)$. This gives\footnote{Each time the exponential sequence $(U'_r)$ expands, the weights assigned to the existing points $U'_r$ are appropriately reassigned to the whole new sequence.} us an algorithm that is fully automatic in the parameters $U$, $X$, $Y$ and $T$. 
In this case, we can show that the regret is of the order of
\[
UXY\sqrt{T \ln d} + Y^2 k \ln(T) + Y^2 \max\left\{\bigl(\sqrt{T}X/Y\bigr)^{1/k},\bigl(Y/(\sqrt{T}X)\bigr)^{1/k}\right\},
\]
where the last two terms are negligible when $T \to +\infty$ (since $k > 1$).

\section*{Acknowledgments}

The authors would like to thank Gilles Stoltz for his valuable comments and suggestions, as well as two anonymous reviewers for their insightful feedback. This work was supported in part by French National Research Agency (ANR, project EXPLO-RA, ANR-08-COSI-004) and the PASCAL2 Network of Excellence under EC grant {no.} 216886. J.\ Y.\ Yu was partly supported by a fellowship from Le Fonds qu\'eb\'ecois de la recherche sur la nature et les technologies.

An extended abstract of the present paper appeared in the \textit{Proceedings of the 22nd International Conference on Algorithmic Learning Theory (ALT'11)}.


\appendix
\renewcommand*{\thesection}{\appendixname\Alph{section}}

\section{Proofs}

\subsection{Proof of Theorem~\ref{thm:lowerbound}}
\label{sec:proof-thm2}

To prove Theorem~\ref{thm:lowerbound}, we perform a reduction to the stochastic batch setting (via the standard online to batch trick), and employ a version of the lower bound proved in \cite{Tsy-03-OptimalRates} for convex aggregation. \\

\noindent
We first need the following notations. Let $T \in \N^*$. Let $(S,\mu)$ be a probability space for which we can find an orthonormal family\footnote{An example is given by $S=[-\pi,\pi]$, $\mu(\mbox{d} x)=\mbox{d} x / (2 \pi)$, and $\varphi_j(x) = \sqrt{2} \sin(j x)$ for all $1 \leq j \leq d$ and $x \in [-\pi, \pi]$. We will use this particular case later.} $(\varphi_j)_{1 \leq j \leq d}$ with $d$ elements in the space of square-integrable functions on $S$, which we denote by $\Law^2(S,\mu)$ thereafter. For all $\bu \in \R^d$ and $\gamma,\sigma > 0$, denote by $\Prob^{\gamma,\sigma}_{\bu}$ the joint law of the {i.i.d.} sequence $(X_t,Y_t)_{1 \leq t \leq T}$ such that
\begin{equation}
\label{eqn:chapL1-lowerbound-model}
Y_t = \gamma \varphi_{\bu}(X_t) + \sigma \epsilon_t \in \R~,
\end{equation}
where $\varphi_{\bu} \eqdef \sum_{j=1}^d u_j \varphi_j$, where the $X_t$ are {i.i.d} points in $S$ drawn from $\mu$, and where the $\epsilon_t$ are {i.i.d} standard
Gaussian random variables such that $(X_t)_{1 \leq t \leq T}$ and $(\epsilon_t)_{1 \leq t \leq T}$ are independent. \\
\ \\
The next lemma is a direct adaptation of \cite[Theorem~2]{Tsy-03-OptimalRates}, which we state with our notations in a slightly more precise form (we make clear how the lower bound depends on the noise level $\sigma$ and the signal level~$\gamma$).

\begin{lemma}[An extension of Theorem~2 of \cite{Tsy-03-OptimalRates}]
  \label{lem:Tsy-convex-lower}
	\ \\
  Let $d,T \in \N^*$ and $\gamma, \sigma > 0$. Let $(S,\mu)$ be a probability space for which we can find an orthonormal family $(\varphi_j)_{1 \leq j \leq d}$ in $\Law^2(S,\mu)$, and consider the Gaussian linear model \eqref{eqn:chapL1-lowerbound-model}. Then there exist absolute constants $c_4,c_5,c_6,c_7>0$ such that
    \begin{align*}
    & \inf_{\hat{f}_T} \sup_{\substack{\bu \in \R_+^d \\ \sum_j u_j \leq 1}}\!\Biggl\{ \E_{\Prob^{\gamma,\sigma}_{\bu}}\!\Norm{\hat{f}_T- \gamma \varphi_{\bu}}_{\mu}^2 \!\Biggr\} \\
    & \quad \geq \left\{\begin{array}{ll}
        c_4 \frac{d \sigma^2}{T} & \textrm{if} \quad \frac{d}{\sqrt{T}} \leq c_5 \frac{\gamma}{\sigma}~, \\
        c_6 \gamma \sigma \sqrt{\frac{1}{T} \ln \left(1+\frac{d\sigma}{\sqrt{T}\gamma}\right)} & \textrm{if} \quad c_5 \frac{\gamma}{\sigma} < \frac{d}{\sqrt{T}} \leq c_7 \frac{\gamma d}{\sigma \sqrt{\ln(1+d)}}~,
      \end{array} \right.
  \end{align*}
  where the infimum is taken over all estimators\footnote{As usual, an estimator is a measurable function of the sample $(X_t,Y_t)_{1 \leq t \leq T}$, but the dependency on the sample is omitted.} $\hat{f}_T:S \to \R$, where the supremum is taken over all nonnegative vectors with total mass at most $1$, and where $\Norm{f}_{\mu}^2 \eqdef  \int_S f(x)^2 \mu(\dd x)$ for all measurable functions $f:S \to \R$.
\end{lemma}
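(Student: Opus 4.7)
The plan is to deduce this lemma from Theorem~2 of \cite{Tsy-03-OptimalRates} by a rescaling argument that absorbs the signal level $\gamma$ into the estimator and transfers the signal-to-noise ratio into an effective noise level. Tsybakov's original statement is formulated for convex aggregation with a fixed signal scale (essentially $\gamma=1$); the task here is to track carefully how the constants depend on $\gamma$ and $\sigma$ in both the minimax rate and the regime thresholds.

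First, I would rescale the observations by setting $\tilde{Y}_t \eqdef Y_t/\gamma$. This turns the model \eqref{eqn:chapL1-lowerbound-model} into
\[
\tilde{Y}_t = \varphi_\bu(X_t) + \tilde\sigma\,\epsilon_t, \qquad \tilde\sigma \eqdef \sigma/\gamma,
\]
which is exactly Tsybakov's Gaussian regression model with the orthonormal dictionary $(\varphi_j)_{1\le j\le d}$, noise level $\tilde\sigma$, and parameter $\bu$ still restricted to the simplex $\{\bu \in \R_+^d: \sum_j u_j \leq 1\}$. Any estimator $\hat f_T$ for the original problem corresponds bijectively to the estimator $\tilde f_T \eqdef \hat f_T/\gamma$ for the rescaled problem, and by the $L^2(\mu)$-identity
\[
\Norm{\hat f_T - \gamma\varphi_\bu}_\mu^2 = \gamma^2 \Norm{\tilde f_T - \varphi_\bu}_\mu^2,
\]
the two minimax risks coincide up to a multiplicative factor of~$\gamma^2$.

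Second, I would invoke Theorem~2 of \cite{Tsy-03-OptimalRates} on the rescaled model (noise level $\tilde\sigma$, sample size $T$). It provides a minimax lower bound of order $d\tilde\sigma^2/T$ in a parametric regime where $d/\sqrt{T}$ is small compared with $1/\tilde\sigma$, and of order $\tilde\sigma\sqrt{T^{-1}\ln(1+d\tilde\sigma/\sqrt{T})}$ in an intermediate regime. Substituting back $\tilde\sigma = \sigma/\gamma$ and multiplying both bounds by~$\gamma^2$ produces exactly the expressions $c_4\,d\sigma^2/T$ and $c_6\,\gamma\sigma\sqrt{T^{-1}\ln(1+d\sigma/(\sqrt{T}\gamma))}$ stated in the lemma, while the regime thresholds involving $1/\tilde\sigma$ translate mechanically into the conditions $d/\sqrt{T}\le c_5\gamma/\sigma$ and $c_5\gamma/\sigma < d/\sqrt{T}\le c_7\gamma d/(\sigma\sqrt{\ln(1+d)})$.

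The main obstacle is to verify that Tsybakov's theorem truly applies at an arbitrary noise level rather than only at $\tilde\sigma=1$. In the Gaussian setting this is harmless because the Fano- and Assouad-type lower bounds underlying the theorem depend on the noise only through Kullback-Leibler divergences that scale transparently with $\tilde\sigma^2$; equivalently, the reparametrization $(T,\tilde\sigma)\mapsto(T\tilde\sigma^2,1)$ leaves the hypothesis-testing argument invariant. Once this equivalence is made explicit, the four constants $c_4,c_5,c_6,c_7$ in the statement are inherited directly from Tsybakov's theorem, and no further work is required beyond bookkeeping.
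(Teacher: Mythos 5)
Your proposal is correct and follows essentially the same route as the paper, which does not spell out a proof at all: it simply presents the lemma as ``a direct adaptation of Theorem~2 of Tsybakov (2003)'' making the dependence on $\gamma$ and $\sigma$ explicit. Your rescaling $\tilde{Y}_t = Y_t/\gamma$, $\tilde{\sigma}=\sigma/\gamma$, together with the observation that the Fano/Assouad-type arguments depend on the Gaussian model only through Kullback--Leibler divergences scaling as $T\Vert f-g\Vert_\mu^2/\tilde{\sigma}^2$ (so that the regime thresholds and rates translate mechanically), is exactly the bookkeeping this adaptation requires.
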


\vspace{0.3cm}
Note that the lower bound we stated in Theorem~\ref{thm:lowerbound} is very similar to $T$ times the above lower bound with $\gamma \sim X$ and $\sigma \sim Y$ (recall that $\kappa \eqdef \sqrt{T}UX/(2dY)$). The main difference is that the latter holds for unbounded observations, while we need bounded observations $y_t$, $1 \leq t \leq T$. A simple concentration argument will show that these observations lie in $[-Y,Y]$ with high probability, which will yield the desired lower bound. The proof of Theorem~\ref{thm:lowerbound} thus consists of the following steps:
\begin{itemize}
\item step 1: reduction to the stochastic batch setting;
\item step 2: application of Lemma~\ref{lem:Tsy-convex-lower};
\item step 3: concentration argument.
\end{itemize}

\vspace{0.2cm}
\begin{proofref}{Theorem~\ref{thm:lowerbound}}
  We first assume that $\sqrt{\ln(1+2d)}/\bigl(2 d \sqrt{\ln 2}\bigr) \leq \kappa \leq 1$. The case when $\kappa > 1$ will easily follow from the monotonicity of the minimax regret in $\kappa$ (see the end of the proof). We set
  \begin{equation}
  \label{eqn:def-T-U-X}
  T \eqdef 1+ \bigl\lceil (4d\kappa)^2 \bigr\rceil~, \quad U \eqdef 1~, \quad \textrm{and} \quad X \eqdef \frac{2 d \kappa Y}{\sqrt{T}}~,
	\end{equation}
	so that $T \geq 2$, $\sqrt{T} U X/(2 d Y) = \kappa$, and $X \leq Y/2$ (since $\sqrt{T} \geq 4d\kappa$). \\
	
	\noindent
	{\bf Step 1}: reduction to the stochastic batch setting. \\
	First note that by clipping to $[-Y,Y]$, we have
  \begin{align}
    & \inf_{(\tilde{f}_t)_t} \sup_{\substack{\Norm{\bx_t}_{\infty} \!\leq X \\ \Abs{y_t}\leq Y}} \Biggl\{ \sum_{t=1}^T \bigl(y_t - \tilde{f}_t(\bx_t)\bigr)^2 - \inf_{\Norm{\bu}_1 \leq U} \sum_{t=1}^T (y_t - \bu \cdot \bx_t)^2 \Biggr\} \nonumber \\
    & = \inf_{\substack{(\tilde{f}_t)_t\\|\tilde{f}_t|\leq Y}}
    \sup_{\substack{\Norm{\bx_t}_{\infty} \!\leq X \\ \Abs{y_t}\leq Y}} \Biggl\{ \sum_{t=1}^T \bigl(y_t - \tilde{f}_t(\bx_t)\bigr)^2 -
    \inf_{\Norm{\bu}_1 \leq U} \sum_{t=1}^T (y_t - \bu \cdot \bx_t)^2
    \Biggr\}, \label{eqn:lowerbound-clipping}
  \end{align}
	where the first infimum is taken over all online forecasters\footnote{\label{foot:forecaster-def}Recall that an online forecaster is a sequence of functions $(\tilde{f}_t)_{t \geq 1}$, where $\tilde{f}_t:\R^d \times (\R^d \times \R)^{t-1} \to \R$ maps at time $t$ the new input $\bx_t$ and the past data $(\bx_s,y_s)_{1 \leq s \leq t-1}$ to a prediction $\tilde{f}_t\bigl(\bx_t;(\bx_s,y_s)_{1 \leq s \leq t-1}\bigr)$. However, unless mentioned otherwise, we omit the dependency in $(\bx_s,y_s)_{1 \leq s \leq t-1}$, and only write $\tilde{f}_t(\bx_t)$.} $(\tilde{f}_t)_t$, where the second infimum is restricted to online forecasters $(\tilde{f}_t)_t$ which output predictions in $[-Y,Y]$, and where both suprema are taken over all individual sequences $(\bx_t,y_t)_{1 \leq t \leq T} \in (\R^d \times \R)^T$ such that $|y_1|, \ldots, |y_T| \leq Y$ and $\Norm{\bx_1}_{\infty}, \ldots, \Norm{\bx_T}_{\infty} \leq X$.  \\

	Next we use the standard online to batch conversion to bound from below the right-hand side of \eqref{eqn:lowerbound-clipping} by $T$ times the lower bound of Lemma~\ref{lem:Tsy-convex-lower}, which we apply to the particular case where $S=[-\pi,\pi]$, where $\mu(\mbox{d} x)=\mbox{d} x / (2 \pi)$, and where $\varphi_j(x) = \sqrt{2} \sin(j x)$ for all $1 \leq j \leq d$ and $x \in [-\pi, \pi]$. Let
	\begin{equation}
  \label{eqn:def-gamma-sigma}
  \gamma \eqdef c_8 X \quad \textrm{and} \quad \sigma \eqdef \frac{c_9 Y}{\sqrt{\ln T}}~,
  \end{equation}
  for some absolute constants $c_8,c_9>0$ to be chosen by the analysis. \\

	\noindent
	Let $(\tilde{f}_t)_{t \geq 1}$ be any online forecaster whose predictions lie in $[-Y,Y]$, and consider the estimator $\hat{f}_T$ defined for each sample $(X_t,Y_t)_{1 \leq t \leq T}$ and each new input $X'$ by
	\begin{equation}
	\label{eqn:lowerbound-standardTrick}
	\hat{f}_T\Bigl(X';(X_t,Y_t)_{1 \leq t \leq T}\Bigr) \eqdef \frac{1}{T} \sum_{t=1}^T \tilde{f}_t\Bigl(\gamma \bphi(X');(\gamma \bphi(X_s),Y_s)_{1 \leq s \leq t-1}\Bigr)~,
	\end{equation}
	where $\bphi \eqdef (\varphi_1,\ldots,\varphi_d)$, and where we explicitely wrote all the dependencies$^{\ref{foot:forecaster-def}}$ of the $\tilde{f}_t$, $t=1,\ldots,T$.

	\noindent
	Take $\bu^* \in \R_+^d$ achieving the supremum\footnote{If the supremum in Lemma~\ref{lem:Tsy-convex-lower} is not achieved, then we can instead take an $\epsilon$-almost-maximizer for any $\epsilon >0$. Letting $\epsilon \rightarrow 0$ in the end will conclude the proof.} in Lemma~\ref{lem:Tsy-convex-lower} for the estimator $\hat{f}_T$.	Note that ${\norm[\bu^*]_1 \leq 1}$. Besides, consider the {i.i.d.}\ random sequence $(\bx_t,y_t)_{1 \leq t \leq T}$ in $\R^d \times \R$ defined for all $t=1,\ldots,T$ by
\begin{equation}
\label{eqn:chapL1-lowerbound-model-reduction}
\bx_t \eqdef \bigl(\gamma \varphi_1(X_t), \ldots, \gamma \varphi_d(X_t)\bigr) \quad \textrm{and} \quad y_t \eqdef \gamma \varphi_{\bu^*}(X_t) + \sigma \epsilon_t~,
\end{equation}
where $\varphi_{\bu^*} \eqdef \sum_{j=1}^d u^*_j \varphi_j$ (so that $y_t = \bu^* \cdot \bx_t + \sigma \epsilon_t$ for all $t$), where the $X_t$ are {i.i.d} points in $[-\pi,\pi]$ drawn from the uniform distribution $\mu(\mbox{d} x)=\mbox{d} x / (2 \pi)$, and where the $\epsilon_t$ are {i.i.d} standard Gaussian random variables such that $(X_t)_t$ and $(\epsilon_t)_t$ are independent. All the expectations below are thus taken with respect to the probability distribution $\Prob^{\gamma,\sigma}_{\bu^*}$. \\

By standard manipulations (\eg, using the tower rule and Jensen's inequality), we get the following lower bound. A detailed proof can be found after the proof of the present theorem (page~\pageref{proof:chapL1-lower-Batchconversion}).

\begin{lemma}[Reduction to the batch setting]
\label{lem:chapL1-lower-Batchconversion}
\ \\
With $(\tilde{f}_t)_{1 \leq t \leq T}$, $\hat{f}_T$, and $\bu^*$ defined above, we have
\[
\E\!\left[\sum_{t=1}^T \bigl(y_t-\tilde{f}_t(\bx_t)\bigr)^2 - \inf_{\Norm{\bu}_1 \leq 1} \sum_{t=1}^T \bigl(y_t-\bu \cdot \bx_t\bigr)^2 \right] \geq T \, \E\Norm{\hat{f}_T - \gamma \varphi_{\bu^*}}^2_{\mu}~.
\]
\end{lemma}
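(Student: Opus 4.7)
The plan is the classical online-to-batch conversion, carried out in three short steps; the crux is that the noise $\epsilon_t$ is independent of everything that $\tilde{f}_t$ can ``see''.

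First, since $\Norm{\bu^*}_1 \leq 1$, the vector $\bu^*$ is feasible in the infimum, so, using the model $y_t = \bu^* \cdot \bx_t + \sigma \epsilon_t$,
\begin{align*}
\inf_{\Norm{\bu}_1 \leq 1} \sum_{t=1}^T (y_t - \bu \cdot \bx_t)^2 \leq \sum_{t=1}^T (y_t - \bu^* \cdot \bx_t)^2 = \sigma^2 \sum_{t=1}^T \epsilon_t^2.
\end{align*}

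Second, I expand $y_t - \tilde{f}_t(\bx_t) = \bigl(\gamma \varphi_{\bu^*}(X_t) - \tilde{f}_t(\bx_t)\bigr) + \sigma \epsilon_t$ and subtract the noise:
\begin{align*}
(y_t - \tilde{f}_t(\bx_t))^2 - \sigma^2 \epsilon_t^2 = \bigl(\gamma \varphi_{\bu^*}(X_t) - \tilde{f}_t(\bx_t)\bigr)^2 + 2 \sigma \epsilon_t \bigl(\gamma \varphi_{\bu^*}(X_t) - \tilde{f}_t(\bx_t)\bigr).
\end{align*}
The prediction $\tilde{f}_t(\bx_t)$ and the target $\gamma \varphi_{\bu^*}(X_t)$ are both measurable with respect to the $\sigma$-algebra generated by $X_1,\ldots,X_t,\epsilon_1,\ldots,\epsilon_{t-1}$, so the factor multiplying $\epsilon_t$ is independent of $\epsilon_t$; conditioning on everything but $\epsilon_t$ and using $\E[\epsilon_t]=0$ kills the cross term in expectation. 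Together with Step~1 this yields
\begin{align*}
\E\!\Biggl[\sum_{t=1}^T (y_t - \tilde{f}_t(\bx_t))^2 - \inf_{\Norm{\bu}_1 \leq 1} \sum_{t=1}^T (y_t - \bu \cdot \bx_t)^2 \Biggr] \geq \sum_{t=1}^T \E\bigl[(\tilde{f}_t(\bx_t) - \gamma \varphi_{\bu^*}(X_t))^2\bigr].
\end{align*}

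Third, let $X'$ be drawn from $\mu$ independently of the sample $(X_t,Y_t)_{1 \leq t \leq T}$. By Jensen's inequality applied to the arithmetic mean defining $\hat{f}_T$ in~\eqref{eqn:lowerbound-standardTrick},
\begin{align*}
\bigl(\hat{f}_T(X') - \gamma \varphi_{\bu^*}(X')\bigr)^2 \leq \frac{1}{T} \sum_{t=1}^T \Bigl(\tilde{f}_t\bigl(\gamma\bphi(X');(\gamma\bphi(X_s),Y_s)_{s<t}\bigr) - \gamma \varphi_{\bu^*}(X')\Bigr)^2.
\end{align*}
Integrating in $X'$ against $\mu$ and then taking the expectation over the sample, I exploit the i.i.d.\ structure to replace $X'$ by $X_t$: given $(X_s,\epsilon_s)_{s<t}$, the map $\tilde{f}_t(\cdot;(\gamma\bphi(X_s),Y_s)_{s<t})$ is deterministic, and the law of $\gamma\bphi(X')$ coincides with that of $\bx_t$, both being independent of the past. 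This gives
\begin{align*}
T\, \E \Norm{\hat{f}_T - \gamma \varphi_{\bu^*}}_{\mu}^2 \leq \sum_{t=1}^T \E\bigl[(\tilde{f}_t(\bx_t) - \gamma \varphi_{\bu^*}(X_t))^2\bigr],
\end{align*}
and chaining with Step~2 finishes the proof.

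The only delicate point is the measurability bookkeeping in Step~2, where one must verify that the past dependence of $\tilde{f}_t$ (through $(\bx_s,y_s)_{s<t}$) brings in only $\epsilon_1,\ldots,\epsilon_{t-1}$ but never $\epsilon_t$, so that the cross term is centered after conditioning on everything but $\epsilon_t$. Everything else is a routine application of Jensen's inequality and the i.i.d.\ assumption.
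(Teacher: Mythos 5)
Your proof is correct and follows essentially the same online-to-batch route as the paper: both arguments rest on the centered-noise bias--variance expansion (valid because $\epsilon_t$ is independent of $\tilde{f}_t(\bx_t)$ and of $\gamma\varphi_{\bu^*}(X_t)$), Jensen's inequality for the Ces\`aro average $\hat{f}_T$, and the fresh-sample/tower-rule identification of the per-round risks. The only cosmetic differences are that you lower-bound the infimum pathwise by plugging in $\bu^*$ (rather than using $\E[\inf\{\cdot\}]\leq\inf\E[\{\cdot\}]$ and identifying $\bu^*$ as the population minimizer) and that you apply the expansion before Jensen instead of after; both orderings give the same bound.
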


\vspace{0.5cm}
\noindent
	{\bf Step 2}: application of Lemma~\ref{lem:Tsy-convex-lower}. \\
	Next we use Lemma~\ref{lem:Tsy-convex-lower} to prove that, for some absolute constants $c_9,c_{11}>0$,
  \begin{align}
    & T \, \E \Norm{\hat{f}_T- \gamma \varphi_{\bu^*}}_{\mu}^2 \geq \frac{c_{11} c_9^2}{\ln\bigl(2+16d^2\bigr)} d Y^2 \kappa \sqrt{\ln(1+1/\kappa)}~. \label{eqn:tsy-lowerbound2}
  \end{align}	
	 By Lemma~\ref{lem:Tsy-convex-lower} and by definition of $\bu^*$, we have
  \begin{align}
    \E \Norm{\hat{f}_T- \gamma \varphi_{\bu^*}}_{\mu}^2 & \geq \left\{\begin{array}{ll}
        c_4 \frac{d \sigma^2}{T} & \textrm{if} \quad \frac{d}{\sqrt{T}} \leq c_5 \frac{\gamma}{\sigma}~, \\
        c_6 \gamma \sigma \sqrt{\frac{1}{T} \ln \left(1+\frac{d\sigma}{\sqrt{T}\gamma}\right)} & \textrm{if} \quad c_5 \frac{\gamma}{\sigma} < \frac{d}{\sqrt{T}} \leq \frac{c_7 \gamma d}{\sigma \sqrt{\ln(1+d)}}~.
      \end{array} \right. \nonumber \\
    & \geq \left\{\begin{array}{ll}
        \frac{c_4 c_9^2}{T(\ln T)} d Y^2 & \textrm{if} \quad \frac{d}{\sqrt{T}} \leq c_5 \frac{\gamma}{\sigma}~, \\
        \frac{c_6 c_8 c_9}{\sqrt{\ln T}} U X Y \sqrt{\frac{1}{T} \ln \left(1+\frac{c_9 dY}{c_8 \sqrt{T(\ln T)}UX}\right)} & \textrm{if} \quad c_5 \frac{\gamma}{\sigma} < \frac{d}{\sqrt{T}} \leq \frac{c_7 \gamma d}{\sigma \sqrt{\ln(1+d)}}~,
      \end{array} \right. \label{eqn:tsy-lowerbound1}
  \end{align}
where the last inequality follows from (\ref{eqn:def-gamma-sigma}) and from $U=1$.\\

\noindent
The above lower bound is only meaningful if the following condition holds true:
\begin{equation}
\label{eqn:condition-true}
\frac{d}{\sqrt{T}} \leq \frac{c_7 \gamma d}{\sigma \sqrt{\ln(1+d)}}~.
\end{equation}
But, by definition of $T \eqdef 1+ \bigl\lceil (4d\kappa)^2 \bigr\rceil$ and by the assumption $\sqrt{\ln(1+2d)}/\bigl(2 d \sqrt{\ln 2}\bigr) \leq \kappa$, elementary manipulations show that \eqref{eqn:condition-true} actually holds true whenever\footnote{By definition of $\gamma$ and $\sigma$, \eqref{eqn:condition-true} is equivalent to $T \ln T \geq c_9^2/(c_7^2 c_8^2) (Y/X)^2 \ln(1+d)$. But by definition of $X$ and by the assumption $\kappa \geq \sqrt{\ln(1+2d)}/(2d \sqrt{\ln 2})$, we have $Y/X \leq 1/c_{10}$. Therefore, \eqref{eqn:condition-true} is implied by $T \ln T \geq c_9^2/(c_7^2 c_8^2 c_{10}^2) \ln(1+d)$, which in turn is implied by the condition $c_9 \leq c_7 c_8 c_{10}$ (by definition of $T$).} $c_9 \leq c_7 c_8 c_{10}$, where $c_{10} \eqdef \frac{1}{2} \inf_{x \geq 2\sqrt{\frac{\ln 3}{\ln 2}}} \left\{\frac{x}{\sqrt{1+\lceil x^2\rceil}}\right\}$ (note that $c_{10} > 0$).\\

\noindent
Therefore, if $c_9 \leq c_7 c_8 c_{10}$, then \eqref{eqn:tsy-lowerbound1} entails that
\begin{align}
\label{eqn:chapL1-lowerbound-1}
\E \Norm{\hat{f}_T- \gamma \varphi_{\bu^*}}_{\mu}^2 \geq \min \left\{\frac{c_4 c_9^2}{T(\ln T)} d Y^2, \, \frac{c_6 c_8 c_9}{\sqrt{\ln T}} U X Y \sqrt{\frac{1}{T} \ln \left(1+\frac{c_9 dY}{c_8 \sqrt{T(\ln T)}UX}\right)} \, \right\}~.
\end{align}

\noindent
Moreover, note that if $c_9 \leq c_8 2 \sqrt{\ln 2}$, then $c_8 \geq c_9 / (2\sqrt{\ln 2}) \geq c_9 / (2\sqrt{\ln T})$.  In this case, since $x \mapsto x\sqrt{\ln(1+A/x)}$ is nondecreasing on $\R_+^*$ for all $A>0$, we can replace $c_8$ with $c_9 / (2\sqrt{\ln T})$ in the next expression and get
\begin{align*}
& \frac{c_6 c_8 c_9}{\sqrt{\ln T}} U X Y \sqrt{\frac{1}{T} \ln \left(1+\frac{c_9 dY}{c_8 \sqrt{T(\ln T)}UX}\right)} \\
& \quad \geq \frac{c_6 c_9^2}{2 \ln T} U X Y \sqrt{\frac{1}{T} \ln \left(1+\frac{2 dY}{\sqrt{T}UX}\right)} = \frac{c_6 c_9^2}{T(\ln T)} d Y^2 \kappa \sqrt{\ln(1+1/\kappa)}~,
\end{align*}
where we used the definition of $\kappa \eqdef \sqrt{T}UX/(2dY)$. \\
In the sequel we will choose the absolute constants $c_8$ and $c_9$ such that
\begin{equation}
\label{eqn:choice-constants1}
c_9 \leq c_7 c_8 c_{10} \quad \textrm{and} \quad c_9 \leq c_8 2 \sqrt{\ln 2}~.
\end{equation}
Therefore, by the above remarks, by the fact that $\ln T \eqdef \ln\bigl(1+ \lceil (4d\kappa)^2 \rceil\bigr) \leq \ln\bigl(2+16d^2\bigr)$ (since $\kappa \leq 1$ by assumption), and multiplying both sides of~\eqref{eqn:chapL1-lowerbound-1} by $T$, we get
  \begin{align*}
    T \, \E \Norm{\hat{f}_T- \gamma \varphi_{\bu^*}}_{\mu}^2 & \geq \min\left\{\frac{c_4 c_9^2}{\ln\bigl(2+16d^2\bigr)} d Y^2, \, \frac{c_6 c_9^2}{\ln\bigl(2+16d^2\bigr)} d Y^2 \kappa \sqrt{\ln(1+1/\kappa)} \right\} \\
    & \geq \frac{c_{11} c_9^2}{\ln\bigl(2+16d^2\bigr)} d Y^2 \kappa \sqrt{\ln(1+1/\kappa)}~,
  \end{align*}
  where we set $c_{11} \eqdef \min\bigl\{c_4/\sqrt{\ln 2},c_6\bigr\}$, and where we used the fact that $x \mapsto x \sqrt{\ln(1+1/x)}$ is nondecreasing on $\R_+^*$, so that its value at $x=\kappa \leq 1$ is smaller than $\sqrt{\ln 2}$. This concludes the proof of \eqref{eqn:tsy-lowerbound2}. \\

\noindent
	Combining Lemma~\ref{lem:chapL1-lower-Batchconversion} and \eqref{eqn:tsy-lowerbound2}, we get
\begin{align}
\E\!\left[\sum_{t=1}^T \bigl(y_t-\tilde{f}_t(\bx_t)\bigr)^2 - \inf_{\Norm{\bu}_1 \leq 1} \sum_{t=1}^T \bigl(y_t-\bu \cdot \bx_t\bigr)^2 \right] \geq \frac{c_{11} c_9^2}{\ln\bigl(2+16d^2\bigr)} d Y^2 \kappa \sqrt{\ln(1+1/\kappa)}~. \label{eqn:lowerbound-standardTrick3}
\end{align}

\vspace{0.2cm}
\noindent
{\bf Step 3}: concentration argument. \\
At this stage it would be tempting to conclude by using \eqref{eqn:lowerbound-standardTrick3} that since the expectation is lower bounded,  then there is at least one individual sequence with the same lower bound. However, we have no boundedness guarantee about such individual sequence since the random observations $y_t$ lie outside of $[-Y,Y]$ with positive probability. Next we prove that the probability of the event
\[
\cA \eqdef \bigcap_{t=1}^T \bigl\{|y_t| \leq Y \bigr\}
\]
is actually close to $1$, and that
\begin{equation}
\label{eqn:chapL1-lowerbound-2}
\E\!\left[\indicator{\cA} \left(\sum_{t=1}^T \bigl(y_t-\tilde{f}_t(\bx_t)\bigr)^2 - \inf_{\Norm{\bu}_1 \leq 1} \sum_{t=1}^T \bigl(y_t-\bu \cdot \bx_t\bigr)^2 \right)\right] \geq \frac{1}{2} \frac{c_{11} c_9^2}{\ln\bigl(2+16d^2\bigr)} d Y^2 \kappa \sqrt{\ln(1+1/\kappa)}~.
\end{equation}
(Note a missing factor of $2$ between \eqref{eqn:lowerbound-standardTrick3} and \eqref{eqn:chapL1-lowerbound-2}.) The last lower bound will then enable us to conclude the proof of this theorem. \\

\noindent
Set $\hat{L}_T \eqdef \sum_{t=1}^T \bigl(y_t-\tilde{f}_t(\bx_t)\bigr)^2$ and $L_T(\bu) \eqdef \sum_{t=1}^T \bigl(y_t-\bu \cdot \bx_t\bigr)^2$ for all $\bu \in \R^d$. Denote by $\cA^c$ the complement of $\cA$, and by $\indicator{\cA}$ and $\indicator{\cA^c}$ the corresponding indicator functions. By the equality $\indicator{\cA}=1-\indicator{\cA^c}$, we have
\begin{align}
\E\!\left[\indicator{\cA}\left(\hat{L}_T-\inf_{\Norm{\bu}_1 \leq 1} L_T(\bu)\right)\right] & = \E\!\left[\hat{L}_T-\inf_{\Norm{\bu}_1 \leq 1} L_T(\bu)\right] - \E\!\left[\indicator{\cA^c}\left(\hat{L}_T-\inf_{\Norm{\bu}_1 \leq 1} L_T(\bu)\right)\right]  \nonumber \\
& \geq \frac{c_{11} c_9^2}{\ln\bigl(2+16d^2\bigr)} d Y^2 \kappa \sqrt{\ln(1+1/\kappa)} - \E\!\left[\indicator{\cA^c}\hat{L}_T\right]~, \label{eqn:lowerbound-standardTrick4}
\end{align}
where the last inequality follows by \eqref{eqn:lowerbound-standardTrick3} and by the fact that $L_T(\bu) \geq 0$ for all $\bu \in \R^d$. The rest of the proof is dedicated to upper bounding the above quantity $\E\bigl[\indicator{\cA^c}\hat{L}_T\bigr]$ by half the term on its left. This way, we will have proved \eqref{eqn:chapL1-lowerbound-2}. \\

\noindent
First note that
\begin{align}
\E\!\left[\indicator{\cA^c}\hat{L}_T\right] & \eqdef \E\!\left[\indicator{\cA^c}\sum_{t=1}^T  \bigl(y_t-\tilde{f}_t(\bx_t)\bigr)^2\right] \nonumber \\
& \leq \E\!\left[\indicator{\cA^c}\sum_{t=1}^T \left(4 Y^2 \indicatorB{|y_t| \leq Y} + \bigl(y_t-\tilde{f}_t(\bx_t)\bigr)^2 \indicatorB{|y_t|>Y}\right)\right] \label{eqn:concentration-1} \\
& \leq 4TY^2 \Prob\bigl(\cA^c\bigr) + \sum_{t=1}^T \E\!\left[\bigl(y_t-\tilde{f}_t(\bx_t)\bigr)^2 \, \indicatorB{|\varepsilon_t|>\frac{Y}{2\sigma}}\right]~, \label{eqn:concentration-2}
\end{align}
where \eqref{eqn:concentration-1} follows from the fact that the online forecaster $(\tilde{f}_t)_t$ outputs its predictions in $[-Y,Y]$. As for~\eqref{eqn:concentration-2}, note by definition of $y_t$ that $|y_t| \leq \Norm{\bu^*}_1 \gamma \Norm{\bphi(X_t)}_{\infty}  + \sigma |\varepsilon_t| \leq \gamma \sqrt{2} + \sigma |\varepsilon_t|$ since $\Norm{\bu^*}_1 \leq 1$ and $|\varphi_j(x)|  \eqdef |\sqrt{2} \sin(j x)| \leq \sqrt{2}$ for all $j=1,\ldots,d$ and $x \in \R$. Therefore, by definition of $\gamma \eqdef c_8 X$, and since $X \leq Y/2$ (by definition of $X$), we get $|y_t| \leq c_8 \sqrt{2} \, Y/2 + \sigma |\varepsilon_t| \leq Y/2 + \sigma |\varepsilon_t|$ provided that

\begin{equation}
\label{eqn:choice-constants2}
c_8 \leq \frac{1}{\sqrt{2}}~,
\end{equation}
which we assume thereafter. The above remarks show that $\{|y_t| > Y\} \subset \{|\varepsilon_t| > Y/(2\sigma)\}$, which entails \eqref{eqn:concentration-2}. By the same comments and since $|\tilde{f}_t| \leq Y$, we have, for all $t=1,\ldots,T$,
\begin{align}
\E\!\left[\bigl(y_t-\tilde{f}_t(\bx_t)\bigr)^2 \indicatorB{|\varepsilon_t|>\frac{Y}{2\sigma}}\right] & \leq \E\!\left[\bigl(Y/2 + \sigma|\varepsilon_t| + Y\bigr)^2 \indicatorB{|\varepsilon_t|>\frac{Y}{2\sigma}}\right] \nonumber \\
& \leq 2 \left(\frac{3Y}{2}\right)^2 \Prob\!\left(|\varepsilon_t| > \frac{Y}{2\sigma}\right) + 2 \sigma^2 \E\!\left[\varepsilon_t^2 \indicatorB{|\varepsilon_t|>\frac{Y}{2\sigma}}\right] \label{eqn:concentration-3} \\
& \leq \frac{9 Y^2}{2} \Prob\!\left(|\varepsilon_t| > \frac{Y}{2\sigma}\right) + 2 \sigma^2 \sqrt{3} \, \Prob^{1/2}\!\left(|\varepsilon_t| > \frac{Y}{2\sigma}\right) \label{eqn:concentration-4} \\
& \leq 9 Y^2 T^{-1/(8 c_9^2)} + 2 \frac{c_9^2 Y^2}{\ln 2} \sqrt{6} \, T^{-1/(16 c_9^2)}~, \label{eqn:concentration-5}
\end{align}
where we used the following arguments. Inequality \eqref{eqn:concentration-3} follows by the elementary inequality $(a+b)^2 \leq 2(a^2+b^2)$ for all $a,b \in \R$. To get \eqref{eqn:concentration-4} we used the Cauchy-Schwarz inequality and the fact that $\E\bigl[\varepsilon_t^4\bigr] = 3$ (since $\varepsilon_t$ is a standard Gaussian random variable). Finally, \eqref{eqn:concentration-5} follows by definition of $\sigma \eqdef c_9 Y / \sqrt{\ln T} \leq c_9 Y / \sqrt{\ln 2}$ and from the fact that, since $\varepsilon_t$ is a standard Gaussian random variable\footnote{We use a standard deviation inequality for subgaussian random variables; see, \eg, \cite[Equation~(2.5)]{Massart03StFlour} with $\sigma^2=1$.},
\begin{equation*}
\Prob\!\left(|\varepsilon_t| > \frac{Y}{2\sigma}\right) \leq 2 e^{-\frac{1}{2} \left(\frac{Y}{2\sigma}\right)^2} = 2 e^{-\frac{1}{2} \left(\frac{\sqrt{\ln T}}{2 c_9}\right)^2} = 2 T^{-1/(8 c_9^2)}~.
\end{equation*}

	\noindent
Using the fact that $\Prob\bigl(\cA^c\bigr) \leq \sum_{t=1}^T \Prob\bigl(|y_t|>Y\bigr) \leq \sum_{t=1}^T \Prob\bigl(|\varepsilon_t|>Y/(2\sigma)\bigr) \leq 2 T^{1-1/(8 c_9^2)}$ by the inequality above and substituting \eqref{eqn:concentration-5} in \eqref{eqn:concentration-2}, we get
\begin{align}
\E\!\left[\indicator{\cA^c}\hat{L}_T\right] & \leq 8 Y^2 T^{2-1/(8 c_9^2)} + 9 Y^2 T^{1-1/(8 c_9^2)} + \frac{2 c_9^2 \sqrt{6}}{\ln 2}  Y^2 T^{1-1/(16 c_9^2)} \nonumber \\
& \leq 8 Y^2 2^{2-1/(8 c_9^2)} + 9 Y^2 2^{1-1/(8 c_9^2)} + \frac{2 c_9^2 \sqrt{6}}{\ln 2} Y^2 2^{1-1/(16 c_9^2)}~, \label{eqn:concentration-7}
\end{align}
where the last inequality follows from the fact that $T^{\alpha} \leq 2^{\alpha}$ for all $\alpha < 0$ (since $T \geq 2$) and from a choice of $c_9$ such that $c_9 < 1/4$ (which we assume thereafter).\\

\noindent
In order to further upper bound $\E\bigl[\indicator{\cA^c}\hat{L}_T\bigr]$, we use the following technical lemma, which is proved after the proof of the present theorem (see page~\pageref{proof:chapL1-lower-technical}). It relies on the following elementary argument: since $d \, \kappa$ is large enough and since the left-hand side of the next inequality (Lemma~\ref{lem:chapL1-lower-technical}) decreases exponentially fast as $c_9 \to 0$, then this inequality holds true for all $c_9>0$ small enough.

\vspace{0.2cm}
\begin{lemma}
\label{lem:chapL1-lower-technical}
There exists an absolute constant $c_{13}>0$ such that, for all $c_9 \in (0,c_{13})$,
\[
8 Y^2 2^{2-1/(8 c_9^2)} + 9 Y^2 2^{1-1/(8 c_9^2)} + \frac{2 c_9^2 \sqrt{6}}{\ln 2} Y^2 2^{1-1/(16 c_9^2)} \leq \frac{1}{2} \frac{c_{11} c_9^2}{\ln\bigl(2+16d^2\bigr)} d Y^2 \kappa \sqrt{\ln(1+1/\kappa)}~.
\]
\end{lemma}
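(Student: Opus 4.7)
\begin{proofref}{Lemma~\ref{lem:chapL1-lower-technical}}
The strategy is to divide both sides by $c_9^2 Y^2$, observe that the resulting left-hand side decays to $0$ as $c_9 \to 0$, and bound the right-hand side below by a positive constant depending only on the constraint $\kappa \geq \sqrt{\ln(1+2d)}/(2d\sqrt{\ln 2})$. Rewriting $2^{2-1/(8c_9^2)} = 4 \cdot 2^{-1/(8c_9^2)}$ and $2^{1-1/(8c_9^2)} = 2 \cdot 2^{-1/(8c_9^2)}$, and $2^{1-1/(16c_9^2)} = 2 \cdot 2^{-1/(16c_9^2)}$, the inequality to establish becomes
\[
\frac{50}{c_9^2} \, 2^{-1/(8c_9^2)} + \frac{4\sqrt{6}}{\ln 2} \, 2^{-1/(16c_9^2)} \leq \frac{c_{11}}{2} \cdot \frac{d \kappa \sqrt{\ln(1+1/\kappa)}}{\ln\bigl(2+16d^2\bigr)}~.
\]
The left-hand side depends only on $c_9$ and tends to $0$ exponentially fast as $c_9 \to 0^+$ (the factor $1/c_9^2$ is dominated by $2^{-1/(16c_9^2)}$).

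The main step is thus to exhibit an absolute positive lower bound for the factor $G(d,\kappa) \eqdef d \kappa \sqrt{\ln(1+1/\kappa)} / \ln(2+16d^2)$ valid for all $d \in \Zp$ and all $\kappa \geq \kappa_0(d) \eqdef \sqrt{\ln(1+2d)}/(2 d \sqrt{\ln 2})$. Since the function $x \mapsto x \sqrt{\ln(1+1/x)}$ is nondecreasing on $\R_+^*$ (this fact is already used in the proof of Theorem~\ref{thm:lowerbound}; it follows from checking that the derivative has the same sign as $\ln(1+1/x) - 1/(2(x+1))$, which is positive on $\R_+^*$), we have $\kappa \sqrt{\ln(1+1/\kappa)} \geq \kappa_0(d) \sqrt{\ln(1+1/\kappa_0(d))}$, hence
\[
G(d,\kappa) \geq \frac{\sqrt{\ln(1+2d)}}{2\sqrt{\ln 2} \, \ln(2+16d^2)} \, \sqrt{\ln\!\left(1 + \frac{2d\sqrt{\ln 2}}{\sqrt{\ln(1+2d)}}\right)} \eqdef g(d)~.
\]
An elementary asymptotic analysis shows $g(d) \to 1/(4\sqrt{\ln 2}) > 0$ as $d \to \infty$ (both the square roots scale like $\sqrt{\ln d}$ while $\ln(2+16d^2) \sim 2\ln d$), and continuity together with $g(d)>0$ for every finite $d \in \Zp$ ensures that $\inf_{d \in \Zp} g(d) \eqdef c_{12} > 0$.

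Setting $c_{12} > 0$ as above, the right-hand side of the rescaled inequality is bounded below by $c_{11} c_{12}/2$, an absolute positive constant. Since the left-hand side is a continuous function of $c_9$ that tends to $0$ as $c_9 \to 0^+$, there exists an absolute constant $c_{13} > 0$ such that for every $c_9 \in (0, c_{13})$ the left-hand side is at most $c_{11} c_{12}/2$, which proves the lemma. The main (minor) obstacle is confirming the positivity of the infimum $c_{12}$; this reduces either to the finite-$d$ case (where $g(d) > 0$ is obvious since all factors are positive) or to the asymptotic limit (where the explicit computation above yields a strictly positive value).
\end{proofref}
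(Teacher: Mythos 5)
Your proof is correct and follows essentially the same route as the paper's: both reduce to $\kappa = \kappa_{\min}(d)$ via the monotonicity of $x \mapsto x\sqrt{\ln(1+1/x)}$, take the infimum over $d$ of the resulting $d$-dependent factor to get an absolute constant $c_{12}>0$, and then choose $c_{13}$ so that the (exponentially decaying, after division by $c_9^2$) left-hand side falls below $c_{11}c_{12}/2$. Your write-up is in fact slightly more careful, since you explicitly justify the positivity of $\inf_d g(d)$ via the limit $1/(4\sqrt{\ln 2})$ at $d\to\infty$, a point the paper merely asserts.
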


\vspace{0.2cm}
\noindent
We can now fix the values of the constants $c_8$ and $c_9$ and conclude the proof. Choosing $c_9$ and $c_8 \eqdef \max\bigl\{c_9/(2\sqrt{\ln 2}), c_9/(c_7 c_{10})\bigr\}$ such that $c_8 < 1/\sqrt{2}$ (condition \eqref{eqn:choice-constants2}), $c_9 < 1/4$, and $c_9 < c_{13}$, then the condition \eqref{eqn:choice-constants1} also holds, and \eqref{eqn:concentration-7} combined with Lemma~\ref{lem:chapL1-lower-technical} entails that
\[
\E\!\left[\indicator{\cA^c}\hat{L}_T\right] \leq \frac{1}{2} \frac{c_{11} c_9^2}{\ln\bigl(2+16d^2\bigr)} d Y^2 \kappa \sqrt{\ln(1+1/\kappa)}~.
\]
Substituting the last inequality in \eqref{eqn:lowerbound-standardTrick4}, we get that
\[
\E\!\left[\indicator{\cA}\left(\hat{L}_T-\inf_{\Norm{\bu}_1 \leq 1} L_T(\bu)\right)\right] \geq \frac{1}{2} \frac{c_{11} c_9^2}{\ln\bigl(2+16d^2\bigr)} d Y^2 \kappa \sqrt{\ln(1+1/\kappa)}~.
\]
By the above lower bound and the fact that, $\Prob^{\gamma,\sigma}_{\bu^*}$-almost surely, $\Norm{\bx_t}_{\infty} \leq \gamma \sqrt{2} \leq X$ for all $t=1,\ldots,T$ (since $\gamma \eqdef c_8 X$ and $c_8 \leq 1/\sqrt{2}$), we get that
\[
\sup_{\substack{\Norm{\bx_1}_{\infty},\ldots,\Norm{\bx_T}_{\infty}  \!\leq X \\ y_1,\ldots,y_T \in \R}} \left\{ \indicator{\cA}\left(\hat{L}_T-\inf_{\Norm{\bu}_1 \leq 1} L_T(\bu)\right) \right\} \geq \frac{1}{2} \frac{c_{11} c_9^2}{\ln\bigl(2+16d^2\bigr)} d Y^2 \kappa \sqrt{\ln(1+1/\kappa)}~.
\]
Therefore, by definition of $\cA \eqdef \bigcap_{t=1}^T \bigl\{|y_t| \leq Y \bigr\}$, of $\hat{L}_T \eqdef \sum_{t=1}^T \bigl(y_t - \tilde{f}_t(\bx_t)\bigr)^2$, and of $L_T(\bu) \eqdef \sum_{t=1}^T (y_t - \bu \cdot \bx_t)^2$, we get that, for all  online forecasters $(\tilde{f}_t)_{t \geq 1}$ whose predictions lie in $[-Y,Y]$,
\begin{align*}
\sup_{\substack{\Norm{\bx_1}_{\infty},\ldots,\Norm{\bx_T}_{\infty}  \!\leq X \\ |y_1|,\ldots,|y_T| \leq Y}} \Biggl\{ \sum_{t=1}^T \bigl(y_t - \tilde{f}_t(\bx_t)\bigr)^2 - \inf_{\Norm{\bu}_1 \leq U} \sum_{t=1}^T (y_t - \bu \cdot \bx_t)^2 \Biggr\} & \geq \frac{1}{2} \frac{c_{11} c_9^2}{\ln\bigl(2+16d^2\bigr)} d Y^2 \kappa \sqrt{\ln(1+1/\kappa)}~.
\end{align*}
Combining the last lower bound with \eqref{eqn:lowerbound-clipping} and setting $c_1 \eqdef c_{11} c_9^2/2$ concludes the proof under the assumption $\sqrt{\ln(1+2d)}/\bigl(2 d \sqrt{\ln 2}\bigr) \leq \kappa \leq 1$.\\

\noindent
\underline{Assume now that $\kappa > 1$}. \\
	The stated lower bound follows from the case when $\kappa = 1$ and by monotonicity of the minimax regret in $\kappa$ (when $d$ and $Y$ are kept constant). \\
\ \\
More formally, by the first part of this proof (when $\kappa=1$), we can fix $T \geq 1$, $U_1>0$, and $X>0$ such that $\sqrt{T}U_1 X/(2dY) = 1$ and
\begin{align*}
\inf_{(\tilde{f}_t)_t} \sup_{\substack{\Norm{\bx_t}_{\infty} \!\leq X \\ \Abs{y_t}\leq Y}} \Biggl\{ \sum_{t=1}^T \bigl(y_t - \tilde{f}_t(\bx_t)\bigr)^2 - \inf_{\Norm{\bu}_1 \leq U_1} \sum_{t=1}^T (y_t - \bu \cdot \bx_t)^2 \Biggr\} 
& \geq \frac{c_1}{\ln\bigl(2+16 d^2\bigr)} d Y^2 \sqrt{\ln 2}~,
\end{align*}
where the infimum is taken over all online forecasters $(\tilde{f}_t)_{t \geq 1}$, and where the supremum is taken over all individual sequences bounded by $X$ and~$Y$. \\
\ \\
Now take $\kappa > 1$, and set $U \eqdef \kappa U_1 > U_1$, so that $\sqrt{T}UX/(2dY) = \kappa$ (since ${\sqrt{T}U_1 X/(2dY) = 1}$). Moreover, for all individual sequences bounded by $X$ and $Y$, the regret on $B_1(U)$ is at least as large as the regret on $B_1(U_1)$ (since $U > U_1$). Combining the latter remark with the lower bound above and setting $c_2 \eqdef c_1 \sqrt{\ln 2}$ concludes the proof.
\end{proofref}

\vspace{0.3cm}
\begin{proofref}{Lemma~\ref{lem:chapL1-lower-Batchconversion}}
\label{proof:chapL1-lower-Batchconversion}
We use the same notations as in Step~1 of the proof of Theorem~\ref{thm:lowerbound}.
Let $(X',y')$ be a random copy of $(X_1,y_1)$ independent of the sample $(X_t,y_t)_{1 \leq t \leq T}$, and define the random vector $\bx' \eqdef \bigl(\gamma \varphi_1(X'), \ldots, \gamma \varphi_d(X')\bigr)$. By the tower rule, we have
\[
\E\bigl[(y_t-\tilde{f}_t(\bx_t)^2\bigr]=\E\Bigl[\E\bigl[(y_t-\tilde{f}_t(\bx_t))^2 \big| (\bx_s,y_s)_{s \leq t-1}\bigr] \Bigr] = \E\bigl[(y'-\tilde{f}_t(\bx')^2\bigr]~,
\]
where we used the fact that $\tilde{f}_t$ is built on the past data $(\bx_s,y_s)_{s \leq t-1}$ and that $(\bx',y')$ and $(\bx_t,y_t)$ are both independent of $(\bx_s,y_s)_{s \leq t-1}$ and are identically distributed. Similarly $\E\bigl[(y_t-\bu \cdot \bx_t)^2\bigr] = \E\bigl[(y'-\bu \cdot \bx')^2\bigr]$. Using the last equalities and the fact that $\E\bigl[\inf \{\ldots\}\bigr] \leq \inf \E\bigl[\{\ldots\}\bigr]$, we get
\begin{align}
& \E\!\left[\sum_{t=1}^T \bigl(y_t-\tilde{f}_t(\bx_t)\bigr)^2 - \inf_{\Norm{\bu}_1 \leq 1} \sum_{t=1}^T \bigl(y_t-\bu \cdot \bx_t\bigr)^2 \right] \nonumber \\
& \quad \geq T \left(\frac{1}{T} \sum_{t=1}^T \E\!\left[\bigl(y'-\tilde{f}_t(\bx')\bigr)^2\right] - \inf_{\Norm{\bu}_1 \leq 1} \E\!\left[\bigl(y'-\bu \cdot \bx'\bigr)^2 \right] \right) \nonumber \\
& \quad \geq T \left( \E\!\left[\bigl(y'-\hat{f}_T(X')\bigr)^2\right] - \inf_{\Norm{\bu}_1 \leq 1} \E\!\left[\bigl(y'-\bu \cdot \bx'\bigr)^2 \right] \right)  \label{eqn:lowerbound-standardTrick1}\\
& \quad = T \, \E\!\left[\bigl(\gamma \varphi_{\bu^*}(X') - \hat{f}_T(X')\bigr)^2\right] \label{eqn:lowerbound-standardTrick2} \\
& \quad = T \, \E\Norm{\hat{f}_T - \gamma \varphi_{\bu^*}}^2_{\mu}~. \nonumber
\end{align}
Inequality \eqref{eqn:lowerbound-standardTrick1} follows by definition of $\hat{f}_T \eqdef T^{-1}\sum_{t=1}^T \tilde{f}_t$ (see \eqref{eqn:lowerbound-standardTrick}) and by Jensen's inequality. As for Inequality \eqref{eqn:lowerbound-standardTrick2}, it follows by expanding the square
\[
\bigl(y'-\hat{f}_T(X')\bigr)^2 = \bigl(\gamma \varphi_{\bu^*}(X') - \hat{f}_T(X') + y'-\gamma \varphi_{\bu^*}(X')\bigr)^2~,
\]
by noting that $\E\bigl[ y'-\gamma \varphi_{\bu^*}(X') \big| X' \bigr]= 0$ (via \eqref{eqn:chapL1-lowerbound-model-reduction}) and by the fact that
\[
\inf_{\Norm{\bu}_1 \leq 1} \E\!\left[\bigl(y'-\bu \cdot \bx'\bigr)^2 \right] = \E\!\left[\bigl(y'-\gamma \varphi_{\bu^*}(X')\bigr)^2 \right]~,
\]
where we used $\Norm{\bu^*}_1 \leq 1$ (by definition of $\bu^*$) and $\bu \cdot \bx' = \gamma \varphi_{\bu}(X')$. This concludes the proof.
\end{proofref}

\vspace{0.3cm}
\begin{proofref}{Lemma~\ref{lem:chapL1-lower-technical}}
\label{proof:chapL1-lower-technical}
We use the same notations and assumptions as in the proof of Theorem~\ref{thm:lowerbound}.
Since the function $x \mapsto x \sqrt{\ln(1+1/x)}$ is nondecreasing on $\R_+^*$ and since $\kappa \geq \kappa_{\min} \eqdef \sqrt{\ln(1+2d)}/(2d \sqrt{\ln 2})$ by assumption, we have
\begin{align}
& \frac{c_{11} c_9^2}{\ln\bigl(2+16d^2\bigr)} d Y^2 \kappa \sqrt{\ln(1+1/\kappa)} \nonumber \\
& \quad \geq \frac{c_{11} c_9^2}{\ln\bigl(2+16d^2\bigr)} d Y^2 \kappa_{\min} \sqrt{\ln(1+1/\kappa_{\min})} \nonumber \\
& \quad = \frac{c_{11} c_9^2}{2 \sqrt{\ln 2}} Y^2 \frac{\sqrt{\ln(1+2d)}\sqrt{\ln\Bigl[1+2d \sqrt{\ln 2}/\sqrt{\ln(1+2d)}\Bigr]}}{\ln\bigl(2+16d^2\bigr)} \label{eqn:chapL1-lowerbound-technical-0} \\
& \quad \geq \frac{c_{11} c_9^2}{2 \sqrt{\ln 2}} Y^2 c_{12}~, \label{eqn:chapL1-lowerbound-technical-1}
\end{align}
where $c_{12}$ denotes the infimum of the last fraction of \eqref{eqn:chapL1-lowerbound-technical-0} over all $d \geq 1$; in particular, $c_{12} > 0$. It is now easy to see that by choosing the absolute constant $c_{13}>0$ small enough (where $c_{13}$ can be expressed in terms of $c_{11}$ and $c_{12}$), we have, for all $c_9 \in (0,c_{13})$,
\[
8 \cdot 2^{2-1/(8 c_9^2)} + 9 \cdot 2^{1-1/(8 c_9^2)} + \frac{2 c_9^2 \sqrt{6}}{\ln 2} 2^{1-1/(16 c_9^2)} \leq \frac{c_{11} c_9^2}{2 \sqrt{\ln 2}} c_{12}~.
\]
Multiplying both sides of the last inequality by $Y^2$ and combining it with \eqref{eqn:chapL1-lowerbound-technical-1} concludes the proof.
\end{proofref}

\subsection{Proofs of Theorem~\ref{thm:ell1-EG-regret} and Remark~\ref{rem:chapL1-simpler-analysis}}
\label{sec:chapL1-proof-EG-regret}

\begin{proofref}{Theorem~\ref{thm:ell1-EG-regret}}
The proof follows directly from Proposition~\ref{prop:adaptiveEGpm} and from the fact that the Lipschitzified losses are larger than their clipped versions. Indeed, first note that, by definition of $\hat{y}_t$ and $B_{t+1} \geq |y_t|$, we have
  \begin{align}
    \sum_{t=1}^T \Abs{y_t - \hat y_t}^{\alpha} & \leq \sum_{\substack{t=1 \\ t:|y_t|\leq B_t}}^T \Abs{y_t - \bigl[\hat \bu_t \cdot \bx_t\bigr]_{B_t}}^{\alpha}  + \sum_{\substack{t=1 \\ t:|y_t| > B_t}}^T (B_{t+1}+B_t)^{\alpha} \nonumber \\
    & \leq \sum_{\substack{t=1 \\ t:|y_t|\leq B_t}}^T \tilde{\ell}_t(\hat \bu_t) + \left(1+2^{-1/\alpha}\right)^{\alpha} \!\sum_{\substack{t=1 \\ t:B_{t+1} > B_t}}^T B_{t+1}^{\alpha} \nonumber  \\
    & \leq \sum_{t=1}^T \tilde{\ell}_t(\hat{\bu}_t) + 4
    \left(1+2^{-1/\alpha}\right)^{\alpha} Y^{\alpha}~, \label{eqn:chapL1-ISL-0}
  \end{align}
  where the second inequality follows from the fact that:
  \begin{itemize}
  \item if $|y_t| \leq B_t$ then $\Abs{y_t - [\hat{\bu}_t \cdot
    \bx_t]_{B_t}}^{\alpha} \leq \tilde{\ell}_t(\hat{\bu}_t)$ by
    Eq.\ (\ref{eqn:Lip-properties});
  \item if $|y_t| > B_t$, which is equivalent to $B_{t+1} > B_t$ by
    definition of $B_{t+1}$, then $B_t \leq B_{t+1}/2^{1/\alpha}$, so that
    $B_{t+1} + B_t \leq \left(1+2^{-1/\alpha}\right) B_{t+1}$.
  \end{itemize}
  As for the third inequality above, we used the non-negativity of
  $\tilde{\ell}_t(\hat{\bu}_t)$ and upper bounded the geometric sum
  $\sum_{t:B_{t+1} > B_t}^T B_{t+1}^{\alpha}$ in the same way as in
  \cite[Theorem~6]{CeMaSt07SecOrder}, \ie, setting $K \eqdef \bigl\lceil
  \log_2 \max_{1 \leq t \leq T} \Abs{y_t}^{\alpha} \bigr\rceil$,
  \[
  \sum_{t:B_{t+1} > B_t}^T B_{t+1}^{\alpha} \leq \sum_{k=-\infty}^K 2^k =
  2^{K+1} \leq 4 Y^{\alpha}~.
  \]
  To bound \eqref{eqn:chapL1-ISL-0} further from above, we now use the fact that, by construction, the LEG algorithm is the adaptive $\textrm{EG}^{\pm}$ algorithm applied to the modified loss functions $\tilde{\ell}_t$. Therefore, we get from Proposition~\ref{prop:adaptiveEGpm} that
\begin{align}
\sum_{t=1}^T \tilde{\ell}_t(\hat{\bu}_t) & \leq \inf_{\Norm{\bu}_1 \leq U} \sum_{t=1}^T \tilde{\ell}_t(\bu) \nonumber \\
& \quad + 4 U \sqrt{\left(\sum_{t=1}^T \norm[\nabla \tilde{\ell}_t(\hat{\bu}_t)]_{\infty}^2\right) \ln(2d)} + U \, \bigl(8 \ln(2d) + 12 \bigr) \max_{1 \leq t \leq T} \norm[\nabla \tilde{\ell}_t(\hat{\bu}_t)]_{\infty}~. \label{eqn:chapL1-ISL-1}
  \end{align}
We can now follow the same lines as in Corollary~\ref{cor:introM-EGpm-square}, except that we use the particular shape of the Lipschitzified losses. We first derive some properties of the gradients $\nabla \tilde{\ell}_t$. Observe from the definition of $\tilde{\ell}_t$ in Section~\ref{sec:loss-Lip} that in both cases $|y_t| > B_t$ and $|y_t| \leq B_t$, the function $\widetilde{\ell}_t$ is continuously differentiable. Moreover, if $|y_t| \leq B_t$, then
\begin{equation*}
\forall \bu \in \R^d~, \quad \nabla \tilde{\ell}_t(\bu) = -\alpha \, \sgn\bigl(y_t - [\bu \cdot \bx_t]_{B_t} \bigr)  \Abs{y_t - [\bu \cdot \bx_t]_{B_t}}^{\alpha-1} \bx_t~,
\end{equation*}
where for all $x \in \R$, the quantity $\sgn(x)$ equals $1$ (resp.\ $-1$, $0$) if $x>0$ (resp.\ $x<0$, $x=0$).\\

\noindent
Therefore, in both cases $|y_t| > B_t$ and $|y_t| \leq B_t$, the function $\widetilde{\ell}_t$ is Lipschitz continuous with respect to
$\Norm{\cdot}_1$ with Lipschitz constant $\sup_{\bu \in \R^d} \Norm{\nabla \tilde{\ell}_t}_{\infty}$ bounded as follows: for all $\bu \in \R^d$,
\begin{align}
	\Norm{\nabla \tilde{\ell}_t(\bu)}_{\infty} & \leq \alpha \, \Abs{y_t - [\bu \cdot \bx_t]_{B_t}}^{\alpha-1} \, \Norm{\bx_t}_{\infty} \label{eqn:chapL1-Lip-gradients} \\
  & \leq \alpha \, \bigl(|y_t|+B_t\bigr)^{\alpha-1} \Norm{\bx_t}_{\infty} \leq \alpha \,  \bigl(1+2^{1/\alpha}\bigr)^{\alpha-1} \left(\max_{1 \leq s \leq t} |y_s|\right)^{\alpha-1} \Norm{\bx_t}_{\infty}~,\label{eqn:Lip-constant}
\end{align}
where we used the fact that $B_t \leq 2^{1/\alpha} \max_{1 \leq s \leq t-1} |y_s|$.\\

We can draw several consequences from the inequalities above. First note that, by \eqref{eqn:Lip-constant},
\begin{equation}
\label{eqn:adaptiveEG-lip-1}
\max_{1 \leq t \leq T} \Arrowvert\nabla \tilde{\ell}_t(\hat{\bu}_t)\Arrowvert_{\infty} \leq \alpha \bigl(1+2^{1/\alpha}\bigr)^{\alpha-1} X Y^{\alpha-1}~.
\end{equation}
Moreover, using \eqref{eqn:chapL1-Lip-gradients} and the definition of $\hat{y}_t$ in Figure~\ref{fig:algo-LEG}, we can see that the gradients $\nabla \tilde{\ell}_t(\hat{\bu}_t)$ satisfy $\norm[\nabla \tilde{\ell}_t(\hat{\bu}_t)]_{\infty} \leq \alpha \, \Abs{y_t - \hat{y}_t}^{\alpha-1} \, \Norm{\bx_t}_{\infty} \leq \alpha X \Abs{y_t - \hat{y}_t}^{\alpha-1}$. This entails that
\begin{align}
\norm[\nabla \tilde{\ell}_t(\hat{\bu}_t)]_{\infty}^2 & \leq \alpha^2 X^2 \big|y_t - \hat{y}_t\big|^{2\alpha-2} = \alpha^2 X^2 \big|y_t -\hat{y}_t\big|^{\alpha-2} \, \big|y_t - \hat{y}_t\big|^{\alpha} \nonumber \\
& \leq \alpha^2 X^2 \bigl( (1+2^{1/\alpha}) Y \bigr)^{\alpha-2} \, \big|y_t - \hat{y}_t\big|^{\alpha}~, \label{eqn:adaptiveEG-lip-2}
\end{align}
where we used the upper bounds $\Abs{y_t} \leq Y$ and $\Abs{\hat{y}_t} \eqdef \Abs{\bigl[\hat{\bu}_t \cdot \bx_t\bigr]_{B_t}} \leq B_t \leq 2^{1/\alpha} Y$.
Substituting \eqref{eqn:adaptiveEG-lip-1} and~\eqref{eqn:adaptiveEG-lip-2} in~\eqref{eqn:chapL1-ISL-1} and combining the resulting bound with \eqref{eqn:chapL1-ISL-0}, we get
\begin{align*}
\sum_{t=1}^T \Abs{y_t - \hat y_t}^{\alpha} & \leq \inf_{\Norm{\bu}_1 \leq U} \sum_{t=1}^T \tilde{\ell}_t(\bu) + a_{\alpha} U X Y^{\alpha/2-1} \, \sqrt{\left( \sum_{t=1}^T \Abs{y_t - \hat y_t}^{\alpha} \right)\ln(2d)} \\[0.1cm]
& \qquad + \, \underbrace{\bigl(8 \ln(2d)+12 \bigr) \, b_{\alpha} \, U X Y^{\alpha-1}}_{\eqdef \, C_1} + \, \underbrace{4\bigl(1+2^{-1/\alpha}\bigr)^{\alpha} Y^{\alpha}}_{ \eqdef \, C_2}~,
\end{align*}
where we set $a_{\alpha} \eqdef 4 \alpha \, \bigl(1+2^{1/\alpha}\bigr)^{\alpha/2-1}$ and $b_{\alpha} \eqdef \alpha \, \bigl(1+2^{1/\alpha}\bigr)^{\alpha-1}$.

To simplify the notations we also set $\hat{L}_T \eqdef \sum_{t=1}^T \Abs{y_t - \hat y_t}^{\alpha}$ and $\tilde{L}_T^* \eqdef \min_{\norm[\bu]_1 \leq U} \sum_{t=1}^T \tilde{\ell}_t(\bu)$, so that the previous inequality can be rewritten as
\[
\hat{L}_T \leq \tilde{L}_T^* + C_1 + C_2 + a_{\alpha} U X Y^{\alpha/2-1} \, \sqrt{\hat{L}_T \ln(2d)}~.
\]

\noindent
Solving for $\hat{L}_T$ via Lemma~\ref{lem:introM-solvingRegret} in \refapx{sec:chapL1-lemmas} (used with $a=\tilde{L}_T^* + C_1 + C_2$ and $b=a_{\alpha} U X Y^{\alpha/2-1} \sqrt{\ln(2d)}$), we get that
\begin{align}
\hat{L}_T & \leq \tilde{L}_T^* + C_1 + C_2 + \left( a_{\alpha} U X Y^{\alpha/2-1} \, \sqrt{\ln(2d)} \right) \sqrt{\tilde{L}_T^* + C_1 + C_2} + \left(a_{\alpha} U X Y^{\alpha/2-1} \, \sqrt{\ln(2d)} \right)^2 \nonumber \\
& \leq \tilde{L}_T^* + a_{\alpha} U X Y^{\alpha/2-1} \, \sqrt{\tilde{L}_T^* \ln(2d)} \nonumber \\
& \qquad + a_{\alpha} U X Y^{\alpha/2-1} \, \sqrt{(C_1 + C_2) \ln(2d)} + a_{\alpha}^2 U^2 X^2 Y^{\alpha-2} \ln(2d) + C_1 + C_2~.\label{eqn:chapL1-ISL-2}
\end{align}
To conclude the proof, it just suffices to bound the term $a_{\alpha} U X Y^{\alpha/2-1} \, \sqrt{(C_1 + C_2) \ln(2d)}$ from above. First note that
\begin{align}
\sqrt{(C_1 + C_2) \ln(2d)} & \leq \sqrt{C_1 \ln(2d)} + \sqrt{C_2 \ln(2d)} \nonumber \\
& \leq \sqrt{C_1 \ln(2d)} + 2 \bigl(1+2^{-1/\alpha}\bigr)^{\alpha/2} Y^{\alpha/2} \sqrt{\ln(2d)}~, \label{eqn:chapL1-ISL-3}
\end{align}
where the last inequality follows by definition of $C_2$ above. Now, to upper bound $\sqrt{C_1 \ln(2d)}$, we note that, by definition of $C_1$,
\begin{align*}
\sqrt{C_1 \ln(2d)} & = \ln(2d) \, \sqrt{\bigl(8 + 12/\ln(2d) \bigr) \, b_{\alpha} \, U X Y^{\alpha-1}} \\
& \leq \ln(2d) \, \sqrt{\bigl(8 + 12/\ln 2 \bigr) \, b_{\alpha}} \, \, \frac{UXY^{\alpha/2-1} + Y^{\alpha/2}}{\sqrt{2}}~,
\end{align*}
where we used the elementary upper bound $\sqrt{a b} \leq (a+b)/2$ with $a=U X Y^{\alpha/2-1}$ and $b=Y^{\alpha/2}$. Substituting the last inequality in~\eqref{eqn:chapL1-ISL-3} and using $\sqrt{\ln(2d)} \leq \ln(2d)/\sqrt{\ln 2}$, we finally get that
\begin{align*}
& a_{\alpha} U X Y^{\alpha/2-1} \, \sqrt{(C_1 + C_2) \ln(2d)} \\
& \qquad \leq a_{\alpha} \ln(2d) \left( \sqrt{b_{\alpha} \bigl(4+6/\ln 2 \bigr)} + 2 \bigl(1+2^{-1/\alpha}\bigr)^{\alpha/2} / \sqrt{\ln 2} \right) UXY^{\alpha-1} \\
& \qquad \qquad + a_{\alpha} \ln(2d) \, \sqrt{b_{\alpha} \bigl(4+6/\ln 2 \bigr)} \; U^2 X^2 Y^{\alpha-2}~.
\end{align*}
Substituting the last inequality into \eqref{eqn:chapL1-ISL-2} and rearranging terms concludes the proof.
\end{proofref}

\vspace{0.3cm}
\begin{proofref}{Remark~\ref{rem:chapL1-simpler-analysis}}
Recall that in this remark, we focus on the square loss (\ie, $\alpha=2$) and that we set $c_1 \eqdef 8 \bigl(\sqrt{2}+1\bigr)$ and $c_2 \eqdef 4\left(1+1/\sqrt{2}\right)^2$. By the key property \eqref{eqn:Lip-properties} that holds for all rounds $t$ such that $|y_t| \leq B_t$ (the other rounds accounting only for an additional total loss at most of $c_2 Y^2$, see \eqref{eqn:chapL1-ISL-0}), we get
  \begin{align}
    \sum_{t=1}^T (y_t - \hat y_t)^2 - \inf_{\Norm{\bu}_1 \leq U} \sum_{t=1}^T (y_t - \bu \cdot \bx_t)^2 & \leq  \sum_{t=1}^T \tilde{\ell}_t(\hat{\bu}_t) - \inf_{\Norm{\bu}_1 \leq U} \sum_{t=1}^T  \tilde{\ell}_t(\bu) + c_2 Y^2 \nonumber \\
    & \leq 4 U \max_{1 \leq t \leq T} \Norm{\nabla
      \tilde{\ell}_t(\hat{\bu}_t)}_{\infty} \left(\sqrt{T \ln (2d)} + 2 \ln
      (2d) + 3 \right) + c_2 Y^2 \label{eqn:chapL1-simpler-analysis-1} \\
    &  \leq c_1 U X Y \left(\sqrt{T \ln (2 d)} + 8 \ln (2
      d)\right) + c_2 Y^2~, \label{eqn:chapL1-simpler-analysis-2}
  \end{align}
  where \eqref{eqn:chapL1-simpler-analysis-1} follows from the remark in Proposition~\ref{prop:adaptiveEGpm} involving the uniform bound $\max_{1 \leq t \leq T} \Arrowvert \nabla \tilde{\ell}_t(\hat{\bu}_t) \Arrowvert_{\infty}$, and where \eqref{eqn:chapL1-simpler-analysis-2} follows from $\max_{1 \leq t \leq T} \Arrowvert \nabla \tilde{\ell}_t(\hat{\bu}_t) \Arrowvert_{\infty} \leq 2 \bigl(1+\sqrt{2}\bigr) X Y$ (by \eqref{eqn:Lip-constant}) and from the elementary inequality $3 \leq 6 \ln (2d)$.
\end{proofref}

\section{Lemmas}
\label{sec:chapL1-lemmas}

The next elementary lemma is due to \cite[Appendix~III]{CeLuSt-05-LabelEfficient}. It is useful to compute an upper bound on the cumulative loss $\hat{L}_T$ of a forecaster when $\hat{L}_T$ satisfies an inequality of the form~\eqref{eqn:solvingRegret}.

\begin{lemma}
\label{lem:introM-solvingRegret}
Let $a,b \geq 0$. Assume that $x \geq 0$ satisfies the inequality
\begin{equation}
\label{eqn:solvingRegret}
x \leq a + b \sqrt{x}~.
\end{equation}
Then,
\[
x \leq a + b \sqrt{a} + b^2~.
\]
\end{lemma}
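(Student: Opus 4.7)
The plan is to treat the hypothesis as a quadratic inequality in the variable $\sqrt{x}$. Writing the hypothesis as $(\sqrt{x})^2 - b\sqrt{x} - a \leq 0$ and noting that $\sqrt{x} \geq 0$, the quadratic formula gives
\[
\sqrt{x} \leq \frac{b + \sqrt{b^2 + 4a}}{2}.
\]
Squaring both sides yields
\[
x \leq \frac{b^2 + 2b\sqrt{b^2+4a} + b^2 + 4a}{4} = \frac{b^2 + 2a + b\sqrt{b^2+4a}}{2}.
\]

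Next I would control the remaining square root using the elementary inequality $\sqrt{b^2+4a} \leq b + 2\sqrt{a}$, which itself follows from squaring (both sides being nonnegative): indeed $(b+2\sqrt{a})^2 = b^2 + 4b\sqrt{a} + 4a \geq b^2 + 4a$. Substituting this into the previous display gives
\[
x \leq \frac{b^2 + 2a + b(b + 2\sqrt{a})}{2} = \frac{2b^2 + 2a + 2b\sqrt{a}}{2} = a + b\sqrt{a} + b^2,
\]
which is the claimed bound.

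The only delicate point is that the naive simplification $\sqrt{x} \leq \frac{b + \sqrt{b^2+4a}}{2} \leq \frac{b + b + 2\sqrt{a}}{2} = b + \sqrt{a}$ followed by squaring gives the weaker bound $x \leq a + 2b\sqrt{a} + b^2$, so the sub-additivity of the square root must be applied \emph{after} squaring rather than before in order to recover the tight constant in front of $b\sqrt{a}$. Apart from this small bookkeeping subtlety, the proof is a direct application of the quadratic formula, and the degenerate cases $a=0$ or $b=0$ are covered without special treatment.
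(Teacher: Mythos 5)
Your proof is correct: the quadratic-formula step, the squaring, and the bound $\sqrt{b^2+4a} \leq b + 2\sqrt{a}$ are all valid for $a,b \geq 0$, and they combine to give exactly $x \leq a + b\sqrt{a} + b^2$. The paper itself gives no proof of this lemma, deferring to the cited reference, and your argument is precisely the standard one found there; your remark that sub-additivity of the square root must be applied \emph{after} squaring (otherwise one only gets the weaker $a + 2b\sqrt{a} + b^2$) is a correct and worthwhile observation.
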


\vspace{0.2cm}
The next lemma is useful to prove Theorem~\ref{thm:upperbound}. At the end of this section, we also provide an elementary lemma about the exponentially weighted average forecaster combined with clipping.

\vspace{0.2cm}
\begin{lemma}
\label{lem:chapL1-upper-lemma}
Let $d, T \in \Zp$, and $U, X, Y >0$. The minimax regret on $B_1(U)$ for bounded base predictions and observations satisfies
  \begin{align*}
    & \inf_F \sup_{\Norm{\bx_t}_\infty \leq X,\; \Abs{y_t}\leq Y} \Biggl\{ \sum_{t=1}^T (y_t - \hat y_t)^2 - \inf_{\Norm{\bu}_1 \leq U} \sum_{t=1}^T (y_t - \bu \cdot \bx_t)^2 \Biggr\} \\
    & \qquad \leq \min\left\{3UXY\sqrt{2 T \ln (2d)}, \, 32 \, d Y^2 \ln\!\left(1+\frac{\sqrt{T} U X}{d Y}\right) + d Y^2 \right\}~,
\end{align*}
  where the infimum is taken over all forecasters $F$ and where the
  supremum extends over all sequences $(\bx_t,y_t)_{1 \leq t \leq T}
  \in (\R^d \times \R)^T$ such that $|y_1|, \ldots, |y_T| \leq Y$ and
  $\Norm{\bx_1}_{\infty}, \ldots, \Norm{\bx_T}_{\infty} \leq~X$.
\end{lemma}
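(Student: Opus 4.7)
The minimax regret being upper bounded by a minimum reduces to establishing each of the two upper bounds separately, and then the $\inf_F \sup \ldots$ is automatically bounded by their minimum. My plan is thus to exhibit one forecaster per bound.

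For the first bound $3UXY\sqrt{2T\ln(2d)}$, I would invoke the $\textrm{EG}^{\pm}$ algorithm of \cite{KiWa97EGvsGD} tuned with oracle knowledge of $U$, $X$, $Y$, and $T$. Its standard analysis (Theorem~5.11 in \cite{KiWa97EGvsGD}) gives a regret against $B_1(U)$ of the form $c_1 UX\sqrt{L_T^* \ln(2d)} + c_2 U^2 X^2 \ln(2d)$ with explicit constants. Plugging in the trivial bound $L_T^* \leq TY^2$ and solving a quadratic inequality in $\widehat{L}_T$ (via a variant of Lemma~\ref{lem:introM-solvingRegret}) absorbs the quadratic correction term into the main one, producing a bound of the form $c \, UXY\sqrt{T\ln(2d)}$. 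A careful choice of the step size plus some elementary manipulations is what pins the constant down to $3\sqrt{2}$.

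For the second bound $32\,dY^2\ln(1+\sqrt{T}UX/(dY)) + dY^2$, I would invoke the $\textrm{SeqSEW}^{B,\eta}_\tau$ algorithm of \cite{Ger-11colt-SparsityRegretBounds}, whose regret against any $\bu \in B_1(U)$ is of order $dY^2\ln(1+\sqrt{T}UX/(dY))$. This is characteristic of sparsity-based or ridge-type forecasters that exploit the ambient dimension $d$ directly and incur only a logarithmic dependence on the radius $U$; the Vovk--Azoury--Warmuth sequential ridge regression forecaster \cite{AzWa01RelativeLossBounds,Vo01CompetitiveOnline} would yield a slightly weaker but structurally identical bound. Should both bounds happen to exceed $TY^2$, the trivial null forecaster $\hat{y}_t \equiv 0$ (with regret at most $\sum_t y_t^2 \leq TY^2$) serves as an additional fallback.

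The main difficulty here is not conceptual but rather a matter of bookkeeping: verifying that the constants match the clean factors $3\sqrt{2}$ and $32$ stated in the lemma. In particular, for the $\textrm{EG}^{\pm}$ bound this requires tracking the constants through the quadratic inequality derived from the self-bounding regret inequality, while for the dimension-dependent bound this requires invoking the precise constants from the $\textrm{SeqSEW}$ analysis. Modulo these verifications, the lemma follows immediately by taking the smaller of the two guarantees.
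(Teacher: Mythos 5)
Your decomposition and choice of algorithms coincide with the paper's proof: one forecaster per term of the minimum, the $\textrm{EG}^{\pm}$ algorithm of \cite{KiWa97EGvsGD} for the $\sqrt{T}$ bound, the $\textrm{SeqSEW}^{B,\eta}_{\tau}$ algorithm of \cite{Ger-11colt-SparsityRegretBounds} for the dimension-dependent bound, and the null forecaster as a fallback. Your treatment of the second bound is exactly the paper's.

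The gap is in the first bound. Plugging $B = TY^2$ into Theorem~5.11 of \cite{KiWa97EGvsGD} yields a regret of $2UXY\sqrt{2T\ln(2d)} + 2U^2X^2\ln(2d)$, and no quadratic-inequality manipulation will absorb the second term into the first for all $U$: it grows as $U^2$ while the main term grows as $U$, so for $U$ large it dominates and the $\textrm{EG}^{\pm}$ forecaster alone does not achieve $3UXY\sqrt{2T\ln(2d)}$. (If you instead start from a self-referential inequality in $\hat{L}_T$ and solve it via Lemma~\ref{lem:introM-solvingRegret}, a term of order $U^2X^2\ln(2d)$ still survives.) The correct mechanism, and the one the paper uses, is a dichotomy on $U$ rather than a choice of step size: if $U \geq (Y/X)\sqrt{T/(2\ln(2d))}$, then $3UXY\sqrt{2T\ln(2d)} \geq 3TY^2 \geq TY^2$ and the null forecaster (whose regret is at most $\sum_t y_t^2 \leq TY^2$ since $\bzero \in B_1(U)$) already meets the bound; otherwise $UX < Y\sqrt{T/(2\ln(2d))}$ gives $2U^2X^2\ln(2d) \leq UXY\sqrt{2T\ln(2d)}$, and the two terms sum to $3UXY\sqrt{2T\ln(2d)}$. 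Note also that your fallback is conditioned incorrectly: since the lemma requires the minimax regret to be below \emph{each} term of the minimum separately, the null forecaster must be invoked whenever the \emph{first} bound exceeds $TY^2$ (i.e., for large $U$), irrespective of whether the second bound does.
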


\vspace{0.4cm}
\begin{proof}
We treat each of the two terms in the above minimum separately.\\

\noindent
{\bf Step 1}: We prove that their exists a forecaster $F$ whose worst-case regret on $B_1(U)$ is upper bounded by $3UXY\sqrt{2 T \ln (2d)}$.\\

First note that if $U \geq (Y/X) \sqrt{T/(2 \ln(2d))}$, then the upper bound $3UXY\sqrt{2 T \ln (2d)} \geq 3 T Y^2 \geq T Y^2$ is trivial (by choosing the forecaster $F$ which outputs $\hat{y}_t=0$ at each time $t$). \\
	
We can thus assume that $U < (Y/X) \sqrt{T/(2 \ln(2d))}$. Consider the $\textrm{EG}^{\pm}$ algorithm as given in \cite[Theorem~5.11]{KiWa97EGvsGD}, and denote by $\hat{\bu}_t \in B_1(U)$ the linear combination it outputs at each time $t \geq 1$. Then, by the aforementioned theorem, this forecaster satisfies, uniformly over all individual sequences bounded by $X$ and $Y$, that
	\begin{align}
	& \sum_{t=1}^T (y_t - \hat{\bu}_t \cdot \bx_t)^2 - \inf_{\Norm{\bu}_1 \leq U} \sum_{t=1}^T (y_t - \bu \cdot \bx_t)^2 \nonumber \\
	& \quad \leq 2UXY\sqrt{2 T \ln(2d)} + 2 U^2 X^2 \ln(2d) \nonumber \\
	& \quad \leq 2UXY\sqrt{2 T \ln(2d)} + 2 \left(Y \sqrt{\frac{T}{2 \ln(2d)}} \, \right) U X \ln(2d) \label{eqn:EG-upperbound} \\
	& \quad \leq 3 U X Y \sqrt{2 T \ln(2d)}~, \nonumber
\end{align}
where (\ref{eqn:EG-upperbound}) follows from the assumption $U X < Y \sqrt{T/(2 \ln(2d))}$. This concludes the first step of this proof. \\

\noindent
{\bf Step 2}: We prove that their exists a forecaster $F$ whose worst-case regret on $B_1(U)$ is upper bounded by $32 \, d Y^2 \ln\!\left(1+\frac{\sqrt{T} U X}{d Y}\right) + d Y^2$.

  Such a forecaster is given by the sparsity-oriented algorithm $\textrm{SeqSEW}^{B,\eta}_{\tau}$ of \cite{Ger-11colt-SparsityRegretBounds} (we could also get a slightly worse bound with the sequential ridge regression forecaster of \cite{AzWa01RelativeLossBounds,Vo01CompetitiveOnline}). Indeed, by \cite[Proposition~1]{Ger-11colt-SparsityRegretBounds}, the cumulative square loss of the algorithm $\textrm{SeqSEW}^{B,\eta}_{\tau}$ tuned with $B=Y$, $\eta=1/(8 Y^2)$ and
  $\tau=Y/(\sqrt{T} X)$ is upper bounded by
  \begin{align*}
    & \inf_{\bu \in \R^d} \left\{\sum_{t=1}^T \bigl(y_t - \bu \cdot \bx_t\bigr)^2 + 32 \Norm{\bu}_0 Y^2 \ln\!\left(1 + \frac{\sqrt{T} X \Norm{\bu}_1}{\Norm{\bu}_0 Y}\right)\right\} + d Y^2 \\
    & ~\leq \inf_{\Norm{\bu}_1 \leq U} \left\{\sum_{t=1}^T \bigl(y_t -
      \bu \cdot \bx_t\bigr)^2 \right\} + 32 d Y^2 \ln\!\left(1 +
      \frac{\sqrt{T} X U}{d Y} \right) + d Y^2~,
  \end{align*}
  where the last inequality follows by monotonicity\footnote{Note that
    for all $A>0$, the function $x \mapsto x \ln(1+A/x)$ (continuously
    extended at $x=0$) has a nonnegative first derivative and is thus
    nondecreasing on $\R_+$.} in $\Norm{\bu}_0$ and $\Norm{\bu}_1$ of
  the second term of the left-hand side. This concludes the proof.
\end{proof}

\vspace{0.5cm}
Next we recall a regret bound satisfied by the standard exponentially weighted average forecaster applied to clipped base forecasts. Assume that at each time~$t\geq 1$, the forecaster has access to $K \geq 1$ base forecasts $\hat{y}^{(k)}_t \in \R$,
$k=1,\ldots,K$, and that for some known bound $Y>0$ on the observations, the forecaster predicts at time~t as
\[
\hat{y}_t \eqdef \sum_{k=1}^K p_{k,t} \bigl[\hat{y}^{(k)}_t\bigr]_Y~.
\]
In the equation above, $[x]_Y \eqdef \min\{Y,\max\{-Y,x\}\}$ for all $x \in \R$, and the weight vectors $\bp_t \in \R^K$ are given by $\bp_1=(1/K, \ldots, 1/K)$ and, for all $t=2,\ldots,T$, by
\[
p_{k,t} \eqdef \frac{\exp\left(-\eta \sum_{s=1}^{t-1}
    \left(y_s-\bigl[\hat{y}^{(k)}_s\bigr]_Y\right)^2\right)}{\sum_{j=1}^K
  \exp\left(-\eta \sum_{s=1}^{t-1}
    \left(y_s-\bigl[\hat{y}^{(j)}_s\bigr]_Y\right)^2\right)}~, \quad 1
\leq k \leq K~,
\]
for some inverse temperature parameter $\eta > 0$ to be chosen below. The next lemma is a straigthforward consequence of Theorem~3.2 and Proposition~3.1 of \cite{cesa-bianchi06prediction}.

\vspace{0.1cm}
\begin{lemma}[Exponential weighting with clipping]
  \label{lem:EWA-exp-concave}
  Assume that the forecaster knows beforehand a bound $Y>0$ on the
  observations $|y_t|$, $t=1,\ldots,T$.
  Then, the exponentially weighted average forecaster tuned with $\eta
  \leq 1/(8Y^2)$ and with clipping $[\, \cdot \,]_Y$ satisfies
  \[
  \sum_{t=1}^T \bigl(y_t - \hat{y}_t\bigr)^2 \leq \min_{1 \leq k \leq
    K} \sum_{t=1}^T \bigl(y_t - \hat{y}^{(k)}_t\bigr)^2 + \frac{\ln
    K}{\eta}~.
  \]
\end{lemma}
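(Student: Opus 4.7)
The proof decomposes into two standard ingredients. First, since $|y_t| \leq Y$ by assumption, clipping the base forecasts into $[-Y,Y]$ can only decrease the square loss against $y_t$: indeed, for every $k=1,\ldots,K$ and $t=1,\ldots,T$, the projection $[\hat{y}^{(k)}_t]_Y$ onto the convex set $[-Y,Y]$ is no farther from $y_t$ than $\hat{y}^{(k)}_t$ itself, so
\[
\bigl(y_t - [\hat{y}^{(k)}_t]_Y\bigr)^2 \leq \bigl(y_t - \hat{y}^{(k)}_t\bigr)^2.
\]
Summing over $t$, it therefore suffices to prove the regret bound against the clipped base forecasters.

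Second, on the domain $[-Y,Y]$ the square loss function $z \mapsto (y_t-z)^2$ is $\eta$-exp-concave for every $\eta \leq 1/(8Y^2)$: a direct computation (or a well-known calculation in \cite{cesa-bianchi06prediction}) shows that $z \mapsto \exp\bigl(-\eta (y_t-z)^2\bigr)$ is concave on $[-Y,Y]$ provided $\eta \leq 1/(8Y^2)$, since $|y_t-z| \leq 2Y$ there. This is exactly the hypothesis needed to apply the standard regret bound for the exponentially weighted average forecaster with an exp-concave loss (Proposition~3.1 combined with Theorem~3.2 in \cite{cesa-bianchi06prediction}).

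Applying that bound to the $K$ clipped base forecasters $\bigl([\hat y^{(k)}_t]_Y\bigr)_{t \geq 1}$, $k=1,\ldots,K$, with the uniform initial weight vector $\bp_1=(1/K,\ldots,1/K)$ yields
\[
\sum_{t=1}^T \bigl(y_t - \hat{y}_t\bigr)^2 \leq \min_{1 \leq k \leq K} \sum_{t=1}^T \bigl(y_t - [\hat{y}^{(k)}_t]_Y\bigr)^2 + \frac{\ln K}{\eta},
\]
where we used that $\hat{y}_t = \sum_{k} p_{k,t} [\hat{y}^{(k)}_t]_Y$ is precisely the exponentially weighted average of the clipped forecasts computed from their past square losses. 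Combining this with the first step concludes the proof.

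There is essentially no obstacle here; the only non-trivial verification is the exp-concavity constant $1/(8Y^2)$ and the appropriate citation of the black-box regret bound, both of which are entirely routine.
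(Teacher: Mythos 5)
Your proof is correct and follows essentially the same route as the paper: exp-concavity of the square loss on $[-Y,Y]$ with constant $1/(8Y^2)$, the black-box regret bound of Theorem~3.2 and Proposition~3.1 of \cite{cesa-bianchi06prediction} applied to the clipped forecasts, and the observation that clipping to $[-Y,Y]$ can only reduce the loss since $|y_t|\leq Y$. No gaps.
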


\begin{proofref}{Lemma~\ref{lem:EWA-exp-concave}}
The proof follows straightforwardly from Theorem~3.2 and Proposition~3.1 of \cite{cesa-bianchi06prediction}. To apply the latter result, recall from  \cite[Remark~3]{Vo01CompetitiveOnline} that the square loss is $1/(8 Y^2)$-exp-concave on $[-Y, Y]$ and thus $\eta$-exp-concave\footnote{This means that for all $y \in [-Y,Y]$, the function $x \mapsto \exp\bigl(- \eta (y-x)^2\bigr)$ is concave on $[-Y, Y]$.} (since $\eta \leq 1/(8 Y^2)$ by assumption). Therefore, by definition of our forecaster above, Theorem~3.2 and Proposition~3.1 of \cite{cesa-bianchi06prediction} yield
  \[
  \sum_{t=1}^T \bigl(y_t - \hat{y}_t\bigr)^2 \leq \min_{1 \leq k \leq
    K} \sum_{t=1}^{T} \left(y_t-\bigl[\hat{y}^{(k)}_t\bigr]_Y\right)^2
  + \frac{\ln K}{\eta}~.
  \]
  To conclude the proof, note for all $t=1,\ldots,T$ and
  $k=1,\ldots,K$ that $|y_t| \leq Y$ by assumption, so that clipping
  the base forecasts to $[-Y,Y]$ can only improve prediction, \ie,
  $\bigl(y_t-\bigl[\hat{y}^{(k)}_t\bigr]_Y\bigr)^2 \leq
  \bigl(y_t-\hat{y}^{(k)}_t\bigr)^2$.
\end{proofref}

\section{Additional tools}
\label{apx:additional}

The next approximation argument is originally due to Maurey, and was
used under various forms, \eg, in \cite{Nem-00-TopicsNonparametric,Tsy-03-OptimalRates,BuNo08SeqProcedures,ShSrZh-10-Sparsifiability} (see also \cite{Yan-04-BetterPerformance}).

\vspace{0.2cm}
\begin{lemma}[Approximation argument]
  \label{lem:Maurey-approx}
  Let $U > 0$ and $m \in \N^*$. Define the following finite subset of
  $B_1(U)$:
  \[
  \tilde{B}_{U,m} \eqdef \left\{ \left(\frac{k_1 U}{m}, \ldots,
      \frac{k_d U}{m}\right): (k_1, \ldots, k_d) \in \Z^d,
    \sum_{j=1}^d |k_j| \leq m \right\} \subset B_1(U)~.
  \]
  Then, for all $(\bx_t,y_t)_{1 \leq t \leq T} \in \bigl(\R^d \times
  \R \bigr)^T$ such that $ \max_{1 \leq t \leq T}
  \Norm{\bx_t}_{\infty} \leq X$,
  \[
  \inf_{\bu \in \tilde{B}_{U,m}} \sum_{t=1}^T (y_t - \bu \cdot
  \bx_t)^2 \leq \inf_{\bu \in B_1(U)} \sum_{t=1}^T (y_t - \bu \cdot
  \bx_t)^2 + \frac{T U^2 X^2}{m}~.
  \]
\end{lemma}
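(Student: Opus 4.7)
My plan is to prove the lemma by a standard Maurey-type probabilistic approximation argument. Fix a minimizer (or near-minimizer, for an $\epsilon$-argument) $\bu^* \in B_1(U)$ of $\sum_{t=1}^T (y_t - \bu \cdot \bx_t)^2$. The idea is to construct a random element $\bar{V} \in \tilde{B}_{U,m}$ whose expected empirical square loss is close to that of $\bu^*$, and then use the probabilistic method to extract a deterministic realization.

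More concretely, I would define a random vector $V$ taking values in the finite set $\{\boldsymbol{0}\} \cup \{\gamma U \be_j : 1 \leq j \leq d,\ \gamma \in \{+,-\}\}$ as follows: set $p_j \eqdef |u_j^*|/U$ for $j=1,\ldots,d$ (these sum to at most $1$ since $\Norm{\bu^*}_1 \leq U$), put $p_0 \eqdef 1-\sum_j p_j \geq 0$, and let $\mP(V = \sgn(u_j^*)\, U \be_j) = p_j$ for each $j$, with $\mP(V=\boldsymbol{0})=p_0$. A direct computation gives $\E[V] = \bu^*$. Then draw $V_1,\ldots,V_m$ i.i.d.\ copies of $V$ and set $\bar{V} \eqdef \frac{1}{m}\sum_{i=1}^m V_i$. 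By construction, $\bar{V}$ takes the form $\sum_{j=1}^d (k_j U/m) \be_j$ where the $k_j$ are integers with $\sum_j |k_j| \leq m$, so $\bar{V} \in \tilde{B}_{U,m}$ almost surely.

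For each fixed $t$, the bias-variance decomposition of $(y_t - \bar{V} \cdot \bx_t)^2$ around its expectation yields
\begin{align*}
\E\bigl[(y_t - \bar{V} \cdot \bx_t)^2\bigr] = (y_t - \bu^* \cdot \bx_t)^2 + \Var(\bar{V} \cdot \bx_t)
= (y_t - \bu^* \cdot \bx_t)^2 + \frac{1}{m}\Var(V \cdot \bx_t),
\end{align*}
using $\E[\bar V]=\bu^*$ and independence of the $V_i$. Since $V \in \{\boldsymbol{0}\} \cup \{\pm U \be_j\}$, we have $|V \cdot \bx_t| \leq U\Norm{\bx_t}_\infty \leq UX$, so $\Var(V \cdot \bx_t) \leq \E[(V \cdot \bx_t)^2] \leq U^2 X^2$. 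Summing over $t=1,\ldots,T$ gives
\begin{align*}
\E\!\left[\sum_{t=1}^T (y_t - \bar{V} \cdot \bx_t)^2\right] \leq \sum_{t=1}^T (y_t - \bu^* \cdot \bx_t)^2 + \frac{TU^2X^2}{m} = \inf_{\bu \in B_1(U)} \sum_{t=1}^T (y_t - \bu \cdot \bx_t)^2 + \frac{TU^2X^2}{m}.
\end{align*}

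Since $\bar{V}$ takes values in the finite set $\tilde{B}_{U,m}$ and its expected loss satisfies this bound, there must exist at least one realization $\tilde{\bu} \in \tilde{B}_{U,m}$ achieving an empirical loss no larger than the expectation, which proves the claim. There is no real obstacle here; the only minor subtleties are (i) checking that $p_0 \geq 0$, which is exactly the hypothesis $\Norm{\bu^*}_1 \leq U$, and (ii) checking that $\bar{V} \in \tilde{B}_{U,m}$, which follows by collecting equal and opposite $V_i$'s into signed integer counts $k_j$ with $\sum_j |k_j| \leq m$.
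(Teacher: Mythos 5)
Your proposal is correct and is essentially the paper's own argument: the random vector $V$ with $\mP(V=\sgn(u_j^*)U\be_j)=|u_j^*|/U$ and an atom at $\boldsymbol{0}$ is exactly the paper's $U\be_{J_1}$ under the distribution $\pi$, and the bias--variance decomposition together with the bound $\Var(V\cdot\bx_t)\leq U^2X^2$ and the probabilistic method matches the paper's proof step for step. No gaps.
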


\begin{proof}
  The proof is quite standard and follows the same lines as \cite[Proposition~5.2.2]{Nem-00-TopicsNonparametric} or \cite[Theorem~2]{BuNo08SeqProcedures} who addressed the aggregation task in the stochastic setting. We rewrite this argument below in our online deterministic setting. \\
  \ \\
  Fix $\bu^* \in \argmin_{\bu \in B_1(U)} \sum_{t=1}^T (y_t - \bu
  \cdot \bx_t)^2$. Define the probability distribution
  $\pi=(\pi_{-d},\ldots,\pi_d) \in \R_+^{2d+1}$ by
  \begin{numcases}{\pi_j \eqdef}
    \frac{(u^*_j)_+}{U} & if $j \geq 1$; \nonumber \\
    \frac{(u^*_j)_-}{U} & if $j \leq -1$; \nonumber \\
    1 - \sum_{j=1}^d \frac{|u^*_j|}{U} & if $j=0$~. \nonumber
  \end{numcases}
  Let $J_1, \ldots, J_m \in \{-d,\ldots,d\}$ be {i.i.d.} random
  integers drawn from $\pi$, and set
  \[
  \tilde{\bu} \eqdef \frac{U}{m} \sum_{k=1}^m \be_{J_k}~,
  \]
  where $({\bf e}_j)_{1 \leq j \leq d}$ is the canonical basis of $\R^d$, where $\be_0 \eqdef {\bf 0}$, and where $\be_{-j} \eqdef - \be_j$ for all $1 \leq j \leq d$. Note that $\tilde{\bu} \in \tilde{B}_{U,m}$ by construction. Therefore,
  \begin{align}
    \inf_{\bu \in \tilde{B}_{U,m}} \sum_{t=1}^T (y_t - \bu \cdot
    \bx_t)^2 & \leq \E\!\left[\sum_{t=1}^T (y_t - \widetilde{\bu}
      \cdot \bx_t)^2\right]~. \label{eqn:maurey-lower}
  \end{align}

\noindent
  The rest of the proof is dedicated to upper bounding the last
  expectation. Expanding all the squares $(y_t-\tilde{\bu} \cdot
  \bx_t)^2 = (y_t-\bu^* \cdot \bx_t + \bu^* \cdot \bx_t - \tilde{\bu}
  \cdot \bx_t)^2$, first note that
  \begin{align}
    \E\!\left[\sum_{t=1}^T (y_t-\tilde{\bu} \cdot \bx_t)^2\right] & = \sum_{t=1}^T (y_t-\bu^* \cdot \bx_t)^2 + \sum_{t=1}^T \E\!\left[(\bu^* \cdot \bx_t - \tilde{\bu} \cdot \bx_t)^2\right] \nonumber \\
    & + 2 \sum_{t=1}^T (y_t-\bu^* \cdot \bx_t) \, \E\bigl[\bu^* \cdot
    \bx_t - \tilde{\bu} \cdot \bx_t\bigr]~. \label{eqn:maurey-upper1}
  \end{align}
  But by definition of $\tilde{\bu}$ and $\pi$,
  \begin{align*}
    \E\bigl[\tilde{\bu}\bigr] & = U \, \E\bigl[\be_{J_1}\bigr] = U \sum_{j=-d}^d \pi_j \be_j \\
    & = U \sum_{j=1}^d \left(\frac{\bigl(u^*_j\bigr)_+}{U} \be_j +
      \frac{\bigl(u^*_j\bigr)_-}{U} (-\be_j)\right) = U \sum_{j=1}^d
    \frac{u^*_j}{U} \be_j = \bu^*~,
  \end{align*}
  so that $\E\bigl[\tilde{\bu} \cdot \bx_t\bigr] = \bu^* \cdot \bx_t$
  for all $1 \leq t \leq T$. Therefore, the last sum
  in~(\ref{eqn:maurey-upper1}) above equals zero, and
  \[
  \E\!\left[\bigl(\bu^* \cdot \bx_t - \tilde{\bu} \cdot
    \bx_t\bigr)^2\right] = \Var\bigl(\tilde{\bu} \cdot \bx_t\bigr) =
  \frac{U^2}{m^2} \sum_{k=1}^m \Var\bigl(\be_{J_k} \cdot \bx_t\bigr)
  \leq \frac{U^2 X^2}{m}~,
  \]
  where the second equality follows from $\tilde{\bu} \cdot \bx_t = (U/m) \sum_{k=1}^m  \be_{J_k} \cdot \bx_t$ and from the independence of the $J_k$, $1 \leq k \leq m$, and where the last inequality follows from $|\be_{J_k} \cdot \bx_t| \leq \Norm{\be_{J_k}}_1 \Norm{\bx_t}_{\infty} \leq X$ for all $1 \leq k \leq m$. \\

\noindent
  Combining~(\ref{eqn:maurey-upper1}) with the remarks above, we get
  \begin{align*}
    \E\!\left[\sum_{t=1}^T (y_t-\tilde{\bu} \cdot \bx_t)^2\right] & \leq \sum_{t=1}^T (y_t-\bu^* \cdot \bx_t)^2 + \frac{T U^2 X^2}{m} \\
    & = \inf_{\bu \in B_1(U)} \sum_{t=1}^T (y_t - \bu \cdot \bx_t)^2 +
    \frac{T U^2 X^2}{m}~,
  \end{align*}
  where the last line follows by definition of $\bu^*$. Substituting
  the last inequality in~(\ref{eqn:maurey-lower}) concludes the proof.
\end{proof}

\ \\
The combinatorial result below (or variants of it) is well-known; see,
\eg, \cite{Tsy-03-OptimalRates,BuNo08SeqProcedures}. We reproduce its
proof for the convenience of the reader. We use the notation $\mathrm{e} \eqdef \exp(1)$.

\begin{lemma}[An elementary combinatorial upper bound]
  \label{lem:combinatorial}
  \ \\
  Let $m,d \in \N^*$. Denoting by $|E|$ the cardinality of a set $E$,
  we have
  \[
  \left| \left\{ (k_1, \ldots, k_d) \in \Z^d: \sum_{j=1}^d |k_j| \leq
      m \right\} \right| \leq
  \left(\frac{\mathrm{e}(2d+m)}{m}\right)^m~.
  \]
\end{lemma}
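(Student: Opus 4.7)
The plan is to construct an injection from the set to be bounded into a slightly larger set whose cardinality is given cleanly by stars-and-bars, and then apply the standard binomial inequality $\binom{n}{k} \leq (\mathrm{e} n / k)^k$.

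First, I would define the map
\[
\Phi : \left\{(k_1,\ldots,k_d) \in \Z^d : \textstyle\sum_{j=1}^d |k_j| \leq m \right\} \longrightarrow \left\{(a_1,\ldots,a_{2d+1}) \in \N^{2d+1} : \textstyle\sum_{i=1}^{2d+1} a_i = m \right\}
\]
by $\Phi(k_1,\ldots,k_d) \eqdef (a_1,\ldots,a_{2d+1})$ where, for all $j=1,\ldots,d$,
\[
a_{2j-1} \eqdef \max(k_j,0), \qquad a_{2j} \eqdef \max(-k_j,0), \qquad a_{2d+1} \eqdef m - \sum_{j=1}^d |k_j|~.
\]
The map is well defined since $a_{2d+1} \geq 0$ by the constraint $\sum_j |k_j| \leq m$, and one checks $\sum_i a_i = \sum_j |k_j| + a_{2d+1} = m$. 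Moreover $\Phi$ is injective because one can recover $k_j = a_{2j-1} - a_{2j}$ (note that at most one of $a_{2j-1}, a_{2j}$ is nonzero, but injectivity does not require surjectivity).

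Second, I would invoke the classical stars-and-bars formula: the number of nonnegative integer $(2d+1)$-tuples summing to $m$ is $\binom{m+2d}{m}$. Combined with the injection above this gives
\[
\left| \left\{ (k_1,\ldots,k_d) \in \Z^d : \textstyle\sum_{j=1}^d |k_j| \leq m \right\} \right| \leq \binom{m+2d}{m}~.
\]

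Third, I would apply the standard bound $\binom{n}{k} \leq (\mathrm{e} n/k)^k$ (valid for $k \geq 1$), obtained from $\binom{n}{k} \leq n^k/k!$ together with $k! \geq (k/\mathrm{e})^k$ (which follows from $\mathrm{e}^k = \sum_{i \geq 0} k^i/i! \geq k^k/k!$). Applied with $n = m+2d$ and $k = m \geq 1$ this yields $\binom{m+2d}{m} \leq \bigl(\mathrm{e}(m+2d)/m\bigr)^m$, which is the claimed inequality. There is no real obstacle here; the only mildly delicate point is verifying that $\Phi$ is well defined and injective.
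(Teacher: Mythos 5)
Your proof is correct and follows essentially the same route as the paper: split each $k_j$ into its positive and negative parts, add a slack coordinate to reduce to counting nonnegative $(2d+1)$-tuples summing to $m$ (giving $\binom{2d+m}{m}$), and then apply $\binom{n}{k} \leq (\mathrm{e}n/k)^k$. The only difference is cosmetic --- the paper counts the tuples via lattice paths and cites a reference for the binomial bound, whereas you use stars-and-bars and prove the bound from $k! \geq (k/\mathrm{e})^k$, which makes the argument self-contained.
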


\begin{proofref}{Lemma~\ref{lem:combinatorial}}
  Setting $(k'_{-j},k'_j) \eqdef \bigl((k_j)_-,(k_j)_+\bigr)$ for all
  $1 \leq j \leq d$, and $k'_0 \eqdef m-\sum_{j=1}^d |k_j|$, we have
  \begin{align}
    \left| \left\{ (k_1, \ldots, k_d) \in \Z^d: \sum_{j=1}^d |k_j| \leq m \right\} \right| & \leq \left| \left\{ (k'_{-d}, \ldots, k'_d) \in \N^{2d+1}: \sum_{j=-d}^d k'_j = m \right\} \right| \nonumber \\
    & = \binom{2d+m}{m} \label{eqn:lattice-paths} \\
    & \leq
    \left(\frac{\textrm{e}(2d+m)}{m}\right)^m~. \label{eqn:sauer-type-ineq-proof}
  \end{align}
  To get inequality~(\ref{eqn:lattice-paths}), we used the
  (elementary) fact that the number of $2d+1$ integer-valued tuples
  summing up to $m$ is equal to the number of lattice paths from
  $(1,0)$ to $(2d+1,m)$ in $\N^2$, which is equal to
  $\binom{2d+1+m-1}{m}$. As for
  inequality~(\ref{eqn:sauer-type-ineq-proof}), it follows
  straightforwardly from a classical combinatorial result stated, \eg,
  in \cite[Proposition~2.5]{Massart03StFlour}.
\end{proofref}


\bibliographystyle{elsarticle-num}
\bibliography{GerchinovitzYu-journal,GerchinovitzYu-reference}

\end{document}